\documentclass[11pt]{article}

\title{Lazy OCO: Online Convex Optimization on a Switching Budget} 

\author{
Uri Sherman%
\thanks{Tel Aviv University; \texttt{urisherman@mail.tau.ac.il}.}
\and 
Tomer Koren%
\thanks{Tel Aviv University and Google Research, Tel Aviv; \texttt{tkoren@tauex.tau.ac.il}.}
}
\usepackage{header}

\usepackage[round]{natbib}
\usepackage{amsfonts}       
\usepackage{nicefrac}       
\usepackage{microtype}
\usepackage{amsmath,amssymb,amsthm}

\usepackage{graphicx}
\usepackage[colorlinks,allcolors=blue]{hyperref}

\usepackage{algorithmic,algorithm}
\usepackage{float}
\usepackage{mathtools,thmtools}
\usepackage[title]{appendix}
\usepackage{makecell}
\usepackage{ dsfont }
\usepackage[capitalize]{cleveref}
\usepackage{xspace}

\usepackage{multirow}
\usepackage[flushleft]{threeparttable}

\usepackage{enumitem}

\theoremstyle{plain}
\newtheorem{theorem}{Theorem}
\newtheorem{lemma}{Lemma}

\newtheorem{assumption}{Assumption}
\newtheorem{definition}{Definition}

\declaretheoremstyle[ 
        spaceabove=\topsep, 
        spacebelow=\topsep, 
        headfont=\normalfont\itshape,
        bodyfont=\normalfont,
        notefont=\normalfont\itshape,
        notebraces={}{},
        postheadspace=0.33em, 
        qed=$\square$, 
        headpunct={.},
    ]{proofstyle}
\declaretheorem[style=proofstyle,numbered=no,name=Proof]{proof}


\DeclarePairedDelimiterX{\infdiv}[2]{(}{)}{%
  #1\;\delimsize\|\;#2%
}
\DeclarePairedDelimiterX{\infdivb}[2]{\Big(}{\Big)}{%
  #1\;\delimsize\|\;#2%
}
\newcommand{\KL}{D_{KL}\infdiv}

\newcommand{\R}{\mathbb{R}}
\newcommand{\norm}[1]{\left\|#1\right\|}

\newcommand{\Exp}[1]{\E \left[ #1 \right]}
\newcommand{\Expp}[2]{\underset{#1}{\E} \left[ #2 \right]}

\newcommand{\T}{^\mathsf{T}}

\renewcommand{\Re}{\mathcal R}
\newcommand{\Sw}{\mathcal S}
\newcommand{\ESw}{\E \mathcal S_T}

\newcommand{\Alg}{\mathcal A}

\newcommand{\Ind}[1]{\mathds{1}\left\{ #1 \right\}}
\newcommand{\integ}[3]{\int_{#1} #2 \,\mathrm{d}#3}

\DeclareMathOperator*{\argmin}{arg\,min}

\DeclareMathOperator{\E}{\mathbb{E}}

\newcommand{\Dom}{W}
\newcommand{\dist}[1]{\mathcal #1}
\newcommand{\rv}[1]{\textbf #1}
\newcommand{\Proj}[1]{\Pi\left( #1 \right)}
\newcommand{\Projw}[1]{\Pi_\Dom\left( #1 \right)}

\newcommand{\Ber}{\mathrm{Ber}}

\newcommand{\TV}[1]{\norm{#1}_{TV}}
\newcommand{\cTV}{C_1}
\newcommand{\cTVv}{C_0}

\newcommand{\eqq}{\coloneqq}

\usepackage{xcolor}
\usepackage{soul}

\def \Ssoco {$S$-lazy\xspace}
\def \soco {lazy OCO\xspace}
\def \scost {switching-cost\xspace}
\def \sbudg {switching-budget\xspace}

\crefname{lem}{Lemma}{Lemma}

\hypersetup{
           breaklinks=true,   
}

\begin{document}

\maketitle

\begin{abstract}%
We study a variant of online convex optimization where the player is permitted to switch decisions at most $S$ times in expectation throughout $T$ rounds. Similar problems have been addressed in prior work for the discrete decision set setting, and more recently in the continuous setting but only with an adaptive adversary. 
In this work, we aim to fill the gap and present computationally efficient algorithms in the more prevalent oblivious setting, establishing a regret bound of $O(T/S)$ for general convex losses and $\widetilde O(T/S^2)$ for strongly convex losses. In addition, for stochastic i.i.d.~losses, we present a simple algorithm that performs $\log T$ switches with only a multiplicative $\log T$ factor overhead in its regret in both the general and strongly convex settings.
In addition, for stochastic i.i.d.~losses, we present a simple algorithm that performs $\log T$ switches with only a constant factor overhead in its regret in the general convex setting, and a $\log T$ factor overhead in the strongly convex setting.
Finally, we complement our algorithms with lower bounds that match our upper bounds in some of the cases we consider.
\end{abstract}


\section{Introduction}

We study online convex optimization with limited switching. In the classical online convex optimization (OCO) problem, a player and an adversary engage in a $T$-round game, where in each round, 
the player chooses a decision $w_t \in \Dom \subseteq \R^d$, and the adversary responds with a loss function $f_t\colon \Dom \to \R$.
The losses $f_t$ are convex functions over $\Dom$ which is also convex and traditionally referred to as the decision set.
Each round incurs a loss of $f_t(w_t)$ against the player, whose objective is to minimize her cumulative loss. The performance of the player is then measured by her regret, defined as the difference between her cumulative loss and that of the best fixed decision in hindsight;
\[
   \sum_{t=1}^T f_t(w_t) - \min_{w\in \Dom} \sum_{t=1}^T f_t(w).
\]
This theoretical framework has found diverse applications in recent years, many of which benefit from player strategies that switch decisions sparingly. In adaptive network routing \citep{awerbuch2008online} switching decisions amounts to changing packet routes, which should be kept to a minimum as it may lead to severe networking problems (see, e.g., \citealp{feamster2014road}). 
When investing in the stock market, transactions may be associated with fixed commission costs, and thus trading strategies that change stock positions infrequently are of value.
As another example, \cite{geulen2010regret} approach online buffering by devising a low switching variant of the well known Multiplicative Weights algorithm.
In addition, recent applications of OCO in online reinforcement learning and control problems involve addressing the fact that changing policies introduces short-term penalties, and thus could benefit from keeping the number of policy switches to a minimum~%
(e.g., \citealp{cohen2018online,cohen2019learning,agarwal2019online,agarwal2019logarithmic,foster2020logarithmic}). 

This motivates the study of regret bounds achievable when we limit the number of decision switches the player is allowed to perform. When the limit is applied to the expected number of decision switches, we arrive at a variant of the standard model we shall refer to as \emph{lazy OCO}. In particular, we say that an OCO algorithm is \Ssoco if the expected number of switches it performs over $T$ rounds is less than $S$.
A closely related problem where the player is charged a fixed price $c>0$ per switch has been the focus of several works in the past, though mainly in the 
context of experts or multi-armed bandit problems~\citep{dekel2014bandits, geulen2010regret, altschuler2018online}.
It is not hard to see this problem, which we refer to as \scost OCO, is effectively equivalent to \soco. (We defer formal details to \cref{apdx:sccb}.)

Perhaps the most natural approach for this problem would be to divide the $T$ rounds into $S$ equally sized time-blocks, and treat the cumulative loss of each block as a single loss function. This effectively reduces the game to the standard unconstrained OCO setting with $S$ rounds, and a Lipschitz constant larger by a factor of $T/S$. This method has been termed ``blocking argument'' and dates back at least to \cite{merhav2002sequential} who used it to obtain an $O(T/\sqrt S)$ regret bound on the prediction with expert advice problem. 
It is not hard to see that this strategy also yields $O(T/\sqrt S)$ regret in the general convex setting. Recently, \cite{chen2019minimax} prove that this is in fact optimal against an \emph{adaptive} adversary with linear losses. 
However, in the oblivious adversary setting, stronger results may be achieved owed to the power of randomization. For example, several works \citep{kalai2005efficient,geulen2010regret,devroye2013prediction} have obtained a stronger $O(T/S)$ bound for the experts problem by employing randomized player strategies.
In the general convex setting results have been more scarce, though the same bound has been achieved by \cite{anava2015online}, who adapt the method of \cite{geulen2010regret} to the continuous online optimization setup.

In this work, we aim to further develop our understanding of \soco by addressing a number of questions that have remained open. 
First, the algorithm in \cite{anava2015online} obtains a regret bound of $\widetilde O(\sqrt {d T} + d T/S)$ where $d$ denotes the dimension of the decision set, and thus exhibits dimension dependence even in the case that $T$ switches are permitted.
Second, to the best of our knowledge, no results have been established in the strongly convex \soco setting, neither for adaptive adversaries nor for oblivious ones. 
Finally, in terms of dependence on the number of switches $S$, it is not clear a priori whether the additive $O(T/S)$ term in the regret is essential, and in particular, whether an $O(\sqrt T)$ regret bound may be obtained with $S=o(\sqrt T)$ switches.
This work aims to fill these gaps and obtain a more coherent understanding of the \soco problem.

\subsection{Our contributions}

We make the following contributions:
\begin{itemize}
    \item \textit{Regret upper bounds.} We present a computationally efficient \Ssoco algorithm, achieving an $O(\sqrt T + d T/S)$ regret bound for general convex losses, and an $\widetilde O(d^2 T/S^2)$ bound for strongly convex losses.
    Compared to the algorithm of \cite{anava2015online}, we remove the $\sqrt d$ factor from the $\sqrt T$ term, and our algorithm further extends to the strongly convex case where it obtains improved regret bounds, which does not seem to be the case for the algorithm of \cite{anava2015online}.%
    \footnote{In fact, a closer look into the regret analysis of \cite{anava2015online} reveals that it does not at all exploit the convexity of the losses: by taking a discretization of the decision set and using, e.g., the Shrinking-Dartboard algorithm of \cite{geulen2010regret}, one would obtain essentially the same regret guarantee (albeit not in polynomial time).} 

    \item \textit{Regret lower bounds.} For the general convex case, we prove an $\Omega(T/S)$ lower bound for the regret of any \Ssoco algorithm, matching our upper bound in this setting in terms of dependence on $T$ and $S$. In the strongly convex adaptive setting, we prove an $\Omega(T/S)$ lower bound, which matches up to a logarithmic factor the $O((T/S)\log S)$ upper bound obtained by a straightforward application of the blocking technique (see \cref{sec:lb:sc_adaptive} for details). Finally, for the strongly convex oblivious setting, we prove an $\Omega(T/S^2)$ lower bound for a certain family of algorithms, as discussed in \cref{sec:lb_statements}.
    
    \item \textit{Regret bounds for stochastic i.i.d.~losses.} For the special case of a stochastic i.i.d.~adversary, we present an algorithm that performs $O(\log T)$ switches while introducing only a constant multiplicative $O(\log T)$ factor overhead in the regret bound compared to unrestricted OCO in the general convex case, and only an extra logarithmic factor in the strongly convex case.
\end{itemize}

\cref{table:sotas} lists our contributions compared to the relevant state-of-the-art bounds. Our upper bounds are discussed in \cref{sec:ub,sec:stochastic}, and our lower bounds in \cref{sec:lb_statements}.

\paragraph{Addendum.}

Following the initial publication of this work in COLT'21~\citep{shermancolt2021lazy}, an error in the arguments given there was brought to our attention by the authors of \citet{agarwal2023differentially} through private correspondence.
Specifically, the random variable $p_t - \nabla f_t(w_t)$ in line~6 of Algorithm 2 in \citep{shermancolt2021lazy} is not normally distributed (due to dependence between $p_t$ and $w_t$), while Lemma 3 (through the use of Lemmas 1 and 2) requires it to be so.
The current version of the manuscript presents a corrected version (see \cref{alg:lftprl} and \cref{lem:tv_decisions}), by essentially replicating the approach proposed by \citet{agarwal2023differentially}.
The key aspect in this approach is to setup the algorithm in such a way that would allow working with the density functions of the decision variables $w_t$ directly. This is elegantly accomplished by the use of barrier functions and the change of variables formula, which ultimately allow coupling over consecutive decisions in spite of the dependence between $p_t$ and $w_t$.

For completeness, the bounds for Lazy OCO obtained in the recent work of \citet{agarwal2023differentially} were added to \cref{table:sotas} below; since ultimately our corrected algorithm is nearly identical to theirs,\footnote{Essentially, the only difference between the two algorithms is in the implementation of the coupling mechanism used to correlate sampling of consecutive decisions.} our regret bounds match theirs up to small logarithmic factors.
We also note that \citet{agarwal2023differentially} additionally give improved bounds for the case where the loss functions are in the form of generalized linear models (GLMs), which we do not reproduce here.

\subsection{Key ideas and techniques}

Our starting point for designing lazy algorithms in the (adversarial, oblivious) OCO setup is the general idea present in Follow-the-Lazy-Leader (FLL) algorithm of \cite{kalai2005efficient}, where perturbations are introduced for obfuscating small changes in the player's (unperturbed) decisions.
Then, the perturbations may be correlated in such a way that preserves the marginal distribution of decisions, and at the same time have sufficient overlap in total variation, which allows for the player to avoid switching altogether across several consecutive rounds.
However, unlike \cite{kalai2005efficient} who study the linear case, we are interested in the general convex and strongly convex settings, which pose a number of additional challenges.

First, the subset of perturbed objectives cannot be fixed in advance and must be determined per round in a dynamical fashion during execution of the algorithm. 
This is because the particular perturbation that leads to the same decision being used across rounds depends on the loss sequence in a complex way that mandates ad-hoc coupling between consecutive decisions. 
%
Even further, the unperturbed decisions need to be stabilized so that consecutive decision distributions overlap sufficiently in total variation. To that end, a regularization component is added to obtain the desired relation between regret and the number of decision switches.
Thus, unlike Follow-the-Perturbed-Leader-type algorithms that introduce perturbations for promoting stability, we draw our stability properties from a regularization component while using perturbations only for inducing proximity in total variation, which in turn allows the algorithm to resample decisions less frequently.

Finally, our regret bound for strongly convex losses makes use of two additional ideas that were key in achieving the improved dependence on the switches parameter $S$. 
First, perhaps surprisingly, the perturbation scale has to be \emph{increased} at a rate that is in accordance with the increasing curvature of the per round minimization objective (despite the fact that the unperturbed decision actually becomes more and more stable with time).
The second and more crucial observation is that the regret penalty introduced by perturbations on top of the hypothetical ``be-the-leader'' strategy can be bounded much more efficiently for strongly convex losses:
our analysis reveals that this penalty depends on the distance between the deterministic, unperturbed minimizer and the perturbed random one;
crucially, with strong convexity, this distance shrinks rapidly with the number of steps at a rate that compensates for the increased perturbation scale.

\begin{table}[t]
\bgroup \def\arraystretch{2.1}
\begin{center}
    \begin{threeparttable}    
    \caption{$S$-\soco bounds, omitting log terms (except for the i.i.d.~case) and factors other than $S$ and $T$. Our contributions are in boldface.
    }
        \small
       \begin{tabular}{ c | c | c | c}
                 \textsc{Setting}
               & \textsc{Adversary}
               & \textsc{Lower Bound}
               & \textsc{Upper Bound}

           \\ \hline\hline
           \makecell{
            Experts 
           }
               & Oblivious
               & \makecell{$ 
                    T / S $
                   \\ 
                   \cite{geulen2010regret}
               }
               & \makecell{$ T / S $ \tnote{a}
                   \\
                   \cite{kalai2005efficient} \tnote{b}
               }
                 
           \\ \hline\hline
           \multirow{6}{*}{ 
                OCO
            }
               & Adaptive
               & \makecell{$T / \sqrt S $
                   \\
                   \cite{chen2019minimax}
               }
               & \makecell{$ T / \sqrt S $
                   \\
                   \cite{chen2019minimax}
               }
           
           \\ \cline{2-4}
                   & Oblivious
                   & \makecell{
                       $ \boldsymbol{ T / S } $
                   }
                   & \makecell{\\
                        $ \boldsymbol{ T / S } $ \tnote{a}
                       \\
                   \textbf{this work},
                   \\
                   \citet{agarwal2023differentially}
                   \\
                   \citet{anava2015online}
                   \\ 
            \\
                   }
            \\ \cline{2-4}
                   & i.i.d.
                   & \makecell{
                       $ \sqrt T $
                       \\
                   }
                   & \makecell{
                        $ \boldsymbol{ \sqrt T  } $  \tnote{c}
                       \\
                   }
    
           \\ \hline\hline
           \multirow{5}{*}{\makecell{
                OCO 
                \\
                Strongly Convex
           }}
           
               & Adaptive
               & \makecell{
                   $ \boldsymbol{ T / S } $
                   \\
               }
               & \makecell{$(T/S)$ 
                \\
               }
    
           \\ \cline{2-4}
                   & Oblivious
                   & \makecell{
                         $ \boldsymbol{T / S^2 }  $ \tnote{d}
                   }
                   & \makecell{ \\
                   $ \boldsymbol{T/S^2 } $ 
                   \\
                   \textbf{this work}
                   \\
                  \citet{agarwal2023differentially}\tnote{$\dagger$}
            \\
            \\
                   }
                   
            \\ \cline{2-4}
                   & i.i.d.
                   & \makecell{
                         $ \log T $
                   }
                   & \makecell{ $ \boldsymbol{\log^2 T }  $ \tnote{c}
                    \\
                   }
            \\ \hline
       \end{tabular}
       \label{table:sotas}
       \vspace{1ex}
       \begin{tablenotes}\footnotesize
            \item[a] For $S = O \big(\sqrt T \big)$.
            \item[b] Also \cite{geulen2010regret, devroye2013prediction, altschuler2018online}.
            \item[c] For $S\geq 1 + \log T$.
            \item[d] For a certain restricted class of algorithms. See \cref{sec:lb_statements} for details.
        \end{tablenotes}
    \end{threeparttable}
\end{center}
\egroup
\end{table}

\subsection{Additional related work}

Prior work on low switching strategies in online learning has been mostly concerned with the \scost perspective.
All bounds we present here pertain to algorithms with an expected number of switches bounded by $S$, so that they are easily comparable. For completeness, their equivalent original \scost forms can be found in \cref{apdx:sccb}.

\paragraph{Experts.} 

The experts problem with switching costs has been extensively studied, giving rise to several algorithms
such as FLL \citep{kalai2005efficient}, Shrinking-Dartboard \citep{geulen2010regret} and Perturbation-Random-Walk \citep{devroye2013prediction}, all of which achieve $O(T / S)$ regret known to be optimal due to a matching lower bound of \cite{geulen2010regret}. 
Recently, \cite{altschuler2018online} study experts and multi-armed bandits in the setting where the player is given a \emph{hard cap} on the number of switches she is allowed (see \cref{apdx:sccb} for a discussion of this variant of the model); they develop a framework converting Follow-the-Perturbed-Leader (FPL) type algorithms that work in expectation to ones with high probability guarantees, and leverage this result to achieve an upper bound of $\widetilde O(T / S)$ for $S = O(\sqrt T)$, 
shown to be tight.

\paragraph{Multi-armed bandits.} 

Unlike experts, the multi-armed bandit problem has proved to exhibit a more significant dependence on the number of switches, setting apart \scost regret from the standard unconstrained setting. 
An $O( T / \sqrt{ S } )$ upper bound was obtained by a blocking argument~\citep{arora2012online} applied to the EXP3 algorithm~\citep{auer2002nonstochastic}. 
A matching $\widetilde \Omega(T / \sqrt S )$ lower bound was proved by~\cite{dekel2014bandits}. In the case of stochastic i.i.d.~losses, \cite{cesa2013online} present an $\widetilde O(\sqrt T)$ algorithm for multi-armed bandits that performs $O(\log \log T)$ switches.

\paragraph{Online convex optimization.}

To the best of our knowledge, \cite{anava2015online} establish the first and only $O(T / S)$ upper bound in the general convex setting with an oblivious adversary, albeit with a computationally intensive algorithm whose running time is bounded by a high-degree polynomial in the dimension. 
More recently, \cite{chen2019minimax} study \soco in the general convex setting with an adaptive adversary and prove a tight $\Theta(T / \sqrt S)$ result.
Our work is thus complementary to theirs as we study \soco in the oblivious setting, where stronger upper bounds turn out to be possible.
Also relevant to our work is the paper of \cite{jaghargh2019consistent}, who propose a Poisson process based algorithm for both general and strongly convex losses, although their results are suboptimal compared to those presented here.

\paragraph{Movement costs.} 

Also related to lazy OCO is the study of movement costs in online learning, where the player pays a switching cost proportional to the distance between consecutive decisions. This variant was studied in the context of multi-armed bandits~\citep{koren2017bandits,koren2017multi}, and is at the core of the well known metrical-task-systems (MTS) framework in competitive analysis~\citep{borodin1992optimal, borodin2005online}. 
In particular, the continuous variant of MTS has been the subject of several works both in the low dimensional setting \citep{bansal20152, antoniadis2017tight}, and in the high dimensional setting where it has been recently termed \emph{smoothed OCO} \citep{chen2018smoothed, goel2019beyond, shi2020online}. 
MTS differs from lazy OCO in a number of important ways; we refer to \cite{blum2000line, buchbinder2012unified, andrew2013tale} for an extensive discussion of the relations between competitive analysis and regret minimization.

\paragraph{Correlated sampling.} 

The algorithms we present are based on a lazy sampling procedure for sampling from maximal couplings (see \cref{sec:max_couplings}). This procedure bears similarity to the well-known correlated sampling problem \citep{broder1997resemblance, kleinberg2002approximation}, where two players are given two probability distributions and are required to produce samples with minimal disagreement probability. 
As the players are not allowed to communicate, this problem is crucially different than sampling from maximal couplings; see \cite{bavarian2020optimality} for a more elaborate discussion.

\paragraph{Differentially-private online learning.}

It has recently been observed that online learning with switching constraints is strongly related to differentially-private online learning~\citep{asi2023private,agarwal2023differentially}.
Differential Privacy is concerned with learning mechanisms that produce outputs 
that reveal little about any individual data point used to train them (see \citealp{dwork2014algorithmic}). In the context of online learning, and more specifically OCO, this boils down to designing algorithms with the property that if a single loss function were changed, the sequence of decisions produced by the algorithm would not change by much. 

There has been a long line of work studying differentially private prediction from expert advice and online learning more generally (e.g., \citealp{jain2012differentially,guha2013nearly,jain2014near,agarwal2017price,kairouz2021practical,asi2023near,kaplan2023differentially}).
Recently, \citet{asi2023private} and subsequently \citet{agarwal2023differentially} used low-switching online learning as a means to design differentially private algorithms.
Informally, the key idea behind this approach is that information about the loss sequence is leaked only when the online learning algorithm changes its decision; hence, an online algorithm can be transformed into a privacy-preserving one by limiting the number of switches it performs.

\section {Preliminaries}
\label{sec:prelim}

We start by giving a precise definition of our model and describe techniques and basic tools we use.

\subsection{Problem setup}
We describe the setting of \soco, within which we develop all results presented in the paper. In this setting, an oblivious adversary chooses convex Lipschitz loss functions $f_t\colon \Dom \to \R$ over a convex domain $\Dom \subseteq \R^d$. Throughout the paper (excluding \cref{sec:stochastic}), we assume losses are also twice differentiable and smooth, i.e., that $\nabla^2 f(w) \preceq \beta I$ for some parameter $\beta > 0$ and all $w \in W$.
The game proceeds for $T$ rounds, where in round $t$ the player chooses $w_t \in W$, suffers loss $f_t(w_t)$, and observes $f_t$ as feedback. We denote by $\Re_T$ the player's regret;
\[
    \Re_T := \sum_{t=1}^T f_t(w_t) - \min_{w\in W} \sum_{t=1}^T f_t(w),
\]
and by $\Sw_T$ the number of decision switches she performs;
$
    \Sw_T := \sum_{t=1}^{T-1} \Ind{w_{t+1} \neq w_t }.
$
When it is not clear from context, we may write $\Re_T(\Alg) $ and $ \Sw_T(\Alg)$ to make explicit which player we are referring to.
We are interested in the asymptotic behavior of the player's regret, under the restriction she is obligated to perform a limited number $S \in [T]$ of switches in expectation;
$ \E \Sw_T \leq S$.
We say $\Alg$ is an \Ssoco algorithm if it satisfies that for any loss sequence $ \E \Sw_T \leq S$.

\subsection{Basic definitions and tools}

We operate over $\R^d$,
denote the $p$-norm by  $\norm{\cdot}_p$, and omit the subscript for the Euclidean norm; meaning $\norm{\cdot} \eqq \norm{\cdot}_2$.
The \emph{diameter} of a set $\Dom \subseteq \R^d$ is defined as $\max_{x,y\in\Dom} \norm{x-y}$.
We denote by $\Projw{x} := \argmin_{w\in \Dom} \norm{w - x}^2$ the orthogonal projection of a point $x\in \R^d$ onto $\Dom$, but usually omit the subscript $\Dom$ and write $\Proj{x}$ unless the context requires to be explicit.
For two probability measures $\cQ ,\cQ'$ over a sample space $\mathcal X$, 
we write $\norm{\cQ - \cQ'}_{TV}$ to denote their \emph{total variation distance};
\[
    \TV{\cQ - \cQ'} := \max_{B \subseteq \mathcal X} \big \{ \cQ(B) - \cQ'(B) \big \}.
\]
Also, a two dimensional random variable $(\rv X, \rv Y)$ is a \emph{coupling} of $p$ and $q$ if its marginals satisfy $\rv X \sim p$ and $ \rv Y \sim q$.
Given a scale parameter $\sigma > 0$, we denote by $\Lap(\sigma)$ a multivariate Laplace distribution, with density $\nu$ given by;
\begin{align}\label{eq:def_laplace}
	\nu(z) = \frac{1}{(2\sigma)^d} \exp\br{- \norm{z}_1 / \sigma}.
\end{align}

\paragraph{Truncated domain and barrier functions.}
Given a convex and compact domain $W \subset \R^d$ and parameter $\gamma > 0$, we let 
\begin{align}
	W_\gamma \eqq \cb{w_0 + (1-\gamma)(w-w_0) : w \in W}
\end{align}
denote the $\gamma$ truncated domain, where $w_0$ may be chosen arbitrarily from $W$.
Note that for any $w \in W$, $w - (w_0 + (1-\gamma)(w-w_0)) = \gamma (w - w_0)$. Thus if $D$ is the diameter of $W$, for any $w \in W$ there exists $w_\gamma \in W_\gamma$ such that $\norm{w - w_\gamma} \leq \gamma D$.

We say $B\colon \R^d \to \R \cup \infty$ is a \emph{barrier function} (also known as Legendre; see \citealp{cesa2006prediction}) on $W$ if $B$ is non-negative, continuous, convex, and $B(w) \in \R$ for $w \in {\rm int}(W)$, and $B(w) = \infty$ otherwise.
We assume we have access to a family of barrier functions $B_{\gamma, \alpha}$ that satisfy $B_{\gamma, \alpha}(w) \leq \alpha$ for all $w \in W_\gamma$. This can be satisfied for any $\alpha, \gamma$ by scaling a given barrier function with an appropriate factor.

\subsection{Sampling from maximal couplings}
\label{sec:max_couplings}

\cref{alg:lazysample} presented below provides a mechanism to maximally couple consecutive decision distributions.  A similar procedure for sampling from maximal couplings can be found in the literature in various places, see e.g., \cite{jacob2020unbiased}. 
The desired properties of the algorithm follow from the two lemmas stated next. For completeness, we provide their proofs in \cref{rej_appendix}.
Throughout the paper, within an algorithmic context, we use the calligraphic font ($\dist P, \dist Q$, etc.) to denote computational objects that provide $O(1)$ oracle access to evaluate the density and to sample from a probability distribution.

\begin{algorithm}[ht]
    \caption{LazySample}
        \label{alg:lazysample}
    \begin{algorithmic}[1]
        \STATE \textbf{input:} $x$, $\dist Q$, $\dist P$
        \STATE Sample $z \sim \mathrm{Unif}[0, \dist Q (x)]$ 
        \STATE If $\dist P(x) > z$, \textbf{return} $x$ 
        \STATE Otherwise, repeat;
                \STATE \quad Sample $y \sim \dist P$, and $z' \sim \mathrm{Unif}[0, \dist P (y)]$
                \STATE \quad If $z' > \dist Q(y)$, \textbf{return} $y$
    \end{algorithmic}
\end{algorithm}

\begin{lemma}\label[lem]{lem:lazysample-switch}
    Running \text{LazySample}($x, \dist Q, \dist P$) with $x \sim \dist Q$, we have that $\dist P$ is sampled from with probability $\norm{\dist Q - \dist P}_{TV}$, where randomness is over choice of $x$ and execution of the algorithm.
\end{lemma}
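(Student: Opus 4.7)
The plan is to reduce the statement to a direct calculation of the probability that the first-stage test $\dist P(x) > z$ fails, since this is precisely the event on which the algorithm enters the rejection loop and performs a draw from $\dist P$. The lemma is claiming this event has probability $\norm{\dist Q - \dist P}_{TV}$, so everything hinges on the first three lines of the pseudocode; the subsequent rejection loop need not be analyzed for the present lemma (that is presumably where the second lemma promised in the text comes in).

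First I would condition on $x \sim \dist Q$ and observe that, since $z \sim \Unif[0, \dist Q(x)]$ and $\dist Q(x) > 0$ almost surely, the conditional probability of the acceptance event $\{\dist P(x) > z\}$ equals $\min\{\dist P(x), \dist Q(x)\}/\dist Q(x)$. Taking expectation over $x$ and cancelling the $\dist Q(x)$ factor yields
\begin{equation*}
    \Pr[\text{first-stage return}]
    \;=\; \int \dist Q(x) \cdot \frac{\min\{\dist P(x), \dist Q(x)\}}{\dist Q(x)} \, \mathrm{d}x
    \;=\; \int \min\{\dist P(x), \dist Q(x)\} \, \mathrm{d}x .
\end{equation*}
Next I would invoke the standard $\min$-form of total variation, $\int \min\{\dist P, \dist Q\} \, \mathrm{d}x = 1 - \norm{\dist P - \dist Q}_{TV}$, which follows from \eqref{eq:tvl1} combined with the pointwise identity $|\dist P(x) - \dist Q(x)| = \dist P(x) + \dist Q(x) - 2\min\{\dist P(x), \dist Q(x)\}$. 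Subtracting from one, the probability that the first-stage test fails and the algorithm proceeds to sample from $\dist P$ is exactly $\norm{\dist Q - \dist P}_{TV}$, as claimed.

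I do not anticipate a serious obstacle: the argument is essentially a two-line calculation once the $\min$-representation of total variation is at hand. The only minor care point is excluding the measure-zero set $\{\dist Q(x) = 0\}$, which is immediate because $x$ is itself drawn from $\dist Q$, so the uniform sample in step~2 is well-defined with probability one.
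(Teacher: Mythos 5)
Your proposal is correct and follows essentially the same approach as the paper: both reduce the claim to computing $\Pr(\dist P(x) < z)$ with $z \sim \Unif[0,\dist Q(x)]$ and $x \sim \dist Q$, and evaluate the resulting integral; the paper restricts the integral directly to the set $\{\dist Q > \dist P\}$, while you compute the complementary acceptance probability via the identity $\int \min\{\dist P,\dist Q\} = 1 - \norm{\dist P - \dist Q}_{TV}$, which is only a cosmetic difference.
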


\begin{lemma}\label[lem]{lem:lazysample-dist}
    Assume we run \text{LazySample}($x, \dist Q, \dist P$) with $x \sim \dist Q$, and that $\dist P, \dist Q$ are density functions that can be evaluated at any point and sampled from in polynomial time.
    Then the algorithm generates a return value distributed according to $\mathcal P$ in expected polynomial time.
\end{lemma}

\section{Lazy OCO}
\label{sec:ub}
In this section, we present and analyze our lazy OCO algorithm for convex losses in the oblivious adversarial setup. The algorithm has a regularization component and generates decisions that are minimizers of a perturbed cumulative loss on each round. As such, it can be viewed as a natural combination of the well known Follow-the-Perturbed-Leader (FPL) algorithm \citep{kalai2005efficient} and Follow-the-Regularized-Leader (FTRL) meta-algorithm, with regularization being intrinsic in the strongly convex case. 
The resulting algorithm, given in \cref{alg:lftprl}, is thus named Follow-the-Perturbed-Regularized-Lazy-Leader (FTPRLL).

The key idea is that stability introduced by regularization causes minimizers of the \emph{unperturbed} objectives to move in small steps, thereby encouraging consecutive decisions---minimizers of the \emph{perturbed} objectives---to overlap in total variation. This, combined with the lazy sampling sub-routine \cref{alg:lazysample}, produces a low switching algorithm.
Importantly, we note that while in FPL the perturbations are the source of stability, here regularization accounts for stability, and the perturbations serve to obfuscate the shifts between consecutive decisions.
Next, we present notation associated with our algorithm and provide its pseudocode subsequently.

\paragraph{Notation.}
Given a regularizer $R\colon W \to \R$, a barrier function $B\colon \R^d \to \R \cup \infty$, and a sequence of loss functions $f_1, \ldots, f_{t-1}$, we define: 
\begin{alignat}{3}
	&\phi_t(w; p) &~\eqq~& \sum\nolimits_{i=1}^{t-1} f_i(w) + p\T w + R(w) + B(w),
	\label{eq:objective_p_t}
	\\
	&w_t(p) &~\eqq~& \argmin_{w \in W} \phi_t(w; p).
\end{alignat}
Given a random perturbation vector  $p_t \sim \nu_t$, we let $\mathcal Q_t$ denote the density function of the random variable $w_t(p_t)$:
\begin{align*}
	\mathcal Q_t(w) = \Pr_{p_t \sim \nu_t }(w_t(p_t) = w).
\end{align*}
With slight notation overloading, we let $\cQ_t$ also denote the distribution of $w_t(p_t)$, and write $w_t(p_t) \sim \cQ_t$.

\begin{algorithm}[ht]
    \caption{Follow-The-Perturbed-Regularized-Lazy-Leader (FTPRLL)}
	\label{alg:lftprl}
    \label{alg:v1}
	\begin{algorithmic}[1]
	    \STATE \textbf{input:} 
	        $\sigma_1,\ldots,\sigma_T \in \R$, regularizer $R$, barrier $B$
        \STATE Sample $p_1 \sim \Lap(\sigma_1)$
        \STATE $ 
                w_1 \leftarrow w_1(p_1)
            $
        \FOR{$t=1$ to $T$}
            \STATE Play $w_t$, Observe $f_t$
            	\STATE Let $\cQ_{t+1}$ denote the density of $w_{t+1}(p_{t+1})$, where $p_{t+1} \sim \Lap(\sigma_{t+1})$
            \STATE $w_{t+1} \leftarrow \mathrm{LazySample}
            		\big(w_t, 
                \cQ_t, \cQ_{t+1})$
        \ENDFOR
	\end{algorithmic}
\end{algorithm}

We note that given second order access to the loss functions, barrier and regularizer,
\cref{alg:lftprl} is polynomial-time efficient. Indeed, the density functions $\cQ_t, \cQ_{t+1}$ 
passed as arguments into \cref{alg:lazysample}
may be sampled from and evaluated efficiently at any given point, as required by 
\cref{lem:lazysample-dist}. To generate a sample from $\dist Q_t$, we sample $p \sim \nu_t$ and compute $w_t(p)$ by minimizing $\phi_t(\cdot;p)$ over $W$. 
In order to evaluate $\dist Q_t$ at a given $w\in W$,
we follow the same approach presented in \cite{agarwal2023differentially}, and use a closed form expression based on the change of variables formula (see \cref{lem:cov} in \cref{sec:lem:tv_decisions} for the details):
\begin{align*}
    \cQ_t(w) = \nu_t(-\nabla \phi_t(w;0) )
        \av{\det \br{ -\nabla^2 \phi_t(w;0)}}.
\end{align*}

In what follows, we provide a regret analysis of \cref{alg:lftprl} in the general convex and strongly convex cases.
We begin with the following lemma which establishes a bound on the total variation between consecutive decision distributions of the algorithm. The lemma follows from arguments given in \citet{agarwal2023differentially}; for completeness, we provide a proof in \cref{sec:lem:tv_decisions}.

\begin{lemma}
\label{lem:tv_decisions}
    Assume the loss sequence $f_1, \ldots, f_T\colon W \to \R$ is convex, $G$-Lipschitz, and $\beta$-smooth, and that $\phi_t$ is $(1/\eta_t)$-strongly convex.
    Then, for any $t$ such that $\sigma_{t+1} = \sigma_t$, we have
    \begin{align*}
    	\TV{\cQ_{t+1} - \cQ_{t}}
		\leq \eta_t \beta d + \sqrt d G/\sigma_t.
    \end{align*}
\end{lemma}
Combined with \cref{lem:lazysample-switch}, this ensures the switch probability in any single round is bounded by the quantity above that can be controlled with the choice of step size and perturbation scale parameters.

\subsection{The general convex case}
\label{sec:ftprl_main}

We start with the general convex case where the regret analysis is simpler. Here we introduce stability into the algorithm by means of L2 regularization, with $R(w) = \frac{1}{2\eta}\norm{w - w_0}^2$ for some $w_0 \in \Dom$. Below, we state and prove our theorem giving the guarantees of \cref{alg:lftprl} when tuned for general convex losses. 
\begin{theorem} \label{thm:lftprl}
	Let $W \subset \R^d$ be convex and of diameter $D$, and assume
    the loss functions $f_1,\ldots, f_T$ are $G$-Lipschitz, $\beta$-smooth and convex over $W$.
    Further, assume the barrier $B$ satisfies $0 \leq B(w) \leq 1$ for all $w \in W_\gamma$, where $\gamma\eqq 1/T$.
    Then running \cref{alg:lftprl} with $\sigma_t = \sigma$ for all $t$ and $R(w) = \frac{1}{2 \eta}\norm{w - w_0}^2$, $w_0\in \Dom$, we obtain
    \begin{align*}
        \E \Re_T \leq 
            2\eta G^2 T 
            + \frac{D^2}{2\eta} + \sigma \sqrt d D + GD + 1
        \quad \text{ and } \quad 
        \ESw \leq \eta \beta d T + \sqrt d G T/\sigma.
    \end{align*}
    In particular, setting $\sigma = 2 \sqrt d GT/S$ and 
    $\eta = \min\cb{D/2 G \sqrt T, S/\beta d T}$ we obtain $\ESw \leq S$ and 
    $\E \Re_T = O(\sqrt T + d T/ S)$.
\end{theorem}
\begin{proof}
    Define the random minimization objective at time $t$ by;
	\begin{align}
		\tilde \phi_t(w) &\eqq 
		\phi_t(w; p_t)
		=\sum_{i=1}^{t-1} f_i(w) + p_t\T w + R(w) + B(w),
	\end{align}
    and set
    \begin{equation} 
        \label{eq:lftpl_y}
        y_{t+1} := \argmin_{w\in W} \left\{ 
                        \tilde \phi_t(w) + f_t(w)
                    \right\}.
    \end{equation}
    We have,
    \begin{align}
    \E \Re_T 
    &= \E \Big[ \sum_{t=1}^T f_t(w_t) - f_t(w^*) \Big] \nonumber
    \\
    &= \E \Big[ \sum_{t=1}^T f_t(w_t) - f_t(y_{t+1}) \Big]
        + \E \Big[ \sum_{t=1}^T f_t(y_{t+1}) - f_t(w^*) \Big], \label{eq:lftpl_1}
    \end{align}
    where the expectation is taken over randomness of the algorithm originating from random the perturbations $p_1, \ldots, p_T$.
    To bound the first term in \cref{eq:lftpl_1}, consider any perturbation $p_t$, and note that 
    \[
        \tilde \phi_t(w) + f_t(w) = \sum_{i=1}^t f_i(w) + p_t\T w + \frac{1}{2\eta}\norm{w - w_0}^2 + B(w),
    \]
    thus $\tilde \phi_t + f_t$ is $1/\eta$-strongly-convex. In addition, $\tilde \phi_t + f_t$ is minimized over $\Dom$ by $y_{t+1}$, and $\tilde \phi_t$ is minimized by $w_t$ over $\Dom$. Therefore by a standard bound on the stability of minimizers of strongly convex objectives (see \cref{lem:sc_stab} in \cref{sec:lem_tech}) we obtain
    $
        \norm{y_{t+1} - w_t} \leq 2 \eta G,
    $
    and then
    \begin{equation*} 
        \Exp{\sum_{t=1}^T f_t(w_t) - f_t(y_{t+1}) }
        \leq \Exp{ \sum_{t=1}^T G\norm{w_t - y_{t+1}} }
        \leq 2 \eta G^2 T.
    \end{equation*}
    To bound the second (leaders regret) term of \cref{eq:lftpl_1}, we first obtain a bound the w.r.t. the truncated domain $W_\gamma$.
    \begin{lemma} \label[lem]{lem:lftpl_btl}
        The hypothetical leaders regret is bounded as,
        \[
            \E \Big[ \sum_{t=1}^T f_t(y_{t+1}) - f_t(w^*_\gamma) \Big] 
            \leq \frac{D^2}{2\eta} + \sigma \sqrt d D + 1,
        \]
        for any $w^*_\gamma \in W_\gamma$.
    \end{lemma}
    Now, note that for any $w^* \in W$, there exists $w^*_\gamma = w_0 + (1-\gamma)(w^*-w_0) \in W_\gamma$ such that $\norm{w^* - w^*_\gamma} \leq \gamma D$.
    Thus, 
    \begin{align*}
            \E \Big[ \sum_{t=1}^T f_t(y_{t+1}) - f_t(w^*) \Big] 
            \leq \frac{D^2}{2\eta} + \sigma \sqrt d D + GDT\gamma + 1
            = \frac{D^2}{2\eta} + \sigma \sqrt d D + GD + 1,
    \end{align*}
    which concludes the proof of our regret bound. 
    For the switches bound, by \cref{lem:tv_decisions};
    \begin{align*}
		\ESw = \sum_{t=2}^{T}\Pr(w_{t+1} \neq w_t)
		= \sum_{t=2}^{T} \TV{\cQ_{t+1} - \cQ_t}
		\leq
		\eta \beta d T + \sqrt d G T/\sigma,
	\end{align*}
	as claimed.
\end{proof}

The proof of \cref{lem:lftpl_btl} is a straightforward adaptation of the analysis for the linear case laid out in \cite{kalai2005efficient}, and makes use of the well known Follow-the-leader Be-the-leader Lemma, stated next for completeness. 
\begin{lemma} [FTL-BTL, \citealp{kalai2005efficient}] \label[lem]{lem:btl} 
    Let $\mathcal X\subset \R^d$ be convex and compact, and  $h_1, \ldots, h_T \colon \mathcal X \to \R$ be a sequence of losses. Then, for 
    $ w_t^* := \argmin_{w \in \mathcal X} \sum_{s=1}^t h_t(w)$, we have
    \[
        \sum_{t=1}^T h_t(w_t^*)
            \leq 
        \sum_{t=1}^T h_t(w_T^*).
    \]
\end{lemma}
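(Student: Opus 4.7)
The plan is to prove this classical FTL–BTL statement by induction on $T$. The base case $T=1$ is immediate since the claimed inequality reduces to $h_1(w_1^*) \leq h_1(w_1^*)$. For the inductive step, assuming the bound holds for $T-1$, the strategy is to chain two inequalities: first apply the induction hypothesis to the first $T-1$ terms, then exploit the optimality of $w_{T-1}^*$ with respect to $\sum_{s=1}^{T-1} h_s$ to replace the comparator $w_{T-1}^*$ by $w_T^*$ at the cost of only increasing the right-hand side, and finally add the common term $h_T(w_T^*)$ to both sides.

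In more detail, I would write:
\begin{align*}
\sum_{t=1}^{T-1} h_t(w_t^*) \;\leq\; \sum_{t=1}^{T-1} h_t(w_{T-1}^*) \;\leq\; \sum_{t=1}^{T-1} h_t(w_T^*),
\end{align*}
where the first inequality is the induction hypothesis applied to the prefix of length $T-1$, and the second uses that $w_{T-1}^*$ minimizes $\sum_{s=1}^{T-1} h_s$ over $\Dom$. Adding $h_T(w_T^*)$ to both sides yields the claim at length $T$.

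There is essentially no technical obstacle here, and the lemma does not require any assumption on the $h_t$ beyond being well-defined functions with unique (or consistently chosen) minimizers so that the $w_t^*$'s are well-defined. The only subtlety worth flagging is the minimizer-selection issue when the argmin is not unique; in that case one either restricts attention to a fixed selection rule or replaces equalities by infima, but this does not affect the inequality chain above. Since the statement is stated as a well-known lemma of \cite{kalai2005efficient} and is included only for reference, a short inductive proof of a few lines suffices.
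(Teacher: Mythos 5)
Your inductive proof is correct and is exactly the standard argument for the FTL--BTL lemma (indeed the paper itself does not reprove it but cites \citet{kalai2005efficient}, where essentially this same induction appears). The base case, the use of the induction hypothesis on the length-$(T-1)$ prefix, and the optimality of $w_{T-1}^*$ for $\sum_{s=1}^{T-1} h_s$ to swap the comparator to $w_T^*$ are all in order, and your remark on minimizer selection when the argmin is non-unique is the right caveat.
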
 

\begin{proof}[of \cref{lem:lftpl_btl}]
    Fix a perturbation sequence $p_1, \ldots, p_T$, and additionally define $p_0 = 0$.
    Consider the auxiliary loss sequence 
    $\tilde f_0(w) = R(w) + B(w)$, 
    and for $t \geq 1$, $
        \tilde f_t(w) := f_t(w) + (p_t - p_{t-1})\T w
    $.
    From \cref{eq:lftpl_y} it follows that
    \[
        y_{t+1} = \argmin_{w\in W} \left\{ \sum_{i=0}^t \tilde f_i(w) \right\},
    \]
    hence the BTL Lemma (\cref{lem:btl}) we obtain (for any $w^*_\gamma \in W_\gamma$);
    $
        \sum_{t=0}^T \tilde f_t(y_{t+1}) 
            \leq \sum_{t=0}^T \tilde f_t( w^*_\gamma)
    $.
    Substituting for the definition of $\tilde f_t$ and rearranging we get
    \begin{align*}
        \label{eq:lftpl_sc_btl}
        \sum_{t=1}^T f_t(y_{t+1}) - f_t(w^*_\gamma) 
        &\leq R(w^*_\gamma) - R(y_1) + B(w^*_\gamma) - B(y_1)
        + \sum_{t=1}^T (p_t - p_{t-1})\T(w^*_\gamma - y_{t+1})
        \\
        &\leq \frac{D^2}{2\eta} + 1
        + \sum_{t=1}^T (p_t - p_{t-1})\T(w^*_\gamma - y_{t+1}).
    \end{align*}
    Now, consider any perturbations distribution $\dist Q$, such that the marginals of the $p_t$'s under $\dist Q$ are the same as the marginals of the $p_t$'s under our actual lazy algorithm which we denote by $\Alg$. Recall that $y_{t+1}$ defined by \cref{eq:lftpl_y} depends only on randomness introduced by $p_t$. This implies that the $y_t$'s are distributed the same under both $\Alg$ and $\dist Q$ as long as the marginals of the perturbations match. 
    Hence
    \begin{align*}
        \Expp{\Alg} {\sum_{t=1}^T f_t(y_{t+1}) - f_t(w^*_\gamma)} 
            &= \sum_{t=1}^T  \Expp{\Alg} {f_t(y_{t+1}) - f_t(w^*_\gamma)}
            \\
            &= \sum_{t=1}^T  \Expp{\dist Q} {f_t(y_{t+1}) - f_t(w^*_\gamma)}
            = \Expp{\dist Q} {\sum_{t=1}^T f_t(y_{t+1}) - f_t(w^*_\gamma)}.
    \end{align*}
    By \cref{lem:lazysample-dist}, for all $t$ it holds that $p_t \sim \Lap(\sigma)$ when generated by our algorithm $\Alg$. Therefore, choosing $\dist Q$ by letting $p_1 \sim \Lap(\sigma)$, and setting $p_t = p_1$ for all $t \geq 2$, we achieve the same marginals as those induced by $\Alg$. This implies
    \begin{align*}
        \Expp{\Alg} {\sum_{t=1}^T f_t(y_{t+1}) - f_t(w^*_\gamma)} 
            &\leq \frac{D^2}{2\eta} + 1
            + \Expp{\dist Q} {\sum_{t=1}^T (p_t - p_{t-1})\T(w^*_\gamma - y_{t+1})}
            \\
            &\leq \frac{D^2}{2\eta} + 1
            + \Expp{\dist Q} {\sum_{t=1}^T \norm{p_t - p_{t-1}} 
                \norm{w^*_\gamma - y_{t+1}}}
            \\
            &\leq \frac{D^2}{2\eta} + 1
            + D \Expp{\dist Q} {\sum_{t=1}^T \norm{p_t - p_{t-1}}}
            \\
            &= \frac{D^2}{2\eta} + 1 + \sigma \sqrt d D,
    \end{align*}
    as desired.
\end{proof}

\subsection{The strongly convex case}
\label{sec:scub:main}

In this section, we state and prove \cref{thm:lftprl_sc} providing the guarantees of \cref{alg:lftprl} for the strongly convex setting. The performance here hinges on increasing the perturbations variance at a certain rate, accounting for the increasing curvature in the per round minimized objective.
This, along with a careful analysis of the perturbed leaders regret, is key to achieving the quadratic gain in the guarantee.

\begin{theorem} \label{thm:lftprl_sc}
Let $W \subset \R^d$ be convex and of diameter $D$, and assume
    the loss functions $f_1,\ldots, f_T$ are $G$-Lipschitz, $\beta$-smooth and $\lambda$-strongly convex over $W$.
    Further, assume the barrier $B$ satisfies $0 \leq B(w) \leq 1$ for all $w \in W_\gamma$, where $\gamma\eqq 1/T$,
    and let $\sigma_t = \sqrt t \sigma$ for all $t\in \tau \eqq \{2, 2^1, \ldots, 2^{\lceil \log T \rceil}\}$, and $\sigma_t = \sigma_{t-1}$ for $t\notin \tau$.
   Then running \cref{alg:lftprl} with parameters $\cb{\sigma_t}$, $R(w) \equiv 0$ and barrier $B$, we obtain:
    \[
        \E \Re_T \leq \frac{4 G^2 + 2 d \sigma^2}{\lambda} (1 + \log T) + 1
        \quad \text{ and } \quad 
        \ESw \leq \frac{2\beta d }{\lambda } (1 + \log T)
		+ \frac{G \sqrt { d T } \log T}{\sigma}.
    \]
    In particular, for any $S \geq \frac{8 \beta d }{\lambda } \log T$, setting $\sigma = 6 G \sqrt {d T}/S$ we obtain the switching guarantee $\ESw \leq S $ and regret
    $\E \Re_T = O (d^2 T \log T / S^2)$.
\end{theorem}
\begin{proof}
As in the convex case, let $y_{t+1} = \argmin_{w\in\Dom} \left\{ \tilde  \phi_t(w) + f_t(w) \right\}$, where
	\begin{align}
		\tilde \phi_t(w) &\eqq 
		\phi_t(w; p_t)
		=\sum_{i=1}^{t-1} f_i(w) + p_t\T w + B(w),
	\end{align}
and by following an argument similar to the proof of \cref{thm:lftprl}, we obtain
\begin{align}
    \E \Re_T 
    &\leq \frac{2 G^2}{\lambda}(1 + \log T)
        + \E \Big[ \sum_{t=1}^T f_t(y_{t+1}) - f_t(w^*) \Big],
\end{align}
with the only difference being that $\tilde \phi_t + f_t$ is now $t\lambda$-strongly-convex.
To bound the second term in the above display, we follow the same argument given in the proof of \cref{lem:lftpl_btl} to obtain for any $w^*_\gamma \in W_\gamma$;
\begin{align*}
    \E \sbr { \sum_{t=1}^T f_t(y_{t+1}) - f_t(w^*_\gamma ) }
    \leq 1 + \E_{\dist Q} \sbr{\sum_{t=1}^T (p_t - p_{t-1})\T(w^*_\gamma  - y_{t+1})},
\end{align*}
where this time we define $\dist Q$ by
$\xi \sim \Lap(1)$, and set $p_t = \sigma_t \xi$ for all $t \in [T]$, so that $p_t \sim \Lap(\sigma_t)$ under $\cQ$.
Indeed, by \cref{lem:lazysample-dist} it holds that under our actual algorithm $p_t \sim \Lap(\sigma_t)$ as well, and therefore the marginals match those induced by $\cQ$.
Next, we exploit the fact that $p_t$ are zero mean in order to get rid of the non-random part of $w^*_\gamma  - y_{t+1}$. To that end, set 
$
    x_{t+1} := \argmin_{w\in \Dom} \left\{ 
        \sum_{i=1}^t f_i(w)
    \right\}
$,
and note $x_{t+1}$ is deterministic. Therefore,
\begin{align*}
    \E \sbr {\sum_{t=1}^T f_t(y_{t+1}) - f_t(w^*_\gamma)} - 1
        &\leq \E_{\dist Q} \sbr{\sum_{t=1}^T (p_t - p_{t-1})\T(w^*_\gamma  - y_{t+1})}
        \\
        &= \sum_{t=1}^T \E_{\dist Q} \sbr{p_t - p_{t-1}}\T(w^*_\gamma  - x_{t+1})
            + \E_{\dist Q} \sbr{\sum_{t=1}^T (p_t - p_{t-1})\T(x_{t+1} - y_{t+1})}
        \\
        &= \E_{\dist Q} \sbr{\sum_{t=1}^T (p_t - p_{t-1})\T(x_{t+1} - y_{t+1})}.
\end{align*}
    Now, note that $x_{t+1}$ and $y_{t+1}$ minimize the same $t\lambda$-strongly-convex objective up to the additional perturbation vector $p_t$, therefore by \cref{lem:sc_stab}; $\norm{x_{t+1} - y_{t+1}} \leq \frac{2 \norm{p_t}}{t \lambda}$.
    Thus we obtain,
	\begin{align*}
        \E \sbr{\sum_{t=1}^T f_t(y_{t+1}) - f_t(w^*)} - 1
            &\leq \E_{\dist Q} \sbr{\sum_{t=1}^T (p_t - p_{t-1})\T(x_{t+1} - y_{t+1})}
            \\
            &\leq \E_{\dist Q} \sbr{\sum_{t=1}^T \norm{p_t - p_{t-1}} \norm{x_{t+1} - y_{t+1}}}
            \\
            &\leq \E_{\dist Q} \sbr {\sum_{t=1}^T \norm{p_t - p_{t-1}} \frac{2 \norm{p_t}}{t \lambda} }
            \\
            &= \E_{\xi \sim \Lap (1)} \sbr{\sum_{t=1}^T \norm{ \sigma_t \xi -  \sigma_{t-1} \xi} 
                 \frac{2 \norm{ \sigma_t \xi}}{t \lambda} }
            \\
            &= \E_{\xi \sim \Lap (1)}[\norm{\xi}^2] \sbr{
            		\sum_{t=1}^T \av{ \sigma_t -  \sigma_{t-1}} 
                 \frac{2 \av{ \sigma_t }}{t \lambda} }
            \\
            &= \frac{4 d}{\lambda} 
                \sum_{t=1}^T \frac{\sigma_t(\sigma_t - \sigma_{t-1})}{t}
            \tag{$\E[\norm{\xi}^2] = 2 d$}
            \\
            &= \frac{4 d}{\lambda} 
                \sum_{t\in \tau} \frac{\sigma_t(\sigma_t - \sigma_{t-1})}{t}
            \tag{$t \notin \tau \implies \sigma_t = \sigma_{t-1}$}
            \\
            &= \frac{4 d \sigma^2 }{\lambda} 
                \sum_{t\in \tau} \frac{\sqrt t(\sqrt t - \sqrt {t/2})}{t}
            \\
            &\leq \frac{2 d \sigma^2 }{\lambda} 
                (1 + \log T).
    \end{align*}
    The above, after accounting for the truncation, gives;
    \begin{align*}
    	\E \sbr { \sum_{t=1}^T f_t(y_{t+1}) - f_t(w^*) }
    	\leq \frac{2 d \sigma^2 }{\lambda} 
                \br{1 + \log T} + GD + 1
        \leq \frac{2 d \sigma^2 }{\lambda} 
                \br{1 + \log T} + 2G^2/\lambda + 1.
    \end{align*}
    which concludes the proof of the regret bound. 
    For the switches guarantee, 
    note that since losses are $\lambda$-strongly convex, we have that $\phi_t$ are $(1/\eta_t)$-strongly convex with $\eta_t = 1/(\lambda t)$. 
    Thus, by \cref{lem:tv_decisions}; 
    \begin{align*}
		\ESw = \sum_{t=2}^{T}\Pr(w_{t+1} \neq w_t)
		= \sum_{t=2}^{T} \TV{\cQ_{t+1} - \cQ_t}
		&\leq
		1 + \log T + \sum_{t\in \tau} t( \eta_t \beta d + \sqrt d G/\sigma_t )
		\\
		&=
		1 + \log T + \sum_{t\in\tau} \frac{\beta t d }{\lambda t }  + \sqrt d G \sqrt t/\sigma 
		\\
		&\leq
		\frac{2\beta d }{\lambda } (1+\log T)
		+ \frac{G \sqrt { d T } }{\sigma(\sqrt 2 - 1)},
	\end{align*}
	where the first inequality follows since our epoch schedule contains $\lceil \log T \rceil$ perturbation scale changes; $\sqrt {t/2} \sigma = \sigma_{t-1} \to \sigma_t = \sqrt {t} \sigma$ for $t\in \tau$, and the last since it is a geometric sum starting at $\sqrt 2$.
	Finally, note that for any 
	$S \geq \frac{8 \beta d }{\lambda } \log T$,
	setting $\sigma = 6 G \sqrt {d T} /S$ gives
	\begin{align*}
		\ESw \leq \frac{2 \beta d }{\lambda } (1+ \log T) + \frac{S}{2}
		\leq \frac{4 \beta d }{\lambda } \log T
		+ \frac{S}{2}
		\leq S,
	\end{align*}
	and completes the proof.
\end{proof}

\section{Lazy Stochastic OCO}
\label{sec:stochastic}
In this section, we present a simple algorithm for the special case where the losses are drawn i.i.d.~from some distribution of convex losses $\dist F$. The standard objective to be minimized here is the \textit{pseudo} regret, defined by
\[
    \overline\Re_T := \Exp{\sum_{t=1}^T f_t(w_t) - f_t(w^*)},
\]
where $w^* = \argmin_{w\in \Dom} \Exp { f_1(w) }$ and the expectation is over the loss distribution $\dist F$, from which $f_1,\ldots,f_T$ are sampled i.i.d.
Importantly, the minimizer of the expected loss defined above stays fixed for the duration of the game. This is in stark contrast to the situation of the general adversarial setting, and enables significantly better bounds achieved by non uniform blocking as outlined by \cref{alg:lazyogd}. 
\begin{algorithm}[ht]
    \caption{Lazy SGD}
    \label{alg:lazyogd}
	\begin{algorithmic}[1]
	    \STATE \textbf{input:} learning rates $\eta_1,\ldots,\eta_T > 0$
	    \STATE $k \leftarrow 0$; arbitrary $x_1 \in \Dom$
	    \FOR{$t=1$ to $T$}
	        \IF{$t = 2^k$}
	            \STATE $k \leftarrow k+1$
                \STATE $\tilde w_k \leftarrow \Proj{\frac{1}{t}\sum_{s=1}^t x_t}$
            \ENDIF
            \STATE Play $w_t = \tilde w_k$; Observe $f_t$
            \STATE $x_{t+1} \leftarrow x_t - \eta_t \nabla f_t(x_t)$
        \ENDFOR
	\end{algorithmic}
\end{algorithm}
Next, we state and prove \cref{thm:stochastic} which summarizes the guarantees of \cref{alg:lazyogd}.

\begin{theorem}
    \label{thm:stochastic}
    Assume $\dist F$ is a distribution of $G$-Lipschitz convex losses over a domain $\Dom$ of diameter $D$.
    Then running \cref{alg:lazyogd} with step size $\eta_t = D / G \sqrt t$ guarantees $\Sw_T \leq 1 + \log T$ and
    \[
        \overline\Re_T \leq 6 D G \sqrt T  .
    \]
    If we further assume losses sampled from $\dist F$ are $\lambda$-strongly-convex, then running \cref{alg:lazyogd} with step size $\eta_t = 1/\lambda t$ guarantees $\Sw_T \leq 1 + \log T$ and
    \[
        \overline\Re_T 
        \leq \frac{2 G^2}{\lambda} \log^2 T.
    \]
\end{theorem}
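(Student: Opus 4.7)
The plan is to exploit the dyadic block structure of \cref{alg:lazyogd}: the played decision $\tilde w_k$ is updated only when $t$ hits a power of two, so the expected switches bound is immediate---there are at most $1 + \lfloor \log_2 T \rfloor \leq 1 + \log T$ such indices in $[1,T]$, giving $\Sw_T \leq 1 + \log T$ deterministically.

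For the pseudo-regret, I would first decompose over blocks $T_k = \{2^{k-1}, \ldots, \min(2^k - 1, T)\}$ on which $w_t$ is constantly $\tilde w_k$. Since $\tilde w_k$ is a measurable function of $x_1, \ldots, x_{2^{k-1}}$ and hence of $f_1, \ldots, f_{2^{k-1}-1}$, the i.i.d.\ assumption implies that $\tilde w_k$ is independent of $f_t$ for $t \geq 2^{k-1}$, so $\E[f_t(w_t)] = \E[F(\tilde w_k)]$ for $F(w) := \E_{f \sim \dist F}[f(w)]$. This yields
\[
    \overline{\Re}_T \;\leq\; \sum_{k} 2^{k-1} \cdot \E\!\left[F(\tilde w_k) - F(w^*)\right].
\]
Since $\tilde w_k = \Pi(\bar x_{2^{k-1}})$ for $\bar x_n = \tfrac{1}{n}\sum_{s \leq n} x_s$, and projection only helps as $w^* \in \Dom$, I would then invoke standard guarantees on the averaged iterate of SGD on the stochastic objective $F$: with $\eta_t = D/(G\sqrt t)$, the convex case gives $\E[F(\tilde w_k) - F(w^*)] = O(DG/\sqrt{2^{k-1}})$, so each block contributes at most $O(DG\sqrt T)$ and summing over the at most $1 + \log T$ blocks yields the stated $O(DG\sqrt T(1+\log T))$ bound; with $\eta_t = 1/(\lambda t)$, the $\lambda$-strongly convex case gives $\E[F(\tilde w_k) - F(w^*)] = O(G^2 \log(2^{k-1})/(\lambda 2^{k-1}))$, so block $k$ contributes $O(G^2 k/\lambda)$ and the total telescopes to $O((G^2/\lambda)(1 + \log T)^2)$.

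The main technical subtlety I expect is that the iterates $x_t$ are \emph{not} projected onto $\Dom$ in \cref{alg:lazyogd}, so the standard ``$\|x_t - w^*\| \leq D$'' hypothesis backing black-box OGD analyses does not directly apply. I plan to absorb this by running the telescoping OGD argument directly on $F$ while controlling $\E\|x_t - w^*\|^2$ via the stochastic recursion $\E\|x_{t+1} - w^*\|^2 \leq \E\|x_t - w^*\|^2 + \eta_t^2 G^2$ (which uses only convexity of $F$ and unbiasedness of the stochastic gradients, and therefore holds without projection). This keeps the iterate distances controlled well enough to recover the per-block rates quoted above, after which the proof reduces to the dyadic summations outlined.
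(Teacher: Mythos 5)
Your plan coincides with the paper's proof up to rephrasing. The paper proves the per-block bound by online-to-batch: it invokes the anytime OGD regret guarantee $\sum_{s\le t}(f_s(x_s)-f_s(w))\le 2DG\sqrt t$, then for $t$ in the block whose decision is $\tilde w_k = \Projw{\bar x_n}$ (with $n$ the block's starting power of two) writes
\[
    \E\bigl[f_t(\tilde w_k)-f_t(w^*)\bigr] \le \frac{1}{n}\sum_{s=1}^n\E\bigl[f_t(x_s)-f_t(w^*)\bigr] = \frac{1}{n}\,\E\Big[\sum_{s=1}^n f_s(x_s)-f_s(w^*)\Big] \le \frac{2DG}{\sqrt n},
\]
using Jensen and the fact that $f_t$ and $f_s$ are identically distributed and both independent of $x_s$; summing the roughly $n$ rounds in the block and then over the $O(\log T)$ blocks gives the result. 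This is exactly the ``averaged-iterate SGD'' bound you invoke, just unpacked, so the substance of the two arguments is the same, and the strongly convex case is handled identically.

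On the projection point you flag: you are right that the anytime OGD argument needs $\|x_t-w^*\|\le D$, hence projected iterates, and \cref{alg:lazyogd} as printed omits a projection in the $x_{t+1}$ update (the same line also has the harmless typo $\sum_{s=1}^t x_t$ for $\sum_{s=1}^t x_s$). The paper's own justification---that ``the iterates $x_t$ \ldots are just decision variables of standard OGD''---presupposes the projected update, so this looks like an omission in the pseudocode rather than an intentional design. Your sketched workaround does not quite patch it, though: the recursion $\E\|x_{t+1}-w^*\|^2 \le \E\|x_t-w^*\|^2 + \eta_t^2 G^2$ requires the cross term $-2\eta_t\,\E\langle\nabla F(x_t),\,x_t-w^*\rangle$ to be nonpositive, and convexity only gives $\langle\nabla F(x_t),x_t-w^*\rangle \ge F(x_t)-F(w^*)$, which can be negative once $x_t$ leaves $\Dom$ and $F$'s unconstrained minimizer also lies outside $\Dom$; moreover, $f_t$ and $\nabla f_t$ are not even defined off $\Dom$ in the model, so the unprojected iteration is not well-posed. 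The clean fix is to read the update as $x_{t+1}\leftarrow\Projw{x_t-\eta_t\nabla f_t(x_t)}$; then $\bar x_n\in\Dom$, the outer $\Pi$ is vacuous, the anytime bound applies, and your argument goes through as written.
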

\begin{proof}
    For the general convex case, observe that the iterates $x_t$ maintained by the algorithm are just decision variables of standard OGD with decreasing step $\eta_t = D / G \sqrt t$. By well known arguments (see e.g., \citealp{hazan2019introduction}) these obtain an any time $t\in [T]$ guarantee of
    \[
        \sum_{s=1}^t f_s(x_s) - f_s(w) \leq 2 D G \sqrt t,
    \]
    for any $w\in \Dom$.
    Therefore, we have for any $t$,
    \begin{align*}
        \Exp {f_t(\tilde w_k) - f_t(w^*)} 
        \leq \frac{1}{2^k} \sum_{s=1}^{2^k} \Exp { f_t(x_s) - f_t(w^*) }
        = \frac{1}{2^k} \Exp { \sum_{s=1}^{2^k}  f_s(x_s) - f_s(w^*) }
        \leq \frac{2 D G \sqrt{2^k}}{2^k} .
    \end{align*}
    Now, set $T_k := \min\{ 2^k, T+1 \}$ and we obtain
    \begin{align*}
        \Exp{\sum_{t=1}^T f_t(w_t) - f_t(w^*)}
        &=  \sum_{k = 0}^{\lfloor \log T \rfloor} \sum_{t=T_k}^{T_{k+1}-1} \Exp{ f_t(\tilde w_k) - f_t(w^*) }
        \\
        &\leq   \sum_{k = 0}^{\lfloor \log T \rfloor} \sum_{t=T_k}^{T_{k+1}-1} \frac{2 D G \sqrt{2^k}}{2^k} 
        \\
        &\leq 2 D G  \sum_{k = 0}^{\lfloor \log T \rfloor} \sqrt{2^k}
        \\
        &\leq 2 D G \frac{2^{\lfloor \log T \rfloor/2}}{\sqrt 2 - 1}
        \\
        &\leq 6 D G \sqrt T
        ,
    \end{align*}
    which concludes the proof for the general convex case.
    For the strongly convex case, we similarly have;
    \begin{align*}
        \Exp {f_t(\tilde w_k) - f_t(w^*)} 
        \leq \frac{G^2}{\lambda 2^k}(1 + \log T_k),
    \end{align*}
    therefore,
    \begin{align*}
        \Exp{\sum_{t=1}^T f_t(w_t) - f_t(w^*)}
        &\leq   \sum_{k = 0}^{\lfloor \log T \rfloor} \sum_{t=T_k}^{T_{k+1}-1} \frac{G^2}{\lambda 2^k}(1 + \log T_k)
        \\
        &\leq 
        \frac{G^2}{\lambda} \log T
        +
        \frac{G^2}{\lambda}  
        \sum_{k = 0}^{\lfloor \log T \rfloor} \log (2^k)
        \\
        &\leq 
        \frac{2 G^2}{\lambda} \log^2 T
        ,
    \end{align*}
    which completes the proof.
\end{proof}

\section{Lower Bounds}
\label{sec:lb_statements}
In this section, we breifly discuss our lower bounds.
All proofs are deffered to \cref{sec:lb_proofs}.
In the general convex setting, a lower bound can be derived somewhat indirectly by previous results of \cite{geulen2010regret}, who prove a lower bound for online buffering problems in the experts setting. 
Here we provide a dedicated statement and proof for the lazy OCO setup.
\begin{theorem} 
    \label{thm:lb_convex}
    For any $S \in \mathbb N$, there exists a stochastic sequence of $1$-Lipschitz linear losses over $\Dom=[-1, 1]$, such that the expected regret of any \Ssoco algorithm is $\Omega( T / S)$.
\end{theorem}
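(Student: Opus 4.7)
The plan is to instantiate a stochastic oblivious adversary on $\Dom = [-1,1]$ with block-constant random linear losses, and then quantify a per-block trade-off between the algorithm's ability to adapt and its expected switch count. Specifically, partition $[T]$ into $N$ blocks of equal length $L = T/N$, where $N = \Theta(S^2)$ will be fixed at the end, and draw signs $\sigma_1,\ldots,\sigma_N$ i.i.d.\ uniformly from $\{-1,+1\}$. Set $f_t(w) = \sigma_{k(t)} w$, where $k(t)$ is the block containing $t$. These losses are convex and $1$-Lipschitz, and the best fixed decision $w^\star = -\mathrm{sign}(\sum_k \sigma_k)$ attains total loss $-L |\sum_k \sigma_k|$, giving $\E[\sum_t f_t(w^\star)] \le -cL\sqrt{N}$ for an absolute constant $c>0$ via Khintchine's inequality.

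The crux is to lower bound the algorithm's expected cumulative loss by $-O(SL)$. Let $X_k = \sum_{t \in B_k} w_t$ denote the algorithm's aggregate play in the $k$-th block $B_k$, so its total loss equals $\sum_k \sigma_k X_k$. Conditioning on the history $\mathcal{F}_{k-1}$ (the prior block signs together with the algorithm's internal randomness), $\sigma_k$ is uniform on $\{-1,+1\}$ and independent of $\mathcal{F}_{k-1}$, so $\E[\sigma_k X_k \mid \mathcal{F}_{k-1}] = \tfrac{1}{2}(X_k^+ - X_k^-)$, where $X_k^{\pm}$ denotes $X_k$ under $\sigma_k = \pm 1$. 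Since the first round of $B_k$ is played without knowledge of $\sigma_k$, its contribution to $X_k^+ - X_k^-$ vanishes, and the remaining $L-1$ rounds each contribute at most $2$, giving the worst-case per-block bound $\E[\sigma_k X_k \mid \mathcal{F}_{k-1}] \ge -(L-1)$.

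I then prove the switching lemma: whenever block $k$ is \emph{adapting} (i.e., $X_k^+ \neq X_k^-$), the expected number of within-block switches in $B_k$ conditional on $\mathcal{F}_{k-1}$ is at least $\tfrac{1}{2}$. Indeed, let $\tau$ be the earliest round in $B_k$ at which $w_\tau^+ \neq w_\tau^-$; by minimality $w_{\tau-1}^+ = w_{\tau-1}^-$, so at least one of the two values of $\sigma_k$ forces $w_\tau \neq w_{\tau-1}$, contributing at least $\tfrac{1}{2}$ to the switch probability at round $\tau$. Summing over $k$ and invoking $\E[\Sw_T] \le S$ bounds the expected number of adapting blocks by $2S$, and combining with the per-block loss bound gives $\E[\sum_t f_t(w_t)] = \sum_k \E[\sigma_k X_k] \ge -(L-1)\cdot 2S \ge -2SL$.

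Putting the two pieces together, $\E[\Re_T] \ge L(c\sqrt{N} - 2S)$. Choosing $N \asymp S^2/c^2$ balances the terms and yields $\E[\Re_T] = \Omega(T/S)$, provided $N \le T$ (i.e., $S = O(\sqrt T)$); in the complementary regime $S = \Omega(\sqrt T)$ the bound $\Omega(T/S)$ is subsumed by the standard $\Omega(\sqrt T)$ lower bound, which is recovered by instantiating the same construction with $N = T$ (per-round i.i.d.\ Rademacher losses). The main technical hurdle I expect is the switching lemma: it must be stated to accommodate randomized algorithms, which is done by folding the algorithm's internal randomness into the conditioning $\mathcal{F}_{k-1}$, thereby avoiding any separate appeal to Yao's principle.
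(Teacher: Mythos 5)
Your proposal is correct, and it takes a genuinely different route from the paper's. Both constructions partition $[T]$ into $\Theta(S^2)$ blocks of block-constant linear losses with random signs, but the arguments diverge from there. The paper uses a two-adversary indistinguishability argument: it pits $\Ber_{(1+\epsilon)/2}$ against $\Ber_{(1-\epsilon)/2}$ and shows via Pinsker's inequality and the KL chain rule that a player observing only $\Theta(S^2)$ independent signs cannot tell the two apart, so on most non-switching blocks she pays $\Omega(\epsilon)$ against at least one adversary. You instead use a single unbiased Rademacher adversary and argue directly: the comparator's expected advantage is $\Theta(L\sqrt{N})$ by Khintchine, while the algorithm's expected loss is lower bounded by $-O(SL)$ via a coupling argument showing that any block in which the algorithm's aggregate play responds to the block's sign (an ``adapting'' block with $X_k^+ \neq X_k^-$) must incur at least $1/2$ expected within-block switch; since inter-block switches are disjoint, at most $2S$ blocks can adapt in expectation, and non-adapting blocks contribute zero because $\E[\sigma_k X_k \mid \mathcal{F}_{k-1}, \text{internal randomness}] = \tfrac{1}{2}(X_k^+ - X_k^-)$. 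Your approach is more elementary, avoiding the information-theoretic machinery entirely, and gives a clean structural intuition (``adaptation costs switches''), whereas the paper's indistinguishability argument is a more standard template that would port more readily to settings (e.g., bandit feedback) where the per-round loss observed is not the full loss function. One small point worth tightening in your write-up: the per-block bound $\E[\sigma_k X_k \mid \cdot] \ge -(L-1)$ should be stated as $\ge -(L-1)\cdot I_k$ with $I_k$ the adapting indicator, so that the sum $\sum_k \E[\sigma_k X_k] \ge -(L-1)\,\E[\lvert A\rvert]$ is explicit; as written the role of the non-adapting blocks contributing exactly zero is left implicit.
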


In the strongly convex setting we prove two lower bounds, for both the adaptive and oblivious settings.
The result for the adpative case stated next, establishes the blocking technique to be optimal in this setting; 
for details, see \cref{sec:lb:sc_adaptive}.
\begin{theorem}
    \label{thm:sclb_adaptive}
    For any \Ssoco player $\Alg$ there exists a sequence of $1$-strongly-convex, $1$-Lipschitz, $1$-smooth losses $\{f_t\}$ over the domain $\Dom = [-1, 1]$ such that 
    $
        \Re_T = \Omega \left( T/S \right)
    $.
\end{theorem}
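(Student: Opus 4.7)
The proof plan is to reduce the lazy strongly-convex OCO game to a two-point decision problem and exhibit an adaptive adversary that forces $\Omega(T/S)$ regret against any $S$-lazy player $\Alg$. My starting point is a pigeonhole argument: since $\Alg$ is $S$-lazy, Markov's inequality implies $\Sw_T \leq 2S$ with constant probability over $\Alg$'s internal randomness, so conditioning on this event, the player's trajectory consists of at most $2S+1$ maximal constant pieces and hence at least one such piece has length $L \geq T/(2S+1) = \Omega(T/S)$. This long piece is the handle by which the adversary will extract the $\Omega(T/S)$ lower bound.

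The adversary plays quadratic losses $f_t(w) = \tfrac{1}{2}(w - z_t)^2$ with $z_t \in \{-1,+1\}$ chosen adaptively, maintaining $z_t$ constant across rounds in which the player does not switch sign, and flipping $z_t = -\operatorname{sign}(w_t)$ whenever the player moves. Since $\Alg$ changes sign at most $S$ times (switches are a superset of sign changes), the $z_t$-sequence partitions into at most $S+1$ constant epochs. On the long piece identified above, the player remains at some decision $w$ with $z_t = -\operatorname{sign}(w)$ throughout, incurring per-round loss $\tfrac{1}{2}(|w|+1)^2 \geq \tfrac{1}{2}$, for a piece-level loss contribution of $\Omega(T/S)$.

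To convert this block-level loss into regret, I would use the decomposition
\[
  \Re_T \;=\; \tfrac{1}{2}\sum_t w_t^2 \;-\; \sum_t w_t z_t \;+\; \frac{(\sum_t z_t)^2}{2T},
\]
which follows by expanding $f_t(w_t)$ and subtracting the hindsight optimum $T/2 - (\sum_t z_t)^2/(2T)$. The argument then splits into two cases. If the player commits to one sign over the majority of rounds, the adversary's $z_t$ is correspondingly one-signed, forcing $(\sum_t z_t)^2/(2T) = \Omega(T)$ and hence $\Re_T = \Omega(T) = \Omega(T/S)$. Otherwise the player alternates signs and the hindsight term is small, but then the long-piece contribution to $\tfrac{1}{2}\sum_t w_t^2 + |w_t|$ combined with the adaptive choice of $z_t$ yields $-\sum_t w_t z_t \geq \Omega(L) = \Omega(T/S)$.

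The main obstacle is the following evasion: the player may play tiny alternating values $\pm \epsilon$ close to zero, hoping to drive both $|w_t|$ and $(\sum z_t)^2$ to $o(T/S)$ simultaneously. Against a naive deterministic tie-breaking rule for $\operatorname{sign}(\cdot)$, the bound indeed degrades to $O(\epsilon T) \to 0$. Overcoming this requires either randomizing the sign tie-breaking at $w_t = 0$ (so that any near-zero alternation pays a constant per-round penalty in expectation) or, equivalently, coupling $z_t$ to the cumulative sign $\sum_{s \leq t}\operatorname{sign}(w_s)$ so that a balanced alternation of magnitude $\epsilon$ still leaves $\sum_t w_t z_t = -\Omega(\epsilon T)$ with $\epsilon = \Theta(1/S)$. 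Verifying that this refined adversary defeats every $S$-lazy strategy---including those that use the switch budget to track small-magnitude oscillations---is the technical heart of the argument; the remaining algebra (combining the two cases and un-conditioning the Markov event) is routine.
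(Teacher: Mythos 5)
Your approach is genuinely different from the paper's, and unfortunately the difference is where the gap lies. The paper does not use binary targets $z_t\in\{-1,+1\}$ and a global regret decomposition; instead it plays $f_t(w)=\tfrac12(w-x_t)^2$ with a \emph{continuous} $x_t\in[-1,1]$ chosen so that the running leader $w_t^*=\argmin_w\sum_{s\le t}f_s(w)$ drifts by exactly $\pm\tfrac{1}{2T}$ each round, always away from the player's current decision. It then invokes a ``reverse BTL'' lemma to replace the fixed comparator $w^*$ with the leader trajectory $\{w_t^*\}$ up to an $O(\log T)$ error, observes that on a stationary stretch of length $\tau$ the distance $|w_t-w_t^*|$ grows linearly and so the accumulated regret on that stretch is $\Omega(\tau^2/T)$, and finally applies Cauchy--Schwarz to $\sum_j\tau_j^2$ with $\sum_j\tau_j=T/2$ to get $\Omega(T/S)$. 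This bypasses entirely the need to control $(\sum_t z_t)^2$.

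The gap in your plan is concrete and not merely a loose end you can defer to ``routine algebra'': the $z_t=-\operatorname{sign}(w_t)$ adversary with binary targets is defeated by an $S$-lazy player that alternates $w_t=+\epsilon$ and $w_t=-\epsilon$ in $S$ equal blocks of length $T/S$, with $\epsilon\to 0$. Against that player your adversary is forced to anti-alternate $z_t$ block-by-block, so $\sum_t z_t\approx 0$, the hindsight optimum is $\approx T/2$, the player's cumulative loss is $\tfrac{T}{2}(1+\epsilon)^2$, and the regret is $\Theta(\epsilon T)+O(1)$, which is $o(T/S)$ for $\epsilon\ll 1/S$. Your suggested fix of randomizing the tie-break at $w_t=0$ is moot here because $w_t\ne 0$, and the ``couple $z_t$ to the cumulative sign'' idea remains a sketch; with targets confined to $\{-1,+1\}$ the hindsight optimum is $\bar z=\tfrac1T\sum_t z_t$ and I do not see how any choice of $z_t$-policy simultaneously (a) keeps $\bar z$ far from every near-zero, low-switching alternation and (b) respects the anti-correlation you rely on in your ``long piece'' bound. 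You would essentially have to rediscover the paper's idea of making the \emph{leader} (not the instantaneous target) drift in controlled $O(1/T)$ increments away from the stationary player, which is what continuous targets buy you and binary ones do not.
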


For the strongly convex oblivious setting, we prove a lower bound for a certain restricted class of players, characterized by \cref{assume:dists} below.
Loosely speaking, these are algorithms that employ continuous decision distributions with a constant portion of mass within one standard deviation from their mean.

\begin{assumption}\label{assume:dists}
    There exist constants $\cTV \geq 1 \geq \cTVv > 0$, such that  
    every pair of player decision distributions $p_1, p_2 \in \Delta[-1, 1]$, with means $\mu_1, \mu_2$ and variances $\sigma_1^2 \leq \sigma_2^2$, satisfy
    \begin{enumerate}
        \item $\TV{p_1 - p_2} \geq 
        \frac{1}{\cTV} \min\Big\{ 1, 
                \frac{|\mu_1 - \mu_2|}{\sigma_1 + \sigma_2}
        \Big\},\;\;$ or $\sigma_1 + \sigma_2 \geq \frac{1}{\cTV}$; and
        \item $\TV{p_1 - p_2} \geq 
        \frac{1}{\cTV} \min\Big\{ 1, 
        \cTVv - \frac{\sigma_1}{\sigma_2}
        \Big\}$, or $\sigma_1 \geq 1$.
    \end{enumerate}
  \end{assumption}
It is not hard to show the above assumption holds for the family of uniform distributions, and the work of \cite{devroye2018total} establishes similar properties for Gaussian distributions. In \cref{sec:assume_dists_discuss} we prove this property is satisfied by a more general class of distributions.
Subject  to \cref{assume:dists}, we are able to show our upper bound (\cref{thm:lftprl_sc}) is tight, as established by our next and final theorem.

\begin{theorem}\label{thm:lbsc_oblivious}
For any $S$-lazy OCO algorithm $\Alg$ with decision distributions that satisfy \cref{assume:dists}, there exists a sequence of $1$-strongly convex, $1$-Lipschitz, $1$-smooth losses on which the regret of $\Alg$ is $\Omega(T/S^2)$.
\end{theorem}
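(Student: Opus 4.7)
The plan is to construct an oblivious stochastic sequence of strongly convex losses that forces any $S$-lazy OCO algorithm whose decision distributions satisfy \cref{assume:dists} to incur $\Omega(T/S^2)$ regret. By \cref{thm:stochastic}, Lazy SGD achieves $O(\log^2 T)$ regret with $O(\log T)$ switches on any i.i.d.\ stochastic instance, so the adversary cannot rely on an i.i.d.\ sequence---some non-stationary structure is essential, and \cref{assume:dists} must be used in a non-trivial way.

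First, I would reduce to the quadratic family $f_t(w) = \tfrac12(w - c_t)^2$ on $\Dom = [-1,1]$, with $c_t \in \{-1,+1\}$ drawn from a distribution parameterized by a hidden random sign $b \in \{-1,+1\}$ chosen uniformly, designed so that $\bar c \approx b\epsilon$ for $\epsilon = \Theta(1/S)$. The algebraic identity
\begin{align*}
    \sum_{t=1}^T f_t(w_t) - \min_{w \in \Dom} \sum_{t=1}^T f_t(w)
    = \tfrac12 \sum_{t=1}^T (w_t - \bar c)^2 - \sum_{t=1}^T (w_t - \bar c)(c_t - \bar c),
\end{align*}
combined with an adversary design making the cross term vanish in conditional expectation given $b$, yields the bias-variance lower bound
\begin{align*}
    \E \Re_T \geq \tfrac12 \sum_{t=1}^T \bigl[(\mu_t - w^*)^2 + \sigma_t^2\bigr],
\end{align*}
where $\mu_t, \sigma_t^2$ are the mean and variance of the player's decision distribution $p_t$, and $w^* = \bar c$.

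Next, the switching constraint translates to a constraint on $\{p_t\}$: by optimal coupling $\E \Sw_T \geq \sum_{t=1}^{T-1} \TV{p_t - p_{t+1}}$, whence $\sum_t \TV{p_t - p_{t+1}} \leq S$. Under \cref{assume:dists}, this restricts the evolution of $(\mu_t, \sigma_t)$: part~1 caps the mean drift per unit TV (below the variance escape threshold) in terms of $\sigma_t + \sigma_{t+1}$, and part~2 caps the rate at which the variance may shrink.

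The proof then splits by dichotomy on $\sigma^* := \max_t \sigma_t$, with $\epsilon = c_0/S$ for a small constant $c_0>0$. If $\sigma^* \geq \epsilon$, then part~2 of \cref{assume:dists} together with the TV budget forces $\sigma_t^2 = \Omega(\epsilon^2)$ on a constant fraction of rounds, giving $\sum_t \sigma_t^2 = \Omega(T/S^2)$ directly. Otherwise $\sigma^* < \epsilon$, and part~1 of \cref{assume:dists} combined with the TV budget yields the telescoping bound $\sum_t |\mu_{t+1} - \mu_t| = O(1)$; a hypothesis-testing/symmetry argument over $b \in \{\pm 1\}$ then forces $(\mu_t - b\epsilon)^2 = \Omega(\epsilon^2)$ in expectation over $b$ and averaged over $t$, again summing to $\Omega(T/S^2)$.

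The main obstacle is the final hypothesis-testing step: the adversary must design $\{c_t\}$ so that identifying $b$ requires more information than the algorithm can incorporate into its $(\mu_t, \sigma_t)$ trajectory under the rigidity imposed by \cref{assume:dists}. An i.i.d.\ construction would fail here because an unrestricted algorithm could learn $b$ in $O(S^2)$ rounds and then commit to a decision near $b\epsilon$; \cref{assume:dists} is what rules out such degenerate commitments---bimodal or point-mass distributions centered on the two possible $w^*$ values---and formalizing the exact quantitative tradeoff between information gain and $(\mu_t, \sigma_t)$ updates is the crux of the proof.
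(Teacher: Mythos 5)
Your construction is fundamentally incapable of giving the claimed lower bound, and the gap you flag at the end (``the crux of the proof'') cannot be closed with an i.i.d.\ adversary, even given \cref{assume:dists}. Here is a concrete counterexample to your plan. Fix any tiny $r>0$, and let $\Alg_r$ play $p_t = \Unif[\mu_t - r, \mu_t + r]$, where $\mu_1 = 0$ and $\mu_t$ is reset to the running empirical mean of $c_1,\dots,c_{t-1}$ at the $S$ evenly spaced rounds $t_j = \lceil jT/S\rceil$, $j=1,\dots,S$. Every pair of uniforms satisfies \cref{assume:dists} with appropriate $\cTVv, \cTV$ (this is exactly the paper's \cref{sec:assume_dists_discuss}), and $\E\Sw_T \le S$ since the distribution only changes $S$ times. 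Against your hidden-sign adversary with $\epsilon = \Theta(1/S)$, your own bias--variance formula gives per-round regret $\tfrac12[\sigma_t^2 + (\mu_t - b\epsilon)^2]$. The variance contribution is $T r^2/3 \to 0$ as $r\to 0$. The bias contribution is $(T/S)\epsilon^2 = O(T/S^3)$ in the first block (where $\mu=0$) plus $\sum_{j\ge 1} (T/S)\cdot O(S/(jT)) = O(\log S)$ thereafter, since after $jT/S$ full-information observations the empirical mean estimates $b\epsilon$ to within $O(\sqrt{S/(jT)})$. Thus $\Alg_r$ achieves $O(T/S^3 + \log S)$ regret, which is $o(T/S^2)$ — so no i.i.d.\ construction of this form can force $\Omega(T/S^2)$. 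Your ``dichotomy on $\sigma^*$'' fails concretely: in case~(a), one round of large variance costs only $O(1)$ TV to shrink away; in case~(b), moving $\mu$ a total distance $O(\epsilon)=O(1/S)$ is well within your own $\sum_t|\mu_{t+1}-\mu_t| = O(1)$ budget, so after a short learning phase the player simply commits near $b\epsilon$ with negligible bias and negligible variance. \cref{assume:dists} does not rule this out.

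The paper avoids this dead end by discarding stochastic constructions entirely. It builds a \emph{deterministic} loss sequence $f_t(w) = \tfrac12(w - x_t)^2$ tailored to $\Alg$: since $\Alg$'s decision distribution $p_t$ is a deterministic function of $f_1,\dots,f_{t-1}$, the adversary can precompute all the means $\mu_t$ and choose each $x_t$ so that the running leader $w_t^* = \tfrac1t\sum_{s\le t}x_s$ steps by $\pm 1/(2T)$ \emph{away} from $\mu_t$. The target is therefore non-stationary by design, and the per-round bound (\cref{lem:lb_sc_roundreg}) gains a third term $(1-t)\delta_t(\mu_t - w_{t-1}^*) \ge 0$ that your fixed-$w^*$ decomposition does not have. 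The rest of the proof decomposes the game into monotone leader sections: on each section, if the player does not switch much she either fails to cover the leader's drift (bias $\gtrsim (\tau\delta)^2$) or must carry variance $\gtrsim (\tau\delta/\zeta)^2$ to do so under \cref{assume:dists}, and a final convexity/KKT step (\cref{lem:lbsc_kkt}) over the sections yields $\Omega(T/S^2)$. The ``tracking a moving leader'' element is essential and is precisely what your proposal is missing.
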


\subsection*{Acknowledgements}
We would like to express our gratitude to Naaman Agarwal, Karan Singh, Satyen Kale, and Abhradeep Guha Thakurta for recognizing an error in an earlier version of this manuscript and sharing with us their ideas leading to its resolution.
In addition, we thank Maayan Gal and Lior Ziv for pointing out an unnecessary log factor in an earlier version of this manuscript.
This work was partially supported by the Israeli Science Foundation (ISF) grant no.~2549/19, by the Len Blavatnik and the Blavatnik Family foundation, and by the Yandex Initiative in Machine Learning. 

\bibliographystyle{abbrvnat}
\bibliography{main}

\appendix

\section{Lower Bounds - Proofs}
\label{sec:lb_proofs}
In this section, we provide detailed proofs of lower bounds presented in \cref{sec:lb_statements}.

\subsection{General convex, oblivious adversary} 
\label{sec:lb_convex_proofs}
\label{sec:lb_general}

In this section, we establish
an $\Omega\left(T / S \right)$ lower bound on the expected regret of any \Ssoco algorithm in the general convex setting. We denote by $\Ber_p$ the Bernoulli distribution over $\{-1, 1\}$ that takes the value $1$ w.p. $p\in(0,1)$, and by $\Ber_p^j$ the joint distribution of $j$ independent samples from $\Ber_p$.
First consider the standard unconstrained setup in the scalar case $\Dom = [-1, 1]$. For $p,q\in(0, 1)$ sufficiently close, an adversary
that plays $f_t(w) = b_t w$, with $b_t \sim \Ber_p$ is indistinguishable from one that draws $b_t \sim \Ber_q$, and an $\Omega(\sqrt T)$ bound may be established.
In the \soco setting, the switching limit gives room for the adversary to repeat losses, effectively decreasing the amount of samples revealed and thereby allowing for a larger deviation between the loss distributions while maintaining their indistinguishability. 
The proof of \cref{thm:lb_convex} given next, provides a formal construction of this nature.

\begin{proof} [of Theorem \ref{thm:lb_convex} ]
    Fix $T\in \mathbb N$, and let $\Alg$ be an arbitrary \Ssoco algorithm.
    For any $p\in(0,1)$ we define the adversarial construction of the stochastic loss sequence $\mathcal F(p, S)$ as follows.
    Split the $T$ rounds into $J:=C^2 S^2$ sections with $\tau := \frac{T}{J}$ consecutive rounds in each, where $C\in \R$ is a universal constant that will be determined later on. At the onset of each section $j \in [J]$, draw a single sample $b_j \sim \Ber_p$ and play
    \[
        f_t(w) = b_j w \quad \forall t \in [(j-1) \tau + 1, j \tau].
    \]
    That is, the adversary commits to the loss given by the single sample $b_j$ for the entirety of section $j$. This concludes the adversarial construction $\mathcal F(p, S)$. 
    Moving forward, we consider the minimizer of the expected cumulative loss
    \[
        w^* := \argmin_{w\in W} \Expp{p}{\sum_{t=1}^T f_t(w)},
    \]
    where we use the subscript $p$ under expectation to signify $\{f_t\}_{t=1}^T$ are distributed according to $\mathcal F(p, S)$.
    Clearly, $w^*$ can perform no better than the realized minimizer in hindsight, and therefore it suffices to prove our lower bound with respect to it;
    \begin{align*}
        \E \Re_T  
            &= \Exp{\sum_{t=1}^T f_t(w_t) - \min_{w\in W}\sum_{t=1}^T f_t(w)}
            \\
            &\geq \Exp{\sum_{t=1}^T f_t(w_t) - \sum_{t=1}^T f_t(w^*)}.
    \end{align*}
    Now, set $\epsilon := \frac{1}{8 C S}$ and consider the two adversaries given by $\mathcal F(p_+, S)$ and $ \mathcal F(p_-, S)$, where
    \begin{align*}
        p_+ := \frac{1 + \epsilon}{2},
        \quad \text{and} \quad  
        p_- := \frac{1 - \epsilon}{2}.
    \end{align*}
    Respectively, denote the minimizer of the expected loss of each adversary by
    \begin{align*}
        w_+^* := \argmin_{w\in W} \Expp{p_+}{ \sum_{t=1}^T f_t(w)},
        \quad \text{and} \quad 
        w_-^* := \argmin_{w\in W} \Expp{p_-}{ \sum_{t=1}^T f_t(w)}. 
    \end{align*}
    The following lemma establishes that on every round where the player's decision and the loss are independent, the regret incurred against at least one of these adversaries will be $\Omega(\epsilon)$.
    \begin{lemma}\label[lem]{lem:lb_weak_main} 
        Denote by $w_t$ the decision of $\Alg$ on round $t \in [T]$, and assume $f_t$ and $w_t$ are independent. Then at least one of the following bounds holds;
        \begin{align*}
            \Expp{p_+}{f_t(w_t) - f_t(w_+^*)} \geq \frac{\epsilon}{4}
            \qquad \textit{or} \qquad
            \Expp{p_-}{f_t(w_t) - f_t(w_-^*)} \geq \frac{\epsilon}{4}.
        \end{align*}
    \end{lemma}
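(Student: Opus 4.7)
\textbf{Plan for \cref{lem:lb_weak_main}.} My approach is a Le Cam-style two-point test. First, identify the minimizers: under $\mathcal F(p_\pm, S)$ we have $\Expp{p_\pm}{f_t(w)} = \pm \epsilon w$ on $[-1, 1]$, so $w_+^* = -1$ and $w_-^* = +1$. Writing $a := \Expp{p_+}{w_t}$ and $b := \Expp{p_-}{w_t}$, the assumed independence of $w_t$ from $f_t$ factors the cross term $\E[b_j w_t] = \E[b_j]\E[w_t]$, which immediately rewrites the instantaneous expected regret as
\[
    \Expp{p_+}{f_t(w_t) - f_t(w_+^*)} = \epsilon(1 + a), \qquad \Expp{p_-}{f_t(w_t) - f_t(w_-^*)} = \epsilon(1 - b).
\]

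I would then argue by contradiction: if both quantities are strictly below $\epsilon/4$, this forces $a < -3/4$ and $b > 3/4$, so $b - a > 3/2$. The remainder of the proof is devoted to ruling this out via an upper bound on $|a - b|$.

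Let $\mathbb P_\pm$ denote the law of the full trajectory under $\mathcal F(p_\pm, S)$. Since losses within a section are identical, this trajectory is a measurable function of at most $J = C^2 S^2$ i.i.d.\ Bernoulli samples (one per section), together with the algorithm's internal randomness. Because $w_t \in [-1,1]$ is itself a measurable function of the trajectory, data processing yields $|a - b| \leq 2\TV{\mathbb P_+ - \mathbb P_-} \leq 2 \TV{\Ber_{p_+}^{J} - \Ber_{p_-}^{J}}$. A direct calculation gives $\KL{\Ber_{p_+}}{\Ber_{p_-}} = \epsilon \log \frac{1+\epsilon}{1-\epsilon} = O(\epsilon^2)$, so KL tensorization combined with Pinsker's inequality (\cref{eq:pinsker}) produces $\TV{\mathbb P_+ - \mathbb P_-} \leq O(\epsilon\sqrt{J})$. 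Plugging in $\epsilon = 1/(8CS)$ and $J = C^2 S^2$ collapses this to a small absolute constant (concretely $\leq 1/8$ up to lower-order terms), so $|a - b| \leq 1/4$, contradicting $b - a > 3/2$.

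The main obstacle, such as it is, is numerical rather than conceptual: one has to track constants in $\KL{\Ber_{p_+}}{\Ber_{p_-}}$ and in Pinsker's inequality to confirm that $|a-b|$ actually falls below the threshold $3/2$ produced by the first step. This calibration is precisely why the construction fixes $\epsilon \propto 1/(CS)$ and $J \propto (CS)^2$: these are the unique scalings under which the two product Bernoullis remain statistically $\Theta(1)$-indistinguishable while still leaving an $\Omega(\epsilon)$ regret penalty on every qualifying round.
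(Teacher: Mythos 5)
Your proof is correct and follows essentially the same two-point contradiction argument as the paper, invoking the same key lemma (\cref{lem:ber_tv}) to show that $\Ber_{p_+}^J$ and $\Ber_{p_-}^J$ are indistinguishable. The only difference is in the middle step: the paper applies Markov's inequality to convert $\Expp{p_+}{1 + w_t} < 1/4$ and $\Expp{p_-}{1 - w_t} < 1/4$ into tail-probability bounds $\Pr_{p_+}(w_t > 0) < 1/4$, $\Pr_{p_-}(w_t < 0) < 1/4$, and then uses the event $\{w_t > 0\}$ to witness $\TV{\cdot} \geq 1/2$; you instead directly bound $|\Expp{p_+}{w_t} - \Expp{p_-}{w_t}| \leq 2\,\TV{\mathbb P_+ - \mathbb P_-}$ using that $w_t$ is a $[-1,1]$-valued functional, which is a slightly cleaner route to the same contradiction. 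Both variants are valid; your numerical tracking has a small imprecision (the lemma yields $\TV \leq \sqrt{2}/8$, so $|a-b| \leq \sqrt{2}/4$ rather than $1/4$), but this is far below the $3/2$ threshold, so the conclusion is unaffected.
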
    

    Importantly, since the player is allowed far fewer switches $(S)$ than there are sections $(J = C^2 S^2)$, it follows that for most sections the player's decision is indeed independent of the loss. Proceeding, we consider the decomposition of her regret into two terms; 
    \[
        \Re_T = \Re_{pos} + \Re_{neg}.
    \]
    The positive regret term $\Re_{pos}$ includes all rounds belonging to sections where the player did not switch. These are precisely the rounds on which her decision is independent of the loss. The negative regret term $\Re_{neg}$ on the other hand, includes all other rounds belonging to sections on which at least one decision switch was performed. On any such section, the loss suffered is trivially lower bounded by $-1$, which implies
    \[
        \E \Re_{neg} 
        \geq \E[-\Sw_T \cdot \tau] 
        \geq - S \frac{T}{C^2 S^2}
        = -\frac{T}{C^2 S}.
    \]
    (Recall $\Sw_T$ denotes the random variable number of switches performed by $\Alg$.)
    This leaves $J - \Sw_T$ sections that work in favor of the adversary and contribute (positively) to $\Re_{pos}$. On every round of any such section, \cref{lem:lb_weak_main} applies and the player must suffer a $\frac{\epsilon}{4}$ regret penalty from at least one of the adversaries. This implies
    \[
        \E  \Re_{pos}  
        \geq \E\Bigg[
            \frac{(J - \Sw_T) \tau \frac{\epsilon}{4}}{2}
        \Bigg]
        \geq 
            \frac{(J - S) \tau \frac{\epsilon}{4}}{2}
        = \frac{ T }{64 C S} - \frac{ T }{64 C^3 S^2}.
    \]
    Now, for a choice of $C = 128$ we obtain
    \[
        \E \Re_T   = \Exp{ \Re_{neg} + \Re_{pos} } \geq 
        \frac{T}{CS} \left(\frac{1}{64} - \frac{1}{64 C^2 S} - \frac{1}{C} \right)
        > \frac{T}{200 C S},
    \]
    which completes the proof.
\end{proof}

The proof of \cref{lem:lb_weak_main} hinges on \cref{lem:ber_tv} stated below, whose proof follows from well known information theoretic arguments and is given in \cref{sec:lem_tech} for completeness.
\begin{lemma} \label[lem]{lem:ber_tv}
    Let $\epsilon \in (0, \frac{1}{4})$, and set $p = \frac{1 + \epsilon}{2}, q = \frac{1 - \epsilon}{2}$. It holds that
    \[
        \norm{\Ber_{p}^j - \Ber_{q}^j}_{TV}
        \leq \epsilon \sqrt{2 j}.
    \]
\end{lemma}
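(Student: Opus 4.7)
The plan is to derive the bound via Pinsker's inequality (equation~\eqref{eq:pinsker}) combined with the tensorization of KL-divergence across independent coordinates. Concretely, I would first invoke Pinsker to write $\|\Ber_p^j - \Ber_q^j\|_{TV} \leq \sqrt{\tfrac{1}{2} D_{KL}(\Ber_p^j \,\|\, \Ber_q^j)}$, and then use the chain rule for KL on product measures to tensorize: since both product distributions consist of $j$ i.i.d.\ coordinates, $D_{KL}(\Ber_p^j \,\|\, \Ber_q^j) = j \cdot D_{KL}(\Ber_p \,\|\, \Ber_q)$. Thus it suffices to show that the single-coordinate KL is at most $4\epsilon^2$, which would yield $\|\Ber_p^j - \Ber_q^j\|_{TV} \leq \sqrt{\tfrac{1}{2} \cdot 4 j \epsilon^2} = \epsilon \sqrt{2j}$, exactly the target bound.

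The next step is the direct computation of $D_{KL}(\Ber_p \,\|\, \Ber_q)$. With $p = (1+\epsilon)/2$ and $q = (1-\epsilon)/2$, a key symmetry kicks in: the coefficients on $\log(p/q)$ and $\log((1-p)/(1-q))$ differ only in sign after exchanging numerator and denominator, so the two log terms collapse into a single log and I get
\[
    D_{KL}(\Ber_p \,\|\, \Ber_q) = \frac{1+\epsilon}{2}\log\frac{1+\epsilon}{1-\epsilon} + \frac{1-\epsilon}{2}\log\frac{1-\epsilon}{1+\epsilon} = \epsilon \log \frac{1+\epsilon}{1-\epsilon}.
\]

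Then I need an elementary analytic bound of the form $\log\frac{1+\epsilon}{1-\epsilon} \leq 4\epsilon$ valid on the whole range $\epsilon \in (0, 1/4)$. I would derive this via the Taylor expansion $\log\frac{1+\epsilon}{1-\epsilon} = 2\sum_{k\geq 0} \epsilon^{2k+1}/(2k+1) \leq 2\epsilon \sum_{k\geq 0} \epsilon^{2k} = \frac{2\epsilon}{1-\epsilon^2}$, and note that for $\epsilon \leq 1/4$ we have $2/(1-\epsilon^2) \leq 32/15 < 4$. Substituting back gives $D_{KL}(\Ber_p \,\|\, \Ber_q) \leq 4\epsilon^2$, and chaining with the first step concludes the proof.

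There isn't really a hard step here: everything is a routine information-theoretic calculation, and the hypothesis $\epsilon < 1/4$ is comfortably loose for the elementary $\log$-bound. The only place that requires any minor care is making sure the constants align so the final bound comes out as $\epsilon\sqrt{2j}$ rather than something larger; this is why I target the clean estimate $D_{KL}(\Ber_p\,\|\,\Ber_q) \leq 4\epsilon^2$, since after Pinsker's factor of $1/2$ and tensorization the $4$ becomes a $2$ inside the square root.
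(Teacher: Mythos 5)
Your proposal is correct and follows essentially the same route as the paper: Pinsker's inequality, tensorization of the KL-divergence over the $j$ i.i.d.\ coordinates, the same closed-form single-coordinate value $D_{KL}(\Ber_p\,\|\,\Ber_q)=\epsilon\log\frac{1+\epsilon}{1-\epsilon}$, and an elementary bound showing this is at most $4\epsilon^2$. The only (immaterial) difference is in the last step, where you bound the logarithm via its Taylor series while the paper uses $\log(1+y)\leq y$ after rewriting the ratio.
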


\begin{proof}[of \cref{lem:lb_weak_main}]
    By assumption, $w_t$ and $f_t$ are independent, therefore $b_t$ is Bernoulli distributed with the appropriate parameter for both adversaries. Hence, for $p\in \{p_+, p_-\}$ we have
    \[
        \Expp{p}{f_t(w)} = (2p - 1)w,
    \]
    and
    \begin{align*}
        \Expp{p_+}{f_t(w_+^*)} = \Expp{p_-}{f_t(w_-^*)} = -\epsilon.
    \end{align*}
    Therefore
    \begin{align*}
        \Expp{p_+}{f_t(w_t) - f_t(w_+^*)} &= \epsilon\Expp{p_+}{1 + w_t}
        \\
        \Expp{p_-}{f_t(w_t) - f_t(w_-^*)} &= \epsilon\Expp{p_-}{1 - w_t}.
    \end{align*}
    Now, assume by contradiction that both bounds fail to hold. Then
    \begin{align*}
        \Expp{p_+}{1 + w_t} < \frac{1}{4}
        ; \qquad
        \Expp{p_-}{1 - w_t} < \frac{1}{4},
    \end{align*}
    and by Markov's inequality
    \begin{align*}
        \Pr_{p_+}( w_t > 0 ) < \frac{1}{4}
        ; \qquad
        \Pr_{p_-}( w_t < 0 ) < \frac{1}{4}.
    \end{align*}
    Hence, considering the event the player outputs a decision $w_t > 0$, we arrive at the conclusion the total variation between the distributions of the two adversaries is $\geq \frac{1}{2}$.
    We now apply \cref{lem:ber_tv} to arrive at contradiction;
    \begin{align*}
        \frac{1}{2}
            &\leq \Pr_{p_-} ( w_t > 0 ) - \Pr_{p_+} ( w_t > 0 )
            \\
            &\leq \norm{\Ber_{p_+}^{J} - \Ber_{p_-}^{J}}_{TV}
            \\
            &\leq \epsilon \sqrt{2 J}
            \\
            &= \frac{ \sqrt{2 C^2 S^2}}{8 C S} \leq \frac{1}{4},
    \end{align*}
    and the proof is complete.
 \end{proof}


\subsection{Strongly convex losses, adaptive adversary}
\label{sec:lb:sc_adaptive}

In this section, we prove an $\Omega(T/S)$ lower bound for an adaptive adversary with strongly convex losses.
This result establishes the blocking technique (see e.g., \citealp{arora2012online, chen2019minimax}, where it is termed mini-batching) to be optimal in this setting; It is well known (see e.g., \citealp{hazan2019introduction}) that for $G$-Lipschitz $\lambda$-strongly-convex losses the regret guarantee of OGD with decreasing step size $\eta_t=1/\lambda t$ is $O (\frac{G^2}{\lambda}\log T)$. By applying the blocking technique in this setting, we arrive at an $S$ round unconstrained game with $(T/S)G$-Lipschitz $(T/S)\lambda$-strongly-convex losses, and with a properly adjusted step size we obtain a $O((T/S)\log S)$ regret guarantee.
At a high level, we make two observations. The first is that in order to exploit strong convexity and obtain low regret, following the leader is practically mandatory in the standard setup. The next lemma formalizes this idea; The leaders loss is only within an $O(\log T)$ additive advantage over the best fixed decision in hindsight, therefore any low regret algorithm must also perform well compared to the leaders.
\begin{lemma} [Reverse BTL-lemma]
    \label[lem]{lem:rev_btl} 
    Let $f_t$ be a sequence of $1$-strongly-convex $1$-Lipschitz losses. Denote by $w_t^* := \argmin_{w \in \Dom} \sum_{s=1}^t f_t(w)$ the leader at time $t$. Then
    \[
        \sum_{t=1}^T f_t(w_T^*) - \sum_{t=1}^T f_t(w_t^*) \leq 2 \sum_{t=1}^T \frac{1}{t}.
    \]
\end{lemma}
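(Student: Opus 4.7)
The plan is to induct on $T$ through a telescoping identity on the quantity
\[
  A_T \;:=\; \sum_{t=1}^T f_t(w_T^*) - \sum_{t=1}^T f_t(w_t^*).
\]
Writing $F_t(w) := \sum_{s=1}^t f_s(w)$, which is $t$-strongly-convex by assumption, one immediately checks that $A_T - A_{T-1} = F_{T-1}(w_T^*) - F_{T-1}(w_{T-1}^*)$ (the leader contributions telescope with the $f_T(w_T^*)$ term). The base case $A_1 = 0$ is trivial, so it suffices to show that each increment satisfies $A_T - A_{T-1} \leq 1/T$, which will yield the claim with room to spare.

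To bound the increment, I would combine strong convexity of $F_T$ at its minimizer $w_T^*$ with $1$-Lipschitzness of $f_T$. Concretely, first-order optimality of $w_T^*$ over $\Dom$ gives $\nabla F_T(w_T^*)\T(w_{T-1}^* - w_T^*) \geq 0$, and then $T$-strong-convexity of $F_T$ yields
\[
  F_T(w_{T-1}^*) - F_T(w_T^*) \;\geq\; \tfrac{T}{2}\,\|w_{T-1}^* - w_T^*\|^2.
\]
Rewriting $F_T = F_{T-1} + f_T$ and rearranging turns this into
\[
  F_{T-1}(w_T^*) - F_{T-1}(w_{T-1}^*) \;\leq\; \bigl[f_T(w_{T-1}^*) - f_T(w_T^*)\bigr] - \tfrac{T}{2}\,\|w_{T-1}^* - w_T^*\|^2.
\]
Using $1$-Lipschitzness of $f_T$ to bound the bracketed term by $\|w_{T-1}^*-w_T^*\|$, and maximizing $x - \tfrac{T}{2}x^2$ over $x \geq 0$, I get the crucial per-step bound $A_T - A_{T-1} \leq 1/(2T)$.

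Summing yields $A_T \leq \sum_{t=1}^T 1/(2t)$, which is in fact stronger than the stated bound $2\sum_{t=1}^T 1/t$, so the claim follows. I do not anticipate a real obstacle here: the only subtlety is handling possibly-boundary minimizers, which is dealt with cleanly by invoking the variational (first-order) optimality condition rather than $\nabla F_T(w_T^*)=0$; the rest is an elementary one-dimensional optimization.
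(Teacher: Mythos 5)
Your argument is correct and mirrors the paper's proof: both induct on $T$ and bound the increment $A_T - A_{T-1} = F_{T-1}(w_T^*) - F_{T-1}(w_{T-1}^*)$ using the minimality of $w_T^*$ for $F_T$ together with the $1$-Lipschitzness of $f_T$. Your version is a small refinement of the same step: by keeping the $\frac{T}{2}\norm{w_T^*-w_{T-1}^*}^2$ term coming from $T$-strong convexity at the minimizer and then optimizing over the unknown distance, you obtain the per-step bound $1/(2T)$, whereas the paper drops that quadratic term and instead invokes \cref{lem:sc_stab} to bound $\norm{w_T^*-w_{T-1}^*}\leq 2/T$, giving the weaker per-step constant $2/T$.
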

The second observation is that on any round, an adaptive adversary may move the leader away from the player by $\Omega(1/T)$. Since the player has a limited number of switches, there will be several long sections where these small steps accumulate and result in large loss.
The proof of the theorem stated below and its proof formalize the above idea.

\begin{proof}[of \cref{thm:sclb_adaptive}]
    We consider losses of the form
    \[
        f_t(w) := \frac{1}{2}(w - x_t)^2; \quad x_t \in [-1, 1].
    \]
    As a direct implication of \cref{lem:rev_btl} we have that
    \[
        \sum_{t=1}^T f_t(w_t) - f_t(w^*) \geq \sum_{t=1}^T (f_t(w_t) - f_t(w_t^*)) - 2 \log T,
    \]
    and therefore we will be interested in lower bounding the regret with respect to the leaders. 
    On every round $t$, we choose $x_t$ such that the leader moves $1/2T$ \textbf{away} from $w_t$;
    \begin{align*}
        x_t := \begin{cases}
            w_{t-1}^* + \frac{t}{2T} \qquad w_t \leq w_{t-1}^*,
            \\
            w_{t-1}^* - \frac{t}{2T} \qquad w_t > w_{t-1}^*.
        \end{cases}
    \end{align*}
    This implies
    \[
        w_t^* = \frac{t-1}{t}w_{t-1}^* + \frac{1}{t} x_t
        = \begin{cases}
            w_{t-1}^* + \frac{1}{2T} \qquad w_t \leq w_{t-1}^*,
            \\
            w_{t-1}^* - \frac{1}{2T} \qquad w_t > w_{t-1}^*.
        \end{cases}
    \]
    Since $w_t^*$ never leaves $[-\frac{1}{2}, \frac{1}{2}]$, we can be certian $x_t\in [-1, 1]$ throughout the game and therefore the construction is valid.
    We will now limit our attention to the second half of the game and assume w.l.o.g that $S$ switches occur there;
    \begin{align}
        \Re_T + 2 \log T 
        &\geq \sum_{t=1}^T f_t(w_t) - f_t(w_t^*) \nonumber
        \\
        &\geq \sum_{t > T/2 } f_t(w_t) - f_t(w_t^*) \nonumber
        \\
        &= \sum_{j=1}^S \sum_{t=s_j}^{s_j + \tau_j} f_t(w_{s_j}) - f_t(w_t^*) \label{eq:sclb:1}
        ,
    \end{align}
    where $s_j \in \{\lceil T/2 \rceil, \ldots, T\}$ denote the player's switch rounds. To show that on each stationary section $s_j \rightarrow s_j + \tau_j$ the player's loss is $\Omega(\tau_j^2/T)$, first observe that
    \begin{align*}
        f_t(w_t) - f_t(w_t^*) 
        &= \frac{1}{2}(w_t - x_t)^2 - \frac{1}{2}(w_t^* - x_t)^2 
        \\
        &= \frac{1}{2}\left[ (w_t - w_t^*)^2 + 2(w_t - w_t^*)(w_t^* - x_t) \right]
        \\ 
        &\geq (w_t - w_t^*)(w_t^* - x_t)
        \\
        &\geq \frac{|w_t - w_t^*|}{2}. 
    \end{align*}
    In addition, whenever the player stays stationary the above distances accumulate;
    \[
        |w_k - w_{k+t}^*| \geq |w_k^* - w_{k+t}^*| = \frac{t}{2 T}.
    \]
    Hence, for all $j\in[S]$ we have
    \begin{align*}
        \sum_{t=s_j}^{s_j + \tau_j} f_t(w_{s_j}) - f_t(w_t^*) 
        \geq \sum_{t=s_j}^{s_j + \tau_j} \frac{|w_t - w_t^*|}{2}
        \geq \frac{1}{2} \sum_{t=1}^{\tau_j} \frac{t}{2 T}
        \geq \frac{\tau_j^2}{8 T}.
    \end{align*}
    Substituting the above into \cref{eq:sclb:1} we obtain
    \[
        \Re_T + 2 \log T \geq \frac{1}{8 T} \sum_{j=1}^S \tau_j^2.
    \]
    To complete the proof, we recall that $\sum_{j=1}^S \tau_j = T/2$, and invoke the following well known fact. Let $v\in \R^S$ with $\sum_{i=1}^S v_i = C$, and $u\in \R^S$ with $u_i=\frac{C}{S}$ for all $i\in[S]$, then
    \[
        \norm{v}^2 
        = \norm{v - u + u}^2
        = \norm{v - u}^2 + \norm{u}^2
        \geq S\frac{C^2}{S^2} = \frac{C^2}{S}.
    \]
    This implies
    \[
        \Re_T + 2 \log T \geq \frac{1}{8 T} \frac{T^2}{4 S} = \frac{T}{32 S},
    \]
    and we are done.
\end{proof}

\begin{proof}[of \cref{lem:rev_btl} (Reverse BTL-Lemma)]
    We prove by induction on $T$. The base case is obvious, for the inductive step observe;
    \begin{align*}
        \sum_{t=1}^T f_t(w_T^*) - \sum_{t=1}^T f_t(w_t^*)
        &= \sum_{t=1}^{T-1} f_t(w_T^*) - \sum_{t=1}^{T-1} f_t(w_t^*) \\
        &= \sum_{t=1}^{T-1} f_t(w_T^*) - f_t(w_{T-1}^*) 
            + \sum_{t=1}^{T-1} f_t(w_{T-1}^*) - f_t(w_t^*) \\
        &\leq \sum_{t=1}^{T-1} f_t(w_T^*) - f_t(w_{T-1}^*) + \sum_{t=1}^{T-1} \frac{2}{t},
    \end{align*}
    where the last inequality follows from the inductive hypothesis.
    In addition, we have
    \begin{align*}
        \sum_{t=1}^{T} f_t(w_T^*) &\leq \sum_{t=1}^{T}  f_t(w_{T-1}^*) \\ 
        \iff 
        \sum_{t=1}^{T-1} f_t(w_T^*) - f_t(w_{T-1}^*) 
        &\leq f_T(w_{T-1}^*) - f_T(w_T^*)
        \leq \frac{2}{T},
    \end{align*}
    with the last inequality follows from the fact that $\norm{w_{T-1}^* - w_T^*} \leq \frac{2}{T}$.
    Putting this together with the previous inequality we obtain 
    \begin{align*}
        \sum_{t=1}^T f_t(w_T^*) - \sum_{t=1}^T f_t(w_t^*)
        \leq \frac{2}{T} + \sum_{t=1}^{T-1} \frac{2}{t} = \sum_{t=1}^T \frac{2}{t},
    \end{align*}
    as desired. 
\end{proof}

\subsection{Strongly convex losses, oblivious adversary}
\label{sec:lb:sc_oblivious}

In this section, we prove \cref{thm:lbsc_oblivious}, establishing a regret lower bound in the strongly convex oblivious setting. Given a randomized player, we tailor an adversary to that player exploiting knowledge of her algorithm, and in particular, of her decision distributions. At a high level, we force the player to follow the leader over a long trajectory. Since the player is given a limited switching quota, the only way she can follow the leader over a long trajectory is by using high variance decisions, which imply high regret.
A link between the switching quota and the total variation between decision distributions is made formal by the next well known lemma.
\begin{lemma} [max coupling]
    \label[lem]{lem:max_coupling}
    Let $\rv X, \rv Y$ be two random variables distributed according to $p, q$ respectively over a sample space $\Omega$. Then for any joint distribution $(\rv X, \rv Y)$ we have that
    \[
        \Pr(\rv X \neq \rv Y) \geq \norm{p - q}_{TV}.
    \]
\end{lemma}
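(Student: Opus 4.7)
The plan is to unpack the definition of total variation distance given in the preliminaries, $\TV{p-q} = \max_{B\subseteq \Omega}\{p(B)-q(B)\}$, and reduce the claim to a set-wise inequality: namely that for \emph{every} measurable $B\subseteq \Omega$, the gap $p(B)-q(B)$ is dominated by $\Pr(\rv{X}\neq\rv{Y})$, regardless of how $(\rv{X},\rv{Y})$ are coupled. Once this inequality is established for an arbitrary $B$, taking the maximum over $B$ on the left-hand side immediately gives $\TV{p-q} \leq \Pr(\rv{X}\neq\rv{Y})$, as desired.

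To obtain the set-wise bound I would fix an arbitrary $B \subseteq \Omega$ and decompose each marginal probability by intersecting with the event $\{\rv{Y}\in B\}$ and its complement, using the coupling property $\rv X \sim p$, $\rv Y \sim q$:
\begin{align*}
p(B) &= \Pr(\rv{X}\in B,\,\rv{Y}\in B) + \Pr(\rv{X}\in B,\,\rv{Y}\notin B),\\
q(B) &= \Pr(\rv{X}\in B,\,\rv{Y}\in B) + \Pr(\rv{X}\notin B,\,\rv{Y}\in B).
\end{align*}
Subtracting the second line from the first cancels the joint-in-$B$ term, leaving $p(B)-q(B) = \Pr(\rv{X}\in B,\,\rv{Y}\notin B) - \Pr(\rv{X}\notin B,\,\rv{Y}\in B)$. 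Discarding the non-negative subtracted quantity yields $p(B)-q(B) \leq \Pr(\rv{X}\in B,\,\rv{Y}\notin B)$.

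The final step is the trivial observation that the event $\{\rv{X}\in B,\,\rv{Y}\notin B\}$ is contained in the disagreement event $\{\rv{X}\neq\rv{Y}\}$, so $p(B)-q(B) \leq \Pr(\rv{X}\neq\rv{Y})$ for every $B$, and maximizing over $B$ completes the proof. There is no real obstacle here — the argument is purely set-theoretic and uses only the definition of a coupling together with the $\TV{\cdot}$ definition restated in \cref{sec:prelim}. I would note for context (but not prove, since it is not required) that the inequality is tight, the minimum being achieved by a maximal coupling of the type realized implicitly by \cref{alg:lazysample}.
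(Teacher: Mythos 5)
Your proof is correct and complete. It proves the same intermediate fact as the paper---that $\Pr(\rv X \in A,\, \rv Y \notin A) \geq p(A) - q(A)$---but by a somewhat different route and at a different level of granularity. The paper argues atom-by-atom: it bounds $\Pr(\rv Y = x) \geq \Pr(\rv X = x,\, \rv Y = x)$, rearranges to get $\Pr(\rv X = x \wedge \rv Y \neq x) \geq p(x) - q(x)$ for each point $x$, and then sums over the specific set $\{x : p(x) > q(x)\}$, which realizes the maximum in the definition of $\TV{p-q}$. Your version instead fixes an arbitrary event $B$, decomposes both marginals against $\{\rv Y \in B\}$ so that the agreement term $\Pr(\rv X \in B,\, \rv Y \in B)$ cancels, and only then maximizes over $B$. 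The two arguments buy slightly different things: the paper's pointwise computation is tied to a discrete (countable) sample space, since it manipulates $\Pr(\rv X = x)$ and writes $\sum_{x\in\Omega}$, whereas your event-level decomposition goes through verbatim for arbitrary measurable spaces and continuous distributions---which is in fact the regime in which the lemma is applied in \cref{sec:lb:sc_oblivious}, where the decision distributions may have densities. Your closing remark about tightness via a maximal coupling is accurate and consistent with the role of \cref{alg:lazysample} in the paper.
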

Given the above, the next lemma now links long trajectories to high variance, subject to a switching constraint.
\begin{lemma} [variance lower bound]
    \label[lem]{lem:varlb}
    Let $p, q$ be two probability measures with means $\mu_p, \mu_q$ and variances $\sigma_p^2, \sigma_q^2$. Then
    \[
        \frac{(\mu_p - \mu_q)^2}{ 2 \norm{p - q}_{TV}}
        \leq
        (\mu_p - \mu_q)^2 + \sigma_p^2 + \sigma_q^2.
    \]
    In particular, if $\norm{p - q}_{TV} \leq 1/4$, then
    \[
        (\mu_p - \mu_q)^2
        \leq
        \sigma_p^2 + \sigma_q^2.
    \]
\end{lemma}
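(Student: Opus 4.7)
The plan is to work with a maximal coupling of the two measures and then bound the mean gap by the second moment of $X - Y$ through Cauchy--Schwarz. By a standard construction (cf.\ \cref{lem:max_coupling} for the characterization), there exists a joint distribution $(X, Y)$ with marginals $p, q$ that saturates $\Pr(X \neq Y) = \norm{p - q}_{TV}$. Since $\mu_p - \mu_q = \E[X - Y] = \E[(X - Y)\mathbf{1}\{X \neq Y\}]$, Cauchy--Schwarz yields
\[
    (\mu_p - \mu_q)^2 \;\leq\; \E[(X - Y)^2] \cdot \Pr(X \neq Y) \;=\; \E[(X - Y)^2] \cdot \norm{p - q}_{TV}.
\]

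The next step is to control $\E[(X - Y)^2]$ by the variances plus the mean gap. I would apply the elementary bound $(X - Y)^2 \leq 2(X - c)^2 + 2(Y - c)^2$ with the symmetric center $c := (\mu_p + \mu_q)/2$. Each expectation splits as $\E[(X - c)^2] = \sigma_p^2 + (\mu_p - c)^2 = \sigma_p^2 + \tfrac{1}{4}(\mu_p - \mu_q)^2$ (and analogously for $Y$), and summing gives $\E[(X - Y)^2] \leq 2(\sigma_p^2 + \sigma_q^2) + (\mu_p - \mu_q)^2$.

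Plugging this into the Cauchy--Schwarz bound and dividing by $2\norm{p - q}_{TV}$ produces
\[
    \frac{(\mu_p - \mu_q)^2}{2\norm{p-q}_{TV}} \;\leq\; (\sigma_p^2 + \sigma_q^2) + \tfrac{1}{2}(\mu_p - \mu_q)^2 \;\leq\; (\mu_p - \mu_q)^2 + \sigma_p^2 + \sigma_q^2,
\]
which is the main claim. The ``in particular'' clause follows by revisiting the pre-division form $(\mu_p - \mu_q)^2 \leq 2\norm{p-q}_{TV}((\mu_p - \mu_q)^2 + \sigma_p^2 + \sigma_q^2)$: under $\norm{p - q}_{TV} \leq 1/4$ the right-hand side is at most $\tfrac{1}{2}((\mu_p - \mu_q)^2 + \sigma_p^2 + \sigma_q^2)$, and absorbing the $(\mu_p - \mu_q)^2$ term into the left gives $(\mu_p - \mu_q)^2 \leq \sigma_p^2 + \sigma_q^2$.

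I do not anticipate a genuine obstacle here. The only two nontrivial choices are (i) invoking a maximal coupling so that the indicator $\mathbf{1}\{X \neq Y\}$ lives on an event of probability exactly $\norm{p - q}_{TV}$, and (ii) selecting the symmetric center $c = (\mu_p + \mu_q)/2$ so that the two mean-deviation contributions from the $(X-Y)^2 \leq 2(X-c)^2 + 2(Y-c)^2$ inequality combine into precisely one copy of $(\mu_p - \mu_q)^2$ rather than something larger; everything else is an elementary rearrangement.
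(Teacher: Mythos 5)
Your proof is correct, but it follows a genuinely different route from the paper's. The paper works directly with the signed measure $p-q$: it writes $(\mu_p-\mu_q)^2 = \bigl(\sum_x (p(x)-q(x))(x-\mu_p)\bigr)^2$, applies Cauchy--Schwarz with the weights $|p(x)-q(x)|$ to pull out the factor $\sum_x|p(x)-q(x)| = 2\norm{p-q}_{TV}$, and then bounds $\sum_x |p(x)-q(x)|(x-\mu_p)^2 \leq \sigma_p^2+\sigma_q^2+(\mu_p-\mu_q)^2$. You instead pass to a maximal coupling $(X,Y)$, write $\mu_p-\mu_q = \E[(X-Y)\mathbf{1}\{X\neq Y\}]$, and apply Cauchy--Schwarz to the random variables; the two arguments are structurally parallel (Cauchy--Schwarz to extract the TV factor, then a second-moment bound), but your realization buys a slightly sharper constant (you get $\tfrac{1}{2}(\mu_p-\mu_q)^2$ where the paper gets $(\mu_p-\mu_q)^2$) and avoids any reference to densities, at the cost of invoking the \emph{existence} of a coupling achieving $\Pr(X\neq Y)=\norm{p-q}_{TV}$. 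Be careful with your citation: \cref{lem:max_coupling} only states the inequality $\Pr(X\neq Y)\geq\norm{p-q}_{TV}$ for every coupling, which is the wrong direction for your purposes; the achievability you need is standard and is in fact established by the paper's own LazySample analysis (\cref{lem:lazysample-switch,lem:lazysample-dist}), but you should point to that or to the standard construction rather than to \cref{lem:max_coupling}. Your choice of the symmetric center $c=(\mu_p+\mu_q)/2$ and the final rearrangement for the ``in particular'' clause are both fine.
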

The proofs for Lemmas \ref{lem:max_coupling} and \ref{lem:varlb} are provided in \cref{sec:lem_tech}.
Unfortunately, \cref{lem:varlb} is insufficient to prove our lower bound, and is un-improvable in general due to the existence of distributions that do not have mass near the mean, such as Bernoulli distributions.
Instead, we make use of the stronger \cref{assume:dists} on the distributions played by the player.

\begin{proof} [of \cref{thm:lbsc_oblivious}]
  Throughout the proof, we denote the player's decision distribution sequence by $\{p_t\}_{t=1}^T$, her decision expectations by $\mu_t$, and the variance of her decision by  
  $ \sigma_t^2 := \E_{w_t \sim p_t}  [(w_t - \mu_t)^2 ]$.
  The adversary construction is similar to the one in the adaptive case (\cref{thm:sclb_adaptive}).
  We consider losses of the form;
  \[
      f_t(w) := \frac{1}{2}(w - x_t)^2,
  \]
  and select $x_t$ such that the leader moves $\delta \eqq 1/2T$ \textbf{away from} $\boldsymbol \mu_t$;
    \begin{align*}
        x_t \eqq w_{t-1}^* + \delta_t \eqq \begin{cases}
            w_{t-1}^* + \frac{t}{2T} \qquad \mu_t \leq w_{t-1}^*,
            \\
            w_{t-1}^* - \frac{t}{2T} \qquad \mu_t > w_{t-1}^*.
        \end{cases}
    \end{align*}
    This implies
    \[
        w_t^* = \frac{t-1}{t}w_{t-1}^* + \frac{1}{t} x_t
        = \begin{cases}
            w_{t-1}^* + \frac{1}{2T} \qquad \mu_t \leq w_{t-1}^*,
            \\
            w_{t-1}^* - \frac{1}{2T} \qquad \mu_t > w_{t-1}^*.
        \end{cases}
    \]
    Since $w_t^*$ never leaves $[-\frac{1}{2}, \frac{1}{2}]$, we can be certian $x_t\in [-1, 1]$ throughout the game and therefore the construction is valid.
    As a first step, we derive a per round regret lower bound in terms of the player's decision expectation and variance.
    \begin{lemma}\label{lem:lb_sc_roundreg}
      For all $t\in [T]$ it holds that
      \begin{equation*}
        \E_{w_t \sim p_t} [f_t(w_t)] - f_t(w_t^*) 
        \geq
        \frac{\sigma_t^2}{2} + \frac{(\mu_t - w_t^*)^2}{2}
        + (1-t)\delta_t(\mu_t - w_{t-1}^*).
      \end{equation*}
      (We arbitrarily set $w_0^* \eqq 0$ as a matter of convenience.)
    \end{lemma}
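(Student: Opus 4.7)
The proof is a direct algebraic computation exploiting the quadratic form of the losses. The key identity is the variance decomposition $\E_{w_t \sim p_t}[(w_t - x_t)^2] = (\mu_t - x_t)^2 + \sigma_t^2$, which gives
\[
\E_{w_t \sim p_t}[f_t(w_t)] - f_t(w_t^*) = \tfrac{\sigma_t^2}{2} + \tfrac{1}{2}\bigl((\mu_t - x_t)^2 - (w_t^* - x_t)^2\bigr).
\]
Applying the elementary identity $(a-c)^2 - (b-c)^2 = (a-b)^2 + 2(a-b)(b-c)$ with $a = \mu_t$, $b = w_t^*$, $c = x_t$ already produces the first two summands $\sigma_t^2/2$ and $(\mu_t - w_t^*)^2/2$ of the claimed lower bound, and leaves a cross term $(\mu_t - w_t^*)(w_t^* - x_t)$ to be handled.

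To reduce the cross term to the desired form, I would plug in the closed-form expressions $x_t = w_{t-1}^* + \delta_t$ and $w_t^* = w_{t-1}^* + \delta_t/t$ guaranteed by the adversary's construction. These immediately yield $w_t^* - x_t = -(t-1)\delta_t/t$ and $\mu_t - w_t^* = (\mu_t - w_{t-1}^*) - \delta_t/t$. Expanding the product and grouping terms splits the cross term into a linear piece in $(\mu_t - w_{t-1}^*)$ matching the form appearing in the lemma's right-hand side, together with a nonnegative piece proportional to $\delta_t^2$. Discarding the nonnegative $\delta_t^2$-piece then gives the claimed inequality.

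The only real obstacle is careful bookkeeping of the factors of $1/t$ and signs throughout the expansion. It is also worth recording that by construction $\delta_t$ is taken with sign opposite to $\mu_t - w_{t-1}^*$, so that $\delta_t(\mu_t - w_{t-1}^*) \leq 0$ and the third summand of the lower bound is itself nonnegative for every $t \geq 1$. This positivity is precisely what will allow the subsequent aggregation across rounds to extract, together with \cref{assume:dists} and \cref{lem:varlb,lem:max_coupling}, an $\Omega(T/S^2)$ regret lower bound from the limited switching budget in the proof of \cref{thm:lbsc_oblivious}.
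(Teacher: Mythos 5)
Your proposal is correct and follows essentially the same route as the paper's own proof: the variance decomposition of $\E[(w_t-x_t)^2]$, the identity $(\mu_t-x_t)^2-(w_t^*-x_t)^2=(\mu_t-w_t^*)^2+2(\mu_t-w_t^*)(w_t^*-x_t)$, substitution of the adversary's construction to rewrite the cross term, and then dropping the nonnegative $\delta_t^2$ piece. You are also right that the only care needed is in the $1/t$ bookkeeping for $\delta_t$ (the paper is a bit loose here, using $\delta_t$ to denote the offset $x_t-w_{t-1}^*$ in the construction but the leader's step $w_t^*-w_{t-1}^*$ inside this lemma's proof and statement), and your observation that the linear term is itself nonnegative because $\delta_t$ and $\mu_t-w_{t-1}^*$ have opposite signs matches how the lemma is used downstream.
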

    
    As in the adpative case, assume the game starts on round $T/2 + 1$, so that the linear term from \cref{lem:lb_sc_roundreg} is $\geq 1/4$;  $(1-t)\delta_t \geq 1/4$. (Formally, this can be accomplished for example with an adversary that plays $\delta_t = 0$ for the first $T/2$ rounds.)

  We now consider the decomposition of the regret into $J \geq 1$ sections formed by the $J-1$ rounds on which the player forces the adversary to change direction. 
  On each of these sections, the leaders progress either constantly to the right or constantly to the left. For section $j < J$ (i.e., not the last section), on the last round the leader will be bypassed by the player, causing it to change directions. 
  To ease notational clutter, we conveniently use shifted indexes when discussing a certain section; for example, $w_t^*$ and $p_t$ of section $j \in [J]$ that starts on round $t_j$, correspond to $w_{t_j + t}^*$ and $p_{t_j + t}$ of the global indexing scheme.
  For an illustration of a rightward section see \cref{fig:lb_section}.
  \begin{figure}[ht]
    \includegraphics[width=15cm]{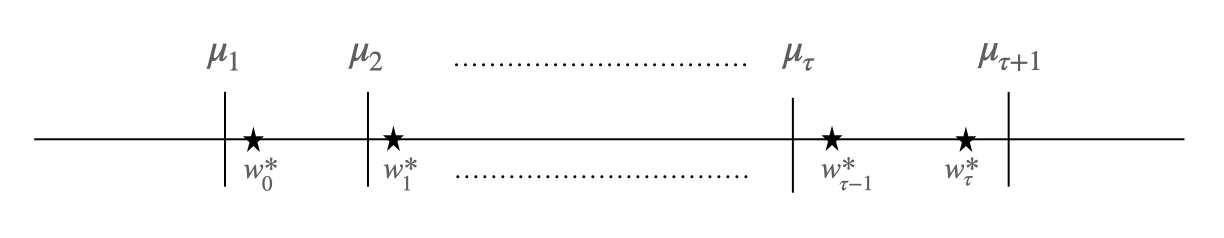}
    \centering
    \caption{A rightward path leaders section.} 
    \label{fig:lb_section}
  \end{figure}

  Our next two lemmas lower bounds the player's regret on each such individual section. Both are formulated w.l.o.g.~in terms of a rightward moving section (clearly, a leftward moving one behaves the same). The first does not make use of $\cref{assume:dists}$, and applies only in the case a generous switching budget was invested in the section. Intuitively, these are the long sections.
  \begin{lemma}
    \label{lem:lbsc_pathweak}
    Let $w_0^* \in \Dom$, 
    assume $w_t^* = w_{t-1}^* + \delta$ , 
    and $\mu_t \leq w_{t-1}^*$ for $t=1, \ldots, \tau$.
    In addition, assume the player uses a switching budget of $\zeta \geq 1/2a$ over all $\tau$ rounds for some $a \geq 1$. 
    Then it holds that;
    \begin{align*}
        \sum_{t=1}^\tau \E[f_t(w_t)] - f_t(w_t^*) 
            \geq \tau \frac{(\tau \delta)^2}{ 32^3 a^2 \zeta^2}.
    \end{align*}
\end{lemma}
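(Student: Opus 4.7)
The plan is to combine the per-round bound from \cref{lem:lb_sc_roundreg} with a partitioning of the section along "large" switches, and then invoke convexity to aggregate per-sub-phase regret.

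First, I would use the natural parameter $a$ suggested by the statement to define the partition: call round $t$ a \emph{big switch} if $\epsilon_t := \TV{p_t - p_{t+1}} \geq 1/(2a)$. Since $\sum_t \epsilon_t \leq \zeta$, there are at most $2a\zeta$ big switches, so they split $[1,\tau]$ into at most $J \leq 2a\zeta + 1$ sub-phases of lengths $\tau_1,\dots,\tau_J$ with $\sum \tau_i = \tau$, and small-switch budgets $\zeta_1,\dots,\zeta_J$ with $\sum \zeta_i \leq \zeta$. Inside each sub-phase every $\epsilon_t < 1/(2a) \leq 1/2$, so the stronger form of \cref{lem:varlb} applies round by round and, after Cauchy--Schwarz, yields $M_i^2 \leq O(\zeta_i V_i)$ where $M_i := \sum_{t \in i} |\mu_{t+1}-\mu_t|$ and $V_i := \sum_{t \in i} \sigma_t^2$.

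Next I would bound the per-sub-phase regret using \cref{lem:lb_sc_roundreg} (dropping only the non-negative linear term, exploiting that the section is rightward so $(1-t)\delta_t(\mu_t - w_{t-1}^*) \geq 0$). Set up local coordinates so the leader starts at $0$ at the beginning of the sub-phase, and write $\nu_t := \mu_t - \mu_{t_i}$; since $|\nu_t| \leq M_i$ for any $t$ in sub-phase $i$, we have $(\mu_t - w_t^*)^2 \geq (t\delta - M_i)^2$ where $t$ is the local index. A case split is then natural: if $M_i \leq \tau_i \delta/4$, then for the second half of the sub-phase $(\mu_t - w_t^*)^2 \geq (\tau_i \delta/4)^2$, producing quadratic regret $\gtrsim \tau_i^3 \delta^2$. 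Otherwise $M_i > \tau_i \delta/4$, and the variance bound $V_i \gtrsim M_i^2/\zeta_i$ gives variance regret $\gtrsim \tau_i^2 \delta^2/\zeta_i$.

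To aggregate, observe that the quadratic-case bound is independent of $\zeta_i$, while the variance-case bound forces the player to spend extra budget on sub-phases where she wants mean tracking. In either case one can show (via the elementary identity $\tau_i^3 \geq \tau_i^2 \cdot \tau_i \cdot (\zeta_i/\zeta_i)$ and the AM--GM / power-mean inequality) that sub-phase regret is at least $c \cdot \tau_i^3 \delta^2 / (a^2 \zeta^2) \cdot J^2 / \tau^3$ times the correct factor. Summing over $i$ and applying Jensen's inequality in its convex form $\sum \tau_i^3 \geq \tau^3/J^2$ with $J \leq 2a\zeta + 1 \leq 4a\zeta$ (using $a\zeta \geq 1/2$) yields total regret $\gtrsim \tau \cdot (\tau\delta)^2 / (a^2 \zeta^2)$, matching the target up to absorbed constants consistent with $32^3$ in
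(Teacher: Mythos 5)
Your partition by ``big switches'' is genuinely different from the paper's, and this difference is where the proof breaks. The paper chops the mean-tracking rounds into \emph{fixed-length} blocks of length $M = \tau/(16 a\zeta)$, after which at least $4a\zeta$ of these blocks carry total-variation budget below $1/(4a)$; inside such a ``light'' block the triangle inequality gives $\TV{p_{t_i}-p_{t_{i+M/2}}} \le 1/4$ for any pair $M/2$ apart, so the strong form of \cref{lem:varlb} yields $\sigma_{t_i}^2+\sigma_{t_{i+M/2}}^2 \gtrsim (M\delta)^2$. Summing over the $M/2$ disjoint pairs of a block and over the $\ge 4a\zeta$ light blocks gives $\sum_t\sigma_t^2 \gtrsim \tau (M\delta)^2 = \tau^3\delta^2/(a^2\zeta^2)$. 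The fixed block length $M\sim\tau/(a\zeta)$ is the critical scale at which \cref{lem:varlb} has to be applied.

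The gap in your proposal is precisely that you apply \cref{lem:varlb} at the wrong scale. Within a sub-phase you apply it between \emph{consecutive} rounds and then use Cauchy--Schwarz to get $V_i \gtrsim M_i^2/\zeta_i$. This is a correct inequality (when the per-round TV is below $1/4$), but it is strictly too weak: a single sub-phase ($J=1$) with $\tau_1=\tau$, $\zeta_1=\zeta$ and $M_1\approx\tau\delta$ (the player tracks the leader with a uniformly spread budget, creating no big switch) gives only $V_1\gtrsim\tau^2\delta^2/\zeta$, which is smaller than the target $\tau^3\delta^2/(a^2\zeta^2)$ by the unbounded factor $\tau/(a^2\zeta)$. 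The shortfall is structural: Cauchy--Schwarz only uses the total mean displacement, whereas the paper's pairing argument exploits that the mean must keep pace with the leader on \emph{every} window of length $M$, forcing large variance throughout rather than in aggregate. Your case split and the ``aggregation'' paragraph (the identity $\tau_i^3\ge\tau_i^2\cdot\tau_i\cdot(\zeta_i/\zeta_i)$ is vacuous) cannot recover the lost $\tau$-factor. Two smaller issues: for $1\le a<2$ your threshold $1/(2a)>1/4$, so the strong form of \cref{lem:varlb} is not available between consecutive rounds inside a sub-phase; and the quadratic-case bound $\gtrsim\tau_i^3\delta^2$ still needs to be multiplied by $1/(a^2\zeta^2)$, which only comes from $J\le O(a\zeta)$, a bound your partition does not deliver cleanly for small $a$. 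The cleanest repair is to abandon the big-switch partition and instead use the paper's equal-length blocking at scale $M\sim\tau/(a\zeta)$ together with the pairing lower bound.
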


Our next lemma is used to lower bound the regret on low switching sections in which the player covers the distance traveled by the leader, and crucially uses \cref{assume:dists}. 
  \begin{lemma}
    \label{lem:lbsc_main}
    Let $w_0^* \in \Dom$, 
    assume $w_t^* = w_{t-1}^* + \delta$ , 
    $\mu_t \leq w_{t-1}^*$ for $1 \leq t \leq \tau$, 
    and $\mu_{\tau+1} > w_\tau^*$ 
    .
    In addition, assume the player satisfies \cref{assume:dists} and uses a switching budget of $\zeta$ over all $\tau$ rounds;
    $\sum_{t=1}^\tau \TV{p_t - p_{t+1}} = \zeta$.
    Then, it holds that
    \[
        \sum_{t=1}^\tau \E[f_t(w_t)] - f_t(w_t^*) 
        \geq 
            \tau \frac{C_0^2}{32^3 \cTV^4}
            \min\Big\{
                1,
                \frac{(\tau \delta)^2}{\zeta^2}
            \Big\}
        .
    \]
\end{lemma}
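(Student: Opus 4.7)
The first move is to invoke \cref{lem:lb_sc_roundreg} and discard the nonnegative contributions, keeping only $\sigma_t^2/2$. Under the section's hypothesis $\mu_t \leq w_{t-1}^*$ combined with $t \geq 1$, the linear term $(1-t)\delta_t(\mu_t - w_{t-1}^*)$ is a product of two non-positive and one positive factor, hence nonnegative. Dropping it together with the $(\mu_t-w_t^*)^2/2$ term yields $\sum_{t=1}^\tau \Exp{f_t(w_t)} - f_t(w_t^*) \geq \tfrac{1}{2}\sum_t \sigma_t^2$. The remainder of the proof aims to lower bound $\sum_t \sigma_t^2$ by $\Omega(\tau\rho^2)$ where $\rho := \min(1,\tau\delta/\zeta)$, with the constant tracking producing the announced factor $\cTVv^2/(32^3 \cTV^4)$.

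I would first establish a lower bound on the peak variance $\sigma^* := \max_t \sigma_t$ via Part 1 of \cref{assume:dists}. Since $\mu_1 \leq w_0^* < w_\tau^* < \mu_{\tau+1}$, the triangle inequality on the mean sequence gives $\sum_t |\mu_{t+1} - \mu_t| \geq \tau\delta$. Applying Part 1 to each consecutive pair $(p_t,p_{t+1})$ and using the elementary inequality $\sum_t \min(1, a_t) \geq \min(1, \sum_t a_t)$ together with $\sigma_t + \sigma_{t+1} \leq 2\sigma^*$, one obtains $\cTV\zeta \geq \min\bigl(1,\tau\delta/(2\sigma^*)\bigr)$, whence $\sigma^* \geq \rho/(2\cTV)$ (the degenerate case $\sigma_t + \sigma_{t+1} \geq 1/\cTV$ for some $t$ directly yields $\sigma^* \geq 1/(2\cTV)$, which is at least as strong).

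Next I would use Part 2 of \cref{assume:dists} to prevent the player from concentrating all the variance near $\sigma^*$ in a single round. Whenever $\sigma_{\min(t,t+1)}/\sigma_{\max(t,t+1)} \leq \cTVv/2$, one has $\TV{p_t - p_{t+1}} \geq \cTVv/(2\cTV)$, so there are at most $2\cTV\zeta/\cTVv$ such ``steep-drop'' transitions in the section. Within each of the resulting $N \leq 1 + 2\cTV\zeta/\cTVv$ smooth segments, consecutive variance ratios lie in $[\cTVv/2,2/\cTVv]$. I would then apply Part 1 locally within each segment, using the segment's peak variance and the portion of total mean-displacement absorbed within the segment, to obtain that each segment's contribution to $\sum \sigma_t^2$ is proportional to its length times $\rho^2$ (down to absolute constants in $\cTVv,\cTV$); summing over segments reconstructs $\Omega(\tau\rho^2)$.

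The main obstacle I anticipate is the per-segment lower bound in the last step. The peak variance within a single smooth segment could be much smaller than the global $\sigma^*$, and an adversarial player may try to concentrate mean displacement into the few segments containing high peaks, leaving the rest with small variance but few TV transitions. Closing this gap requires a convex-style accounting balancing the local Part 1 inequalities against the segment-length profile dictated by Part 2, and tracking the worst-case allocation of mean-displacement across segments is precisely what forces the degradation of constants to $\cTVv^2/(32^3\cTV^4)$.
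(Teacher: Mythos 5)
Your first step and the use of Part~1 to extract a large variance somewhere in the section are both present in the paper's proof, but from that point on your route diverges and, as you yourself flag, does not close. The per-segment accounting you describe has a genuine hole: Part~2 constrains only \emph{ratios} of consecutive variances inside a smooth segment, so a segment's variance profile can still decay geometrically over its length, and the adversarial player can park essentially all of the mean displacement $\tau\delta$ in a single short segment containing the global peak. You then get $\sum_j m_j(\Delta_j/\zeta_j)^2$ with $\sum_j\Delta_j=\tau\delta$, and there is no reason this is $\Omega(\tau\rho^2)$ when the $\Delta_j$'s are concentrated. The ``convex-style accounting'' you gesture at is exactly the missing ingredient, and it is not clear it exists in the form you want.

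The paper avoids the segment decomposition entirely. After the $\zeta \geq C_0/2C_1$ case is dispatched via \cref{lem:lbsc_pathweak}, it applies Part~1 once, to the endpoints $p_1$ and $p_{\tau+1}$, using $|\mu_1-\mu_{\tau+1}|\geq\tau\delta$ and the metric property $\TV{p_1-p_{\tau+1}}\leq\sum_t\TV{p_t-p_{t+1}}=\zeta$, getting $\sigma_1+\sigma_{\tau+1}\geq\min\{1/C_1,\tau\delta/(C_1\zeta)\}$. Taking w.l.o.g.\ $\sigma_1$ to be the larger endpoint variance, it then applies Part~2 not to consecutive pairs, but to the pair $(p_t,p_1)$ for each interior $t$, again using the triangle inequality $\TV{p_t-p_1}\leq\zeta$. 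Because $\zeta$ is small this forces $\sigma_t/\sigma_1$ to be bounded below by a constant (namely roughly $C_0/2$), so \emph{every} $\sigma_t$, not just a segment peak, inherits the $\Omega\bigl(\min\{1,\tau\delta/\zeta\}\bigr)$ lower bound, and $\sum_t\sigma_t^2/2$ immediately gives $\Omega(\tau\rho^2)$. In short: the trick is to compare all rounds to a single fixed high-variance anchor via the cumulative TV budget, rather than chaining local comparisons; this is what your proposal is missing and what makes the constants fall out cleanly.

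Two minor points: you correctly observe the linear term from \cref{lem:lb_sc_roundreg} is nonnegative under the section hypotheses and can be dropped; and your claimed bound $\sigma^*\geq\rho/(2C_1)$ is obtainable, but not needed in the stronger per-round form that the proof actually requires.
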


 For the total regret minimization problem, the player faces multiple optimization problems (one per section), each of which we are able to lower bound using our above lemmas.
Letting $\Re^j(\tau_j, \zeta_j)$ denote the player's regret on the $j$'th section consisting of $\tau_j$ rounds and switching budget of $\zeta_j$, the total regret may be written as
\[
    \Re_T = \sum_{j=1}^J \Re^j(\tau_j, \zeta_j).
\]
If the last section has $\geq T/4$ rounds, our result follows by \cref{lem:lbsc_pathweak}.
Assume otherwise, then the player has invested $\geq T/4$ rounds into direction swap sections where \cref{lem:lbsc_main} applies, meaning for all $j\leq K \eqq J-1$ it holds that
\begin{align*} 
    \Re^j(\tau_j, \zeta_j) 
    \geq C \cdot \tau_j \cdot \min\Big\{
        1, \frac{(\tau_j\delta)^2}{\zeta_j^2}
    \Big\},
    \quad \textit{where } C \eqq \frac{C_0^2}{32^3 C_1^4}
    .
\end{align*} 
Hence, it follows that the player's total regret is lower bounded by the optimal solution value of the following optimization problem;

\begin{equation*}
    \begin{aligned}
        \min_{\substack{
            \tau_1, \ldots, \tau_J\\ 
            \zeta_1, \ldots, \zeta_J
            }} \quad 
            &  C \delta^2 \sum_{k=1}^K 
            \frac{\tau_k^3}{\zeta_k^2}
            \\
        \textrm{s.t.} \quad 
            & \sum_{k=1}^K \tau_k = T/4, \\
            & \sum_{k=1}^K \zeta_k = S.
    \end{aligned}
  \end{equation*}
  We have that
  \begin{align*}
    C \delta^2 \sum_{k=1}^K 
        \frac{\tau_k^3}{\zeta_k^2}
    = \frac{C T^3}{4^3 4 T^2} \sum_{k=1}^K 
        \frac{(\tau_k/(T/4))^3}{\zeta_k^2}
    = \frac{C}{4^4} \frac{T}{S^2} \sum_{k=1}^K 
        \frac{(\tau_k/T)^3}{(\zeta_k/S)^2}.  
  \end{align*}
  To conclude the proof, what remains is to show the term on the right hand side is lower bounded by a constant. Indeed, the following lemma establishes it is $\geq 1$.
  \begin{lemma}\label{lem:lbsc_kkt}
      For any $\alpha, \beta \in \Delta_K$, it holds that
      $\sum_{k=1}^K \frac{\alpha_k^3}{\beta_k^2} \geq 1$.
  \end{lemma}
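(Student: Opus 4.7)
The plan is to recognize this as a direct instance of H\"older's inequality (equivalently, a special case of Radon's inequality, also known as the weighted Cauchy--Schwarz inequality in Engel form). The exponent structure $\alpha_k^3 / \beta_k^2$ matches precisely the form that admits a one-line proof via H\"older with the conjugate exponents $3$ and $3/2$, which satisfy $1/3 + 2/3 = 1$.

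Concretely, I would decompose each summand of $\sum_k \alpha_k$ as $\alpha_k = (\alpha_k / \beta_k^{2/3}) \cdot \beta_k^{2/3}$ and apply H\"older's inequality:
\begin{align*}
1 \;=\; \sum_{k=1}^K \alpha_k
\;=\; \sum_{k=1}^K \frac{\alpha_k}{\beta_k^{2/3}} \cdot \beta_k^{2/3}
\;\leq\; \Bigl(\sum_{k=1}^K \frac{\alpha_k^3}{\beta_k^2}\Bigr)^{1/3}
        \Bigl(\sum_{k=1}^K \beta_k\Bigr)^{2/3}.
\end{align*}
Since $\beta \in \Delta_K$ the trailing factor equals $1$, so cubing both sides yields the desired bound $\sum_k \alpha_k^3/\beta_k^2 \geq 1$. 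Equivalently, one may invoke Radon's inequality directly: for $p \geq 1$ and positive reals $x_k, y_k$ one has $\sum_k x_k^p / y_k^{p-1} \geq (\sum_k x_k)^p / (\sum_k y_k)^{p-1}$; substituting $p = 3$, $x_k = \alpha_k$, $y_k = \beta_k$ gives the claim immediately.

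No real obstacle is anticipated---this is essentially a textbook inequality. The only minor care needed is edge cases with $\beta_k = 0$: if $\alpha_k > 0$ as well, the left-hand side is $+\infty$ and the bound is trivial; if $\alpha_k = 0$ too, the $k$-th term contributes $0$ and can be dropped from the sum, after which H\"older is applied to the remaining coordinates. Equality holds precisely when $\alpha = \beta$, which matches the ``balanced allocation'' intuition driving the lower bound construction in the main theorem.
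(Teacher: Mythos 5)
Your proof is correct, and it takes a genuinely different and cleaner route than the paper's. The paper proves the bound by setting up the optimization problem $\min \sum_k \alpha_k^3/\beta_k^2$ over the two simplices, writing down the Lagrangian, and applying the KKT stationarity condition in $\beta$ to deduce $\alpha_k^* = \beta_k^*$ at the optimum, whence the minimum value is $1$. Your approach instead invokes H\"older's inequality with conjugate exponents $(3, 3/2)$ (equivalently Radon's inequality with $p=3$), turning the whole thing into a one-line calculation. The H\"older route is preferable on several counts: it sidesteps the question of whether the constrained minimum is attained and whether the stationary point found by KKT is actually a minimizer rather than a saddle (the paper's argument leaves this implicit), it handles the boundary case $\beta_k = 0$ explicitly as you note, and it delivers the equality condition $\alpha = \beta$ for free from the equality case of H\"older. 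The paper's KKT argument, by contrast, makes the variational structure of the adversary's allocation problem visible, which is arguably more in the spirit of the surrounding lower-bound construction, but as a proof of the stated inequality it is the less tight argument.
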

  This concludes the proof.
\end{proof}

\begin{proof}[of \cref{lem:lb_sc_roundreg}]
    Observe;
    \begin{align*}
        \E f_t(w_t) - f_t(w_t^*)
        &= \frac{1}{2} \Big(
            \Exp{(w_t - x_t)^2} - (w_t^* - x_t)^2
        \Big)
        \\
        &= \frac{1}{2} \Big(
          \Exp{(w_t - \mu_t + \mu_t - x_t)^2} - (w_t^* - x_t)^2
          \Big)
        \\
        &= \frac{1}{2} \Big(
          \sigma_t^2 + 2\Exp{(w_t - \mu_t)(\mu_t - x_t)}
          + (\mu_t - x_t)^2 - (w_t^* - x_t)^2
          \Big)
        \\
        &= \frac{1}{2} \Big(
            \sigma_t^2
          + (\mu_t - x_t)^2 - (w_t^* - x_t)^2
          \Big)
        \\
        &=  \frac{\sigma_t^2}{2} + \frac{(\mu_t - w_t^*)^2}{2}
            + (\mu_t - w_t^*)(w_t^* - x_t)
            .
    \end{align*}
    In addition,
    \begin{align*}
        (\mu_t - w_t^*)(w_t^* - x_t)
        &= (\mu_t - w_{t-1}^* - \delta_t)(\delta_t + w_{t-1}^* - x_t)
        \\
        &= (\mu_t - w_{t-1}^* - \delta_t)(\delta_t -t \delta_t)
        \\
        &= (\mu_t - w_{t-1}^*)(1-t)\delta_t - (1-t)\delta_t^2
        \\
        &\geq (\mu_t - w_{t-1}^*)(1-t)\delta_t,
    \end{align*}
    and the claim immediately follows.
    \end{proof}

    \begin{proof} [of \cref{lem:lbsc_pathweak}]
        First, observe that if $|\mu_t - w_{t-1}^*| \geq (\tau\delta/16 a \zeta)^2$ on $\tau/2$ rounds, it follows by \cref{lem:lb_sc_roundreg} that
        \[ 
            \Re_\tau \geq 
                (\tau/2) \frac{ (\tau\delta)^2}{4 (16 a \zeta)^2}
                = \tau\frac{(\tau\delta)^2}{32^2 a^2 \zeta^2}
                ,
        \]
        and the result follows. Henceforth, we assume there exist $\geq \tau/2$ rounds with $|\mu_t - w_{t-1}^*| < (\tau\delta/16 a \zeta)^2$.
        Split these rounds into $8 a \zeta$ consecutive sections 
        of length $M \eqq (\tau/2)/(8 a \zeta) = \tau/16 a \zeta$ each.
        To be sure we have $\geq 2$ rounds in each section, we assume $\tau \geq 32 a \zeta$. Otherwise, for $T > 320 a$ we have
        \begin{align*}
            \tau \frac{(\tau\delta)^2}{32^2 a^2 \zeta^2}
            < \tau \delta^2 < 1/T,
        \end{align*}
        and therefore the statement holds trivially.
    
        Proceeding, by \cref{lem:max_coupling} it must hold that the sum of total variations between consecutive distributions is $\leq \zeta$,
        therefore at most $4 a \zeta$ sections may use up more than $1/4 a$ total variation (aka switching) budget.
        This establishes there are $\geq 4 a \zeta$ sections that must use up less than $1/4a$ budget. Formally, let $\{t_i\}_{i=1}^{M}$ be rounds of one of these sections, then
        \[
            \sum_{i=1}^{M-1} \TV{p_{t_{i+1}} - p_{t_i}} \leq \frac{1}{4a}.
        \]
        Since the total variation defines a distance metric, by the triangle inequality this further implies that
        $\TV{p_{t_{i + m}} - p_{t_i}} \leq 1/4a$ for all $i, i+m \in [m]$.
        Now, by \cref{lem:varlb}, for all $i \in [M/2]$ we have that
        \begin{align*}
            \sigma_{t_i}^2 + \sigma_{t_{i + M/2}}^2
            \geq
            (M\delta/2 - (\tau\delta / 16 a \zeta)^2)^2
            \geq 
            (M\delta/4)^2
            = \frac{(\tau\delta)^2}{16^2 a^2 \zeta^2}.
        \end{align*}
        To justify the second inequlity, note that 
        $1 \geq \tau\delta /16 a \zeta$, thus
        $
            (\tau\delta / 16 a \zeta)^2 
            \leq 
            \tau\delta /16 a \zeta
            =
            M\delta/4
        $.
        To conclude the proof, note we have identified a total of 
        $4 a \zeta M/2 \geq \tau/8$ round pairs that contribute
        at least $\frac{(\tau\delta)^2}{16^2 a^2 \zeta^2}$ to the regret.
        Denote these rounds $\{t_l\}_{l=1}^{\tau/8}$, then by 
        \cref{lem:lb_sc_roundreg};
        \begin{align*}
            \sum_{t=1}^\tau \E[f_t(w_t)] - f_t(w_t^*) 
            \geq \sum_{l=1}^{\tau/8} 
                \E[f_{t_l}(w_{t_l})] - f_{t_l}(w_{t_l}^*)
            \geq \frac{1}{2} \sum_{l=1}^{\tau/8} \frac{\sigma_{t_l}^2}{2}
            \geq \tau \frac{(\tau\delta)^2}{32^3 a^2 \zeta^2},
        \end{align*}
        and the proof is complete.
    \end{proof}

\begin{proof}[of \cref{lem:lbsc_main}]
    If $\zeta \geq C_0/2\cTV$, the result follows from \cref{lem:lbsc_pathweak}.
    Otherwise, note that $|\mu_1 - \mu_{\tau+1}| \geq \tau\delta$, and therefore by \cref{assume:dists};
    \[
        \sigma_1 + \sigma_{\tau+1}
        \geq \min\Big\{ 
            \frac{1}{C_1}, 
            \frac{\tau\delta}{\cTV \zeta} 
        \Big\}.
    \]
    Now, assume w.l.o.g.~that $\sigma_1 \geq \sigma_{\tau+1}$, then again by \cref{assume:dists} we have that for all $t\in\{2, \ldots, \tau\}$; $\sigma_t \geq \sigma_1$, or $\sigma_t \geq 1$, or
    \[
        \zeta \geq \frac{C_0}{C_1} - \frac{\sigma_t}{\sigma_1}
        \iff
        \frac{\sigma_t}{\sigma_1} \geq \frac{C_0}{C_1} - \zeta 
        \geq \frac{C_0}{2 C_1}.
    \]
    In any case, it follows that 
    \[ 
        \sigma_t \geq 
        \min\Big\{1, \frac{C_0}{2 C_1} \sigma_1 \Big\}
        \geq
        \frac{C_0}{2C_1^2} 
        \min\Big\{ 
            1, 
            \frac{\tau\delta}{\zeta} 
        \Big\}.
    \]
    Therefore by \cref{lem:lb_sc_roundreg};
    \begin{align*}
        \sum_{t=1}^\tau \E[f_t(w_t)] - f_t(w_t^*) 
        \geq \sum_{t=2}^\tau \frac{\sigma_t^2}{2}
        \geq (\tau-1) \frac{C_0^2}{8C_1^4} 
        \min\Big\{ 
            1, 
            \frac{(\tau\delta)^2}{\zeta^2} 
        \Big\}
        \geq \tau \frac{C_0^2}{16 C_1^4}
        \min\Big\{1, \frac{(\tau\delta)^2}{\zeta^2} \Big\}
        ,
    \end{align*}
    and we are done.
\end{proof}

\begin{proof} [of \cref{lem:lbsc_kkt}]
    Since the problem involves only linear equality constraints, any optimal solution must satisfy the KKT conditions.
    Consider the Lagrangian of the problem;
    \begin{align*}
        \mathcal L(\alpha, \beta, \gamma_1, \gamma_2) 
        = \sum_{k=1}^K \frac{\alpha_k^3}{\beta_k^2}
        + \gamma_1 \Big( \sum_{k=1}^K \alpha_k - 1\Big)
        + \gamma_2 \Big( \sum_{k=1}^K \beta_k - 1\Big).
    \end{align*}
    Let $\alpha^*, \beta^*$ be any optimal solution, then
    by the KKT conditions it follows that for all $k \leq K$
    \begin{align*}
        0 = \frac{\partial \mathcal L}{\partial \beta_k}
        = -2 \frac{(\alpha_k^*)^3}{(\beta_k^*)^3} + \gamma_2
        \implies
        \alpha_k^* = (\gamma_2/2)^{1/3} \beta_k^*.
    \end{align*}
    By the problem constraints, this further implies $\gamma_2 = 2$ and therefore $\alpha_k^* = \beta_k^*$ for all $k \leq K$.
    Hence $\sum_{k=1}^K \frac{(\alpha_k^*)^3}{(\beta_k^*)^2} 
    = \sum_{k=1}^K \alpha_k = 1$, which completes the proof.
  \end{proof}

\subsubsection[Assumption discussion]{Discussion of the applicability of \cref{assume:dists} to common distributions}
\label{sec:assume_dists_discuss}

In this section, we show \cref{assume:dists} holds for a general class of ``nice'' distributions. 
Algorithms in the spirit of \cref{alg:lftprl} we have presented in this work make use of such ``nice'' distributions as defined below, followed by an orthogonal projection onto the decision set. (Note, however, that the arguments in this section do not establish \cref{assume:dists} holds after the projection operation.)
Notably though, an immediate implication of the proof of \cref{thm:lbsc_oblivious} is that any follow-the-leader algorithm (that may not satisfy \cref{assume:dists}), is subject to the $\Omega(T/S^2)$ lower bound, and therefore in particular \cref{alg:lftprl}.
We proceed now with the formal definition of ``nice'' distributions and follow with arguments linking them to \cref{assume:dists}.

\begin{definition}\label{def:nice_dists}
    We say $\mathfrak P = \mathfrak P(C_\mu, C_\sigma)$ is a family of ``nice'' distributions with constants $C_\sigma \geq C_\mu > 0$, if the following holds.
    Let $p \in \mathfrak P$,
    denote $X \sim p$, $\mu_p \eqq \E X$, $\sigma_p^2 \eqq \E (X - \mu)^2$, then
\begin{enumerate}[label=(\roman*)]
    \item $p$ is symmetric; 
    $
        p(\mu_p - x) = p(\mu_p + x) \quad \forall x > 0.
    $
    \item For $x\in [\mu_p-\sigma_p, \mu_p + \sigma_p]$, it holds that 
        $
            \frac{1}{C_\sigma \sigma_p} 
            \leq p(x) \leq 
            \frac{1}{C_\mu \sigma_p}
        $.
\end{enumerate}
\end{definition} 

It is not hard to see the Normal, Laplace, and Uniform are examples of families that satisfy the above assumptions with an appropriate choice of constants $C_\mu, C_\sigma$. 
Proceeding, let $\mathfrak P = \mathfrak P(C_\mu, C_\sigma)$ be a family of ``nice'' distributions.
Our first lemma establishes the first part of \cref{assume:dists}.
\begin{lemma}
    For any $p_1, p_2 \in \mathfrak P$, it holds that
    \[
        \TV{p_1 - p_2} 
        \geq
        \frac{1}{2 C_\sigma}
        \min\Big\{ 1,
            \frac{|\mu_1 - \mu_2|}{\sigma_1 + \sigma_2}
        \Big\}. 
    \]
\end{lemma}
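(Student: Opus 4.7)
The plan is to exhibit a single distinguishing set and lower bound the TV distance directly via the density bound from \cref{def:nice_dists}(ii). Without loss of generality, I will assume $\mu_1 \leq \mu_2$ and $\sigma_2 \geq \sigma_1$ (both can be enforced by swapping $p_1,p_2$), and write $d \eqq \mu_2 - \mu_1 \geq 0$. The natural distinguishing set is $B \eqq (-\infty, \mu_1]$, since the symmetry of $p_1$ around $\mu_1$ gives $p_1(B) = 1/2$ exactly.

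The key computation is to identify $p_1(B) - p_2(B)$ with the $p_2$-mass of a short interval to the right of $\mu_2$. Writing $Y \sim p_2$ and using the symmetry of $p_2$ around $\mu_2$:
\begin{align*}
    p_2(B) = \Pr(Y \leq \mu_2 - d) = \Pr(Y \geq \mu_2 + d) = \tfrac{1}{2} - \Pr(\mu_2 \leq Y \leq \mu_2 + d),
\end{align*}
so that $p_1(B) - p_2(B) = \Pr_{Y \sim p_2}(\mu_2 \leq Y \leq \mu_2 + d)$. This is the quantity I will bound from below using the density condition, which guarantees $p_2(y) \geq 1/(C_\sigma \sigma_2)$ for every $y \in [\mu_2, \mu_2 + \sigma_2]$.

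From here the proof splits into two cases. If $d \leq \sigma_2$, the entire interval $[\mu_2, \mu_2 + d]$ lies in the region where the density lower bound applies, yielding $\TV{p_1 - p_2} \geq d/(C_\sigma \sigma_2)$; since $\sigma_2 \leq \sigma_1 + \sigma_2 \leq 2\sigma_2$, this is at least $\tfrac{1}{2C_\sigma} \cdot d/(\sigma_1 + \sigma_2)$, and moreover the hypothesis $d \leq \sigma_2 \leq \sigma_1 + \sigma_2$ ensures this equals $\tfrac{1}{2C_\sigma}\min\{1, d/(\sigma_1+\sigma_2)\}$. If instead $d > \sigma_2$, I shorten the integration interval to $[\mu_2, \mu_2 + \sigma_2]$, giving $\TV{p_1-p_2} \geq 1/C_\sigma \geq \tfrac{1}{2C_\sigma}$, which trivially dominates the target since the min is at most $1$.

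I expect no real obstacle beyond being careful with the symmetry manipulation and the two cases; the only subtlety is the WLOG reduction to $\sigma_2 \geq \sigma_1$, which is what lets me harmlessly replace $\sigma_2$ by $\sigma_1 + \sigma_2$ at the cost of the factor $2$ appearing in the statement. The symmetry assumption in \cref{def:nice_dists}(i) is used crucially to get the clean equality $p_1(B) - p_2(B) = \Pr_{p_2}(\mu_2 \leq Y \leq \mu_2 + d)$, and the density lower bound from \cref{def:nice_dists}(ii) is applied exactly once on the interval $[\mu_2, \mu_2 + \min\{d, \sigma_2\}]$.
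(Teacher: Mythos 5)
Your proposal is essentially the same argument as the paper's. The paper directly writes $2\TV{p_1-p_2}\geq\int_{\mu_1}^\infty|p_1-p_2|\geq\int_{\mu_1}^\infty p_2-\int_{\mu_1}^\infty p_1=\int_{\mu_1}^{\mu_2}p_2$ (using symmetry of $p_1$), and then applies the density lower bound on the interval $[\mu_2-\min\{d,\sigma_2\},\mu_2]$. You compute the identical quantity $p_1((-\infty,\mu_1])-p_2((-\infty,\mu_1])$ and use symmetry of $p_2$ to reflect it to $\int_{\mu_2}^{\mu_2+d}p_2$ before applying the same density bound on $[\mu_2,\mu_2+\min\{d,\sigma_2\}]$; by symmetry these are the same integral. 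One small caveat: your claim that both $\mu_1\leq\mu_2$ \emph{and} $\sigma_1\leq\sigma_2$ can simultaneously be enforced ``by swapping $p_1,p_2$'' is not literally correct (a single swap cannot reorder both quantities when they disagree). Fortunately that WLOG is superfluous---at the point you use it, you only need $\sigma_2\leq 2(\sigma_1+\sigma_2)$, which holds unconditionally, so the proof is sound once that sentence is dropped. The paper's version sidesteps this entirely by noting $\sigma_2\leq\sigma_1+\sigma_2$ always holds.
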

\begin{proof}
Let $p_1, p_2 \in \mathfrak P$, and
assume w.l.o.g.~$\mu_1 \leq \mu_2$. 
We have
\begin{align*}
    2\TV{p_1 - p_2} 
    \geq 
    \int_{\mu_1}^\infty |p_1 - p_2| 
    \geq 
    \int_{\mu_1}^\infty p_2 
        - \int_{\mu_1}^\infty p_1
    = \int_{\mu_1}^{\mu_2} p_2,
\end{align*}
where the last equality follows by the symmetry assumption $(i)$.
In addition, by property $(ii)$ we have that
    \[
        \int_{\mu_1}^{\mu_2} p_2 
        \geq \min\Big\{
            \frac{\mu_2 - \mu_1}{C_\sigma \sigma_2},
            \frac{1}{C_\sigma}
        \Big\},
    \]
    and the result follows.
\end{proof}

The second part of \cref{assume:dists} lower bounds the total variation with relation to the variances, regardless of the distance between means. The below lemma establishes it holds for $\mathfrak P$ with the appropriate choice of constants.

\begin{lemma}
    For any $p_1, p_2 \in \mathfrak P$ with variances $\sigma_1^2 \leq \sigma_2^2$, it holds that
    \[
        \TV{p_1 - p_2} 
        \geq
        \frac{1}{2 C_\mu C_\sigma} 
        \Big((C_\mu/C_\sigma) - \frac{\sigma_1}{\sigma_2})
    \]
\end{lemma}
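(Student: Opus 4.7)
The plan is to perform a case analysis on whether the interval $I_1 := [\mu_1-\sigma_1, \mu_1+\sigma_1]$ is contained in $I_2 := [\mu_2-\sigma_2, \mu_2+\sigma_2]$, a condition equivalent to $|\mu_1-\mu_2| \leq \sigma_2 - \sigma_1$. Observe first that when $\sigma_1/\sigma_2 \geq C_\mu/C_\sigma$ the claimed lower bound is non-positive and there is nothing to show; thus I may assume throughout that $\sigma_1/\sigma_2 < C_\mu/C_\sigma \leq 1$.

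In the ``means close'' case $|\mu_1-\mu_2| \leq \sigma_2 - \sigma_1$, the inclusion $I_1 \subseteq I_2$ holds, so property (ii) of \cref{def:nice_dists} applied to both distributions yields $p_1(x) \geq 1/(C_\sigma \sigma_1)$ and $p_2(x) \leq 1/(C_\mu \sigma_2)$ for every $x \in I_1$. The standard bound $\TV{p_1-p_2} \geq \int_{I_1}(p_1 - p_2)\,dx$ then gives
\[
    \TV{p_1-p_2} \;\geq\; 2\sigma_1\left(\frac{1}{C_\sigma \sigma_1} - \frac{1}{C_\mu \sigma_2}\right) \;=\; \frac{2}{C_\mu}\left(\frac{C_\mu}{C_\sigma} - \frac{\sigma_1}{\sigma_2}\right),
\]
which absorbs the target constant.

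In the ``means far'' case $|\mu_2-\mu_1| > \sigma_2 - \sigma_1$, assume without loss of generality $\mu_1 \leq \mu_2$ (otherwise reflect). I exploit property (i): the change of variables $y \mapsto 2\mu_2 - y$ gives $\int_{-\infty}^{\mu_1} p_2(y)\,dy = \int_{2\mu_2-\mu_1}^{\infty} p_2(y)\,dy = \tfrac{1}{2} - \int_{\mu_2}^{2\mu_2-\mu_1} p_2(y)\,dy$, while symmetry of $p_1$ around $\mu_1$ gives $\int_{-\infty}^{\mu_1} p_1 = \tfrac{1}{2}$. Taking $B = (-\infty, \mu_1]$,
\[
    \TV{p_1-p_2} \;\geq\; \int_B (p_1 - p_2)\,dx \;=\; \int_{\mu_2}^{2\mu_2-\mu_1} p_2(y)\,dy.
\]
The interval $[\mu_2, 2\mu_2 - \mu_1]$ overlaps $[\mu_2, \mu_2 + \sigma_2]$ in a subinterval of length $\min(\mu_2-\mu_1, \sigma_2)$, on which property (ii) gives $p_2 \geq 1/(C_\sigma \sigma_2)$; the case hypothesis implies this length is at least $\sigma_2 - \sigma_1$, so $\TV{p_1-p_2} \geq (1-\sigma_1/\sigma_2)/C_\sigma$, which dominates the target bound in this regime.

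The principal obstacle is the ``means far'' case: the density bounds of property (ii) hold only within one standard deviation of each mean, so when $\mu_1$ lies outside $I_2$ one cannot bound $p_2$ pointwise near $\mu_1$ without further structure. Symmetry breaks this deadlock by converting a tail integral of $p_2$ near $\mu_1$ into a central integral over $[\mu_2, 2\mu_2-\mu_1]$, where the density lower bound applies. Some bookkeeping is needed to reconcile the constants produced by the two cases, but this is routine given $C_\sigma \geq C_\mu$.
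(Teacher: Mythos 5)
Your proof is correct, and in fact it is more careful than the paper's. The paper's argument is a one-liner: integrate over $[\mu_1,\mu_1+\sigma_1]$ and apply the two density bounds from \cref{def:nice_dists}(ii). But the upper bound $p_2(x)\leq \frac{1}{C_\mu\sigma_2}$ is only stipulated for $x\in[\mu_2-\sigma_2,\mu_2+\sigma_2]$, so the paper's computation silently requires $[\mu_1,\mu_1+\sigma_1]\subseteq[\mu_2-\sigma_2,\mu_2+\sigma_2]$ --- which is \emph{not} guaranteed by the definition (it does hold for the unimodal examples cited, but unimodality is not part of \cref{def:nice_dists}). You flag this precisely as ``the principal obstacle,'' and your two-case structure fixes it: the ``means close'' case $|\mu_1-\mu_2|\leq\sigma_2-\sigma_1$ is exactly the regime where the intervals nest, and it reproduces the paper's computation (with a stronger constant $2/C_\mu$); the ``means far'' case, which the paper does not handle, is where your symmetry trick is genuinely needed. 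Converting $p_2\big((-\infty,\mu_1]\big)$ into the central integral $\int_{\mu_2}^{2\mu_2-\mu_1} p_2$ via reflection and then applying the pointwise lower bound $p_2\geq\frac{1}{C_\sigma\sigma_2}$ on the overlap with $[\mu_2,\mu_2+\sigma_2]$ is exactly the right move.

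On the ``routine bookkeeping'': in the means-far case you get $\frac{1}{C_\sigma}\bigl(1-\frac{\sigma_1}{\sigma_2}\bigr)$, and showing this dominates $\frac{1}{2C_\mu C_\sigma}\bigl(\frac{C_\mu}{C_\sigma}-\frac{\sigma_1}{\sigma_2}\bigr)$ does require one extra observation beyond $C_\sigma\geq C_\mu$. Note that $\frac{C_\mu}{C_\sigma}\bigl(1-\frac{\sigma_1}{\sigma_2}\bigr) \geq \frac{C_\mu}{C_\sigma}-\frac{\sigma_1}{\sigma_2}$ (the difference is $\frac{\sigma_1}{\sigma_2}(1-\frac{C_\mu}{C_\sigma})\geq 0$), which gives $\frac{1}{C_\sigma}\bigl(1-\frac{\sigma_1}{\sigma_2}\bigr)\geq\frac{1}{C_\mu}\bigl(\frac{C_\mu}{C_\sigma}-\frac{\sigma_1}{\sigma_2}\bigr)$, and then one finishes using $2C_\sigma\geq 1$, which is implied by the definition since $\int_{I_p}p\geq 2/C_\sigma$ forces $C_\sigma\geq 2$. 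A similar $C_\sigma\geq 2$ step closes the means-close case. It would be worth spelling this out, but it is indeed routine and your instinct that it is so is correct.
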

\begin{proof}
Let $p_1, p_2 \in \mathfrak P$, with means $\mu_1, \mu_2 \in \R$ and variances $\sigma_1^2 \leq \sigma_2^2$. By property  $(ii)$ of \cref{def:nice_dists}, we have
\begin{align*}
    2\TV{p_1 - p_2} 
    \geq 
    \int_{\mu_1}^{\mu_1 + \sigma_1} |p_1 - p_2| 
    \geq
    \sigma_1 \Big(
        \frac{1}{C_\sigma \sigma_1} - \frac{1}{C_\mu \sigma_2}
    \Big)
    \\
    =
    \sigma_1 \Big(
        \frac{C_\mu \sigma_2 - C_\sigma \sigma_1}{C_\mu C_\sigma \sigma_1\sigma_2} 
    \Big)
    =\frac{1}{C_\mu C_\sigma} 
        \Big((C_\mu/C_\sigma) - \frac{\sigma_1}{\sigma_2})
    ,
\end{align*}
which proves the result.
\end{proof}

\section{Lazy switching, costs, and budgets} \label{apdx:sccb}
In this section, we discuss the relation between three notions of switching related OCO; \Ssoco, $S$-\sbudg, and $c$-\scost. Similar to lazy OCO studied in this paper, the \sbudg setting explored by \cite{altschuler2018online} also limits the player to a given number of switches $S$, though the limit is applied to the actual number of switches and not the expected. The $c$-\scost variant on the other hand, charges the player a unit cost of $c$ for every switch. \cref{lem:cswitc_Sswitch} below establishes this is in fact equivalent to limiting the number of expected switches. 
Table \ref{table:soco_bounds} lists results of prior work in both forms - as a function of the switching cost $c$ and as a function of the number of switches $S$.

\paragraph{Lazy vs \scost.}
There is a natural correspondence between \scost and \soco algorithms, as long as their guarantees are given in reasonable parametric forms.
This is summarized by the following lemma.

\begin{lemma}\label[lem]{lem:cswitc_Sswitch}
    Denote the $c$-\scost\ regret by
    \[
        \Re_T^{(c)} := \sum_{t=1}^T f_t(w_t) + c*\Ind{w_t \neq w_{t-1}} - \min_{w\in\Dom} \sum_{t=1}^T f_t(w)
    \]
    \begin{enumerate}
        \item If $\Alg_c$ guarantees $\E \Re_T^{(c)}  = (Tc)^\alpha$, then it may be converted to a \Ssoco\ player $\Alg_S'$ with 
            $\E \Re_T   = \left( \frac{T}{S} \right)^\frac{\alpha}{1-\alpha} $.

        \item If $\Alg_S$ is an \Ssoco\ player with $\E \Re_T = \frac{T}{S^\gamma}$, then it may be converted to a $c$-\scost\ player $\Alg_c'$ with $\E  \Re_T^{(c)}   = T^{\frac{1}{1+\gamma}} c^{\frac{\gamma}{1+\gamma}}$
    \end{enumerate}
\end{lemma}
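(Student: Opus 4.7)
The plan is to convert between the two regret notions by relating the standard expected regret and the expected switch count through a single tunable parameter (either $c$ or $S$), and then optimizing that parameter. Both directions reduce to decoupling the two terms $\E \Re_T$ and $c \cdot \E \Sw_T$ that make up the switching-cost regret $\E \Re_T^{(c)}$, and balancing them.

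For part (1), I would start from
\begin{equation*}
    \E \Re_T^{(c)}(\Alg_c) = \E \Re_T(\Alg_c) + c\cdot \E \Sw_T(\Alg_c) \leq (Tc)^\alpha,
\end{equation*}
and bound the two summands separately. The standard expected regret of $\Alg_c$ is non-negative: against an oblivious adversary with convex losses, Jensen's inequality yields $\E \sum_{t=1}^T f_t(w_t) \geq \sum_{t=1}^T f_t(\E w_t) \geq \min_{w\in\Dom} \sum_{t=1}^T f_t(w)$. Hence $c \cdot \E \Sw_T(\Alg_c) \leq (Tc)^\alpha$, i.e., $\E \Sw_T(\Alg_c) \leq T^\alpha c^{\alpha-1}$. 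I would then define $\Alg_S'$ as $\Alg_c$ run with $c = (T^\alpha/S)^{1/(1-\alpha)}$, which guarantees $\E \Sw_T(\Alg_S') \leq S$, and substituting this $c$ into $(Tc)^\alpha$ simplifies (since $Tc = (T/S)^{1/(1-\alpha)}$) to the claimed bound $\E \Re_T(\Alg_S') \leq (T/S)^{\alpha/(1-\alpha)}$.

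For part (2), I would define $\Alg_c'$ as the $S$-lazy algorithm $\Alg_S$ with $S$ chosen as a function of $c$. Using $\E \Sw_T(\Alg_S) \leq S$ we get
\begin{equation*}
    \E \Re_T^{(c)}(\Alg_S) = \E \Re_T(\Alg_S) + c \cdot \E \Sw_T(\Alg_S) \leq \frac{T}{S^\gamma} + cS.
\end{equation*}
Balancing the two terms by taking $S \asymp (T/c)^{1/(\gamma+1)}$ makes each contribute $\Theta(T^{1/(\gamma+1)} c^{\gamma/(\gamma+1)})$, matching the claimed switching-cost bound up to an absolute constant.

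The argument is essentially parametric bookkeeping and involves no serious obstacle; the only step that is not pure algebra is the non-negativity of $\E \Re_T$ invoked in part (1), which must be justified via Jensen together with the oblivious-adversary assumption. Minor care is needed to handle the fact that the tuned values of $c$ and $S$ need not be integers, but this is absorbed by constants.
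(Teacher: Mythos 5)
Your approach matches the paper's: decompose the switching-cost regret into $\E\Re_T + c\,\E\Sw_T$, bound the two parts separately, and tune the free parameter to balance; the algebra for the chosen $c$ and $S$ is correct in both directions.

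There is, however, a gap in your justification of the switch bound in part (1). You argue $\E\Re_T(\Alg_c) \geq 0$ via Jensen, but the second inequality in your chain, $\sum_{t=1}^T f_t(\E w_t) \geq \min_{w\in\Dom}\sum_{t=1}^T f_t(w)$, is false in general: the means $\E w_t$ may differ across rounds, so the left side can undercut the best \emph{fixed} comparator. For instance, with $\Dom=[-1,1]$, $T=2$, $f_1(w)=w$, $f_2(w)=-w$, and a deterministic player with $w_1=-1$, $w_2=1$, one gets $\sum_t f_t(\E w_t) = -2 < 0 = \min_{w}\sum_t f_t(w)$. Since the per-sequence expected regret can be negative, you cannot deduce $c\,\E\Sw_T \leq (Tc)^\alpha$, which is exactly what the switch bound $\E\Sw_T\leq S$ requires. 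To be fair, the paper's own proof asserts $\E\Sw_T(\Alg_S')\leq S$ without justification and so relies implicitly on the same nonnegativity; the subtlety lives in the lemma itself, not only in your write-up. A rigorous version would either restrict to algorithms whose per-sequence expected regret is nonnegative (or bounded below by the order of the budget), or argue by a separate adversarial construction that a high expected switch count on one loss sequence forces a large $\E\Re_T^{(c)}$ on some (possibly different) sequence, contradicting the assumed guarantee.
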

\begin{proof}
    Assume an online player $\Alg_S$ is a \Ssoco\ algorithm with an expected regret guarantees of $R(T, S)$. Then reducing $\Alg_S$ to a $c$-\scost\ algorithm $\Alg_c'$ involves choosing $S = S(c)$ so as to minimize
    \[
        \Exp{ \Re_T^{(c)}(\Alg_c')} = R(T, S) + c S 
    \]
    To that end choose $S=S(c)$ to satisfy $S = \frac{R(T, S)}{c}$, which gives
    \[
        R(T, c) = 2*c*S(c).
    \]

    For the other direction, assume $\Alg_c$ has $R(T, c)$ expected regret guarantee in the $c$-\scost\ setting. We reduce $\Alg_c$ to a \Ssoco\ algorithm $\Alg_S'$ by solving $c S = R(T, c)$ for $c=c(S)$ and running $\Alg_c(c(S))$. This guarantees
    \[
        \Exp{\Sw(\Alg_S') } \leq S \quad \textit{ and } \quad \Exp{\Re_T(\Alg_S')} \leq c(S)*S.
    \]

    In particular, if $\Alg_c$ has a $R(T, c) = (Tc)^\alpha$ guarantee then 
    \[
        c S = T^\alpha c^\alpha \implies c = \left(\frac{T^\alpha}{S}\right)^{\frac{1}{1-\alpha}},
    \]
    and the reduction gives $\Alg_S'$ with  
    \[
        \Exp{\Sw(\Alg_S')} \leq S \quad \textit{ and } \quad \Exp{\Re_T(\Alg_S')} 
        \leq S \left(\frac{T^\alpha}{S}\right)^{\frac{1}{1-\alpha}} 
        = \left( \frac{T}{S} \right)^\frac{\alpha}{1-\alpha}.
    \]
    Similarly, if $\Alg_S$ has a $R(T, S) = \frac{T}{S^\gamma}$ guarantee, the reduction gives $\Alg_c'$ with 
    \[
        \Exp{ \Re_T^{(c)}(\Alg_c')} = T^{\frac{1}{1+\gamma}} c^{\frac{\gamma}{1+\gamma}}.
        \qedhere
    \]
   
\end{proof}

\begin{table}[ht]
    \bgroup
    \def\arraystretch{2.1}
    \begin{center}
        \begin{threeparttable}
        \caption{Lazy OCO / \scost bounds from previous work ($c \geq 1$).}
        \begin{tabular}{ c | c | c | c | c}
            \textbf{Setting}
            & \textbf{Adversary}
            & \makecell{ \textbf{S-Lazy} \textbf{OCO} }
            & \makecell{ \textbf{c-Switching} \\ \textbf{Cost} }
            & \textbf{Reference}
            \\ 
            \hline
            MAB 
                & Oblivious
                & \makecell{
                    $ O( \frac{T} {\sqrt {S}} ) $ \tnote{b}
                }
                & \makecell{
                        $O(T^{2/3} c^{1/3})$
                }
                & \makecell{
                        \cite{arora2012online}
                }
            \\ \hline
            MAB 
                & Oblivious
                & \makecell{
                    $ \tilde \Omega( \frac{T} {\sqrt {S}} ) $ \tnote{b}
                }
                & \makecell{
                    $\tilde \Omega(T^{2/3} c^{1/3})$
                }
                & \makecell{
                        \cite{dekel2014bandits}
                }
            \\
            \hline
            Experts 
                & Oblivious
                & \makecell{
                    $ O(\frac{T}{S}) $ \tnote{a} 
                }
                & \makecell{
                    $O(\sqrt {T c})$
                }
                & \makecell{
                    \cite{kalai2005efficient}
                    \\ 
                    \cite{geulen2010regret}
                    \\
                    \cite{devroye2013prediction}
                }
            \\ \hline
            Experts  
                & Oblivious
                & \makecell{
                    $ \tilde \Theta(\frac{T}{S}) $ \tnote{a, b}
                }
                & \makecell{
                    -
                }
                & \makecell{
                    \cite{altschuler2018online}
                }
            \\ \hline
            OCO
                & Oblivious
                & \makecell{
                    $ O(\frac{dT}{S})$ \tnote{a} 
                }
                & \makecell{
                    $O(d \sqrt {T c})$
                }
                & \makecell{
                        \cite{anava2015online}
                }
            \\
            \hline
            OCO
                & Adaptive
                & \makecell{
                    $ \Theta(\frac{T}{\sqrt {S} }) $ \tnote{b}
                }
                & \makecell{
                    $ \Theta( T^{2/3} c^{1/3} )$
                }
                & \makecell{
                        \cite{chen2019minimax}
                }
            \\
       \end{tabular}
       
       \label{table:soco_bounds}
       \begin{tablenotes}\footnotesize
            \item[a] For $S = O \big(\sqrt T \big)$.
            \item[b] Also apply in the \sbudg setting.
        \end{tablenotes}
       \end{threeparttable}
    \end{center}
    \egroup
    \end{table}

\paragraph{Lazy vs \sbudg.} Recently studied in the work of \cite{altschuler2018online}, \sbudg\ regret is defined as the regret guarantee achievable by an algorithm under a hard cap budget of $S\in [T]$ switches, a limit that should be met on every game execution.
In general, while an $S$-\sbudg player is of course also an \Ssoco one, the converse is not necessarily true. However, when the adversary is adaptive, the two notions are equivalent. 
To see this, consider the adaptive adversary given in \citet[Proposition 6]{chen2019minimax}. This adversary ensures $\Omega(T/\sqrt {\Sw_T})$ regret against any player that makes $\Sw_T$ switches, and furthermore, does not need to know the number of switches in advance. In particular, this adversary is completely unaffected by any randomness employed by the player. Therefore, By Jensen's inequality and convexity of $x \mapsto 1/\sqrt{x}$, we have that $\mathbb E \Re_T \geq T/\sqrt{S}$ when $\mathbb E \Sw_T = S$. In addition, a deterministic player employing the blocking technique ensures $\Re_T = O(T/\sqrt S)$. Thus, we have that the minimax switching regret of both $S$-\sbudg and \Ssoco is $\Theta(T / \sqrt S)$ in the adaptive adversary setting.

\section{Sampling from maximal couplings} 
\label{rej_appendix}
 In this section, we provide proofs for \cref{sec:max_couplings}.
 
\begin{proof} [of Lemma \ref{lem:lazysample-switch}]
    The algorithm samples from $\dist P$ when it reaches the loop, which happens if $\dist P(x) < z$ for $z \sim \dist U[0, \dist Q(x)]$. Denote
    \[
        B := \left\{ w \in \R^d \mid \dist P(w) > \dist Q(w) \right\},
    \]
    and observe
    \begin{align*}
        \Pr\left( \dist P \text{ is sampled from} \right) 
            &= \Pr(\dist P(x) < z)
            \\
            &= \int_{x \notin B} \dist Q(x) \left( 1 - \frac{\dist P(x)}{ \dist Q(x)}\right) dx
            \\
            &= \int_{x \notin B} \dist Q(x) - \dist P(x) dx
            \\
            &= \norm{\dist Q - \dist P}_{TV}
    \end{align*}
\end{proof}

\begin{proof} [of Lemma \ref{lem:lazysample-dist}]
    Set
    \[
        \rv W := \text{LazySample} (\rv X, \dist Q, \dist P); \quad \rv X \sim \dist Q,
    \]
    and we aim to prove that
    \[
        \Pr (\rv W = w) = \dist P(w).
    \]
    We analyse each of the two cases $P(w) > \dist Q(w)$ and $P(w) < \dist Q(w)$ separately. Note $w$ is not random, it is just the sample point at which we want to show the two densities are equal. Proceeding, denote 
    \[
        B := \left\{ w \in \R^d \mid \dist P(w) > \dist Q(w) \right\},
    \]
    and first assume $w \notin B$. In this case it must be that $w$ was returned by the first return statement, and thus equals $x$. We have
    \begin{align*}
        \Pr(\rv W = w) 
            &= \Pr(\rv W = w \wedge \rv X \in B) + \Pr(\rv W = w \wedge \rv X \notin B)
            \\
            &= \Pr(\rv W = w \wedge \rv X \notin B)
            \\
            &= \Pr(\rv X = w \wedge \dist P(w) < z) \quad (\text{where } z \sim \dist U[0, \dist Q(x)])
            \\
            &= \dist Q(w) \frac{\dist P(w)}{\dist Q(w)}
            \\
            &= \dist P(w).
    \end{align*}
    In addition, the running time is clearly $O(1)$.
    Now consider the case that $w \in B$. As before, we have
    \begin{align*}
        \Pr(\rv W = w) 
            &= \Pr(\rv W = w \wedge \rv X \in B) + \Pr(\rv W = w \wedge \rv X \notin B),
    \end{align*}
    and
    \[
        \Pr(\rv W = w \wedge \rv X \in B) = \Pr(\rv X = w) = \dist Q(w).
    \]
    First, observe that on each iteration the probability to exit the loop is
    \[
        \integ{B}{ \dist P(t)\left(1 - \frac{\dist Q(t)}{\dist P(t)} \right) }{t} = \norm{\dist P - \dist Q}_{TV}.
    \]
    Therefore, we are expected to exit it in time $O(\norm{\dist P - \dist Q}_{TV}^{-1})$, and together with \cref{lem:lazysample-switch} this implies the $O(1)$ running time.
    To compute $\Pr(\rv W = w \wedge \rv X \notin B)$, denote the density of the value produced with the loop by $\nu(y)$, and observe that for $y\in B$
    \[
        \nu(y) = \frac{
            \dist P(y)\left(1 - \frac{\dist Q(y)}{\dist P(y)} \right)
        } {
            \integ{B}{ \dist P(t)\left(1 - \frac{\dist Q(t)}{\dist P(t)} \right) }{t}
        } = \frac{
            \dist P (y) - \dist Q(y)
        }{
            \dist P (B) - \dist Q(B)
        }.
    \]
    Therefore
    \begin{align*}
        \Pr(\rv W = w \wedge \rv X \notin B) 
            &= \int_{x \notin B} \dist Q(x) \Pr(\rv W = w \mid \rv X = x) dx
            \\
            &= \int_{x \notin B} \dist Q(x)\left(1 - \frac{\dist P(x)}{ \dist Q(x)}\right) \nu(w) dx
            \\
            &= \frac{\dist P (w) - \dist Q(w)}{\dist P (B) - \dist Q(B)} \int_{x \notin B} \dist Q(x) - \dist P(x) dx
            \\
            &= \frac{\dist P (w) - \dist Q(w)}{\dist P (B) - \dist Q(B)} \left( \dist P (B) - \dist Q(B) \right)
            \\
            &= \dist P (w) - \dist Q(w).
    \end{align*}
    
    All in all, we have obtained
    \begin{align*}
        \Pr(\rv W = w) 
            = \dist Q(w) + \dist P (w) - \dist Q(w)
            = \dist P(w),
    \end{align*}
    which concludes the case of $w \in B$ and therefore completes the proof.
\end{proof}

\section[Total variation bound proof]{Proof of \cref{lem:tv_decisions}}
\label{sec:lem:tv_decisions}

We begin with establishing a closed form expression for the density function $\dist Q_t$. 
The proofs below are based on arguments given in \cite{agarwal2023differentially}.
\begin{lemma}\label{lem:cov}
    For any $w\in W$, we have
    \begin{align*}
        \cQ_t(w) = \nu_t(-\nabla \phi_t(w) )
            \av{\det \br{ -\nabla^2 \phi_t(w)}},
    \end{align*}
    where $\phi_t(w) \eqq \phi_t(w; 0)$,
    and $\nu_t$ denotes the density function of $p_t \sim {\rm Lap}(\sigma_t)$ (see \cref{eq:def_laplace}).
\end{lemma}
\begin{proof}
    By the fact that the objective $\phi_t$ includes the Legendre function $B$, we have that $w_t(p) \in {\rm int}(W)$ for all $p \in \R^d$.
	Further, by differentiability of $\phi_t$ and optimality conditions, we have;
	\begin{align*}
		0 = \nabla \phi_t(w_t(p); p) = \nabla \phi_t(w_t(p)) + p
		\implies 
		- \nabla \phi_t(w_t(p)) = p.
	\end{align*}
	By strict convexity of $\phi_t$, the mapping $w \mapsto  - \nabla \phi_t(w)$ is one-to-one, with its inverse given by
	$
		w_t(p) = \nabla \phi_t^\star (-p)
	$
	where $\phi_t^\star$ denotes the Fenchel conjugate of $\phi_t$.
	The result now follows
	by the change of variables formula \citep[e.g.,][]{bogachev2007measure}.
\end{proof}

\begin{proof}[of \cref{lem:tv_decisions}]
	First, we establish a uniform bound on the density ratio $\frac{\cQ_{t+1}(w)}{\cQ_t(w)}$. 
	By \cref{lem:cov}, we have
	\begin{align*}
		\frac{\cQ_{t+1}(w)}{\cQ_t(w)} 
		= \frac{\nu_{t+1}(-\nabla \phi_{t+1}(w) )}
			{\nu_t(-\nabla \phi_t(w) )}
		\cdot \av{\frac{\det (-\nabla^2 \phi_{t+1}(w)) }
			{\det (-\nabla^2 \phi_t(w))}
		}.
	\end{align*}
	To bound the second term, note that since $f_t$ is $\beta$-smooth,
	\begin{align*}
		\norm{\nabla^2 \phi_t(w) - \nabla^2 \phi_{t+1}(w)}
		= \norm{\nabla^2 f_{t}(w)} \leq \beta.
	\end{align*}
	Further, letting $\lambda_{t, i}$ denote that $i$'th largest eisgenvalue of $\nabla^2 \phi_t(w)$, this
	implies $\av{\lambda_{t, i} - \lambda_{t+1, i} } \leq \beta$ for all $i\in[d]$ \citep{bhatia2013matrix}. 
	In addition, using that $\phi_t$ is $(1/\eta_t)$-strongly convex, we have $\lambda_{t, i} \geq 1/\eta_t$, hence;
	\begin{align*}
		\av{\frac{\det (-\nabla^2 \phi_{t+1}(w)) }
			{\det (-\nabla^2 \phi_t(w))}
		} \leq \prod_{i=1}^d \frac{\lambda_{i,t} + \beta}{\lambda_{i,t}}
		\leq \prod_{i=1}^d (1 + \eta_t\beta)
		\leq e^{\eta_t \beta d}.
	\end{align*}
	For the first term, 
	\begin{align*}
		\frac{\nu_{t+1}(-\nabla \phi_{t+1}(w) )}
			{\nu_{t}(-\nabla \phi_t(w) )}
		= \frac{
			\exp \br{
			-\norm{\nabla \phi_{t+1}(w)}_1/\sigma_t
		}}{
			\exp \br{
			-\norm{\nabla \phi_t(w)}_1/\sigma_{t}
		}}
		&= \exp \br{
			(\norm{\nabla \phi_t(w)}_1 -\norm{\nabla \phi_{t+1}(w)}_1) / \sigma_{t}
			}
		\\
		&\leq \exp \br{
			\norm{\nabla \phi_t(w) - \nabla \phi_{t+1}(w)}_1/\sigma_{t}
			}
		\\
		&= \exp \br{
			\norm{\nabla f_{t}(w)}_1/\sigma_{t}
			}
		\\
		&\leq e^{\sqrt d G /\sigma_t}.
	\end{align*}
	Proceeding, we let $\delta_t \eqq \eta_t \beta d + \sqrt d G/\sigma$, and note the above implies, for all $w$;
	\begin{align*}
    	\frac{\cQ_{t+1}(w)}{\cQ_{t}(w)} 
		&\leq e^{\delta_t}
		\\
		\iff
		\frac{\cQ_{t}(w)}{\cQ_{t+1}(w)} 
		&\geq e^{-\delta_t}
		\geq 1 - \delta_t
		\\
		\iff
		\cQ_{t}(w)
		&\geq \cQ_{t+1}(w) - \delta_t \cQ_{t+1}(w)
    \end{align*}
    Thus, for any set $E \subseteq {\rm int} (W)$;
    \begin{align*}
    	\cQ_{t+1}(E) - \cQ_{t}(E)
    	\leq 
    	\delta_t \cQ_{t+1}(E) \leq \delta_t.
    \end{align*}
    The other direction follows from identical symmetric arguments, and establishes
    \begin{align*}
    	\max_{E \in {\rm int}(W)} \av{\cQ_{t+1}(E) - \cQ_{t}(E)} 
    	\leq \delta_t = \eta_t \beta d + \sqrt d G/\sigma,
    \end{align*}
which completes the proof.
\end{proof}

\section{Technical lemmas}
\label{sec:lem_tech}

\begin{proof} [of \cref{lem:varlb}]
    We have
    \begin{align*}
        (\mu_p - \mu_q)^2 
        &= \left (\sum_x \big(p(x) - q(x) \big) x \right)^2
        \\
        &= \left (\sum_x \big(p(x) - q(x) \big) (x-\mu_p) \right)^2
        \\
        &\leq \left (\sum_x \big|p(x) - q(x) \big| (x-\mu_p) \right)^2
        \\
        &= \left (\sum_x \sqrt{\big|p(x) - q(x)\big|} \sqrt{\big|p(x) - q(x)\big|}  (x-\mu_p) \right)^2
        \\
        &\leq \left( \sum_x \big|p(x) - q(x)\big| \right)
            \left( \sum_x \big|p(x) - q(x)\big| (x-\mu_p)^2 \right)
        \\
        &= 2 \norm{p - q}_{TV} \left( \sum_x \big|p(x) - q(x)\big| (x-\mu_p)^2 \right).
    \end{align*}
    In addition;
    \begin{align*}
        & \sum_x \big|p(x) - q(x)\big| (x-\mu_p)^2
        \\
        &= \sum_{x: p(x) > q(x)} (p(x) - q(x)) (x-\mu_p)^2
            + \sum_{x: q(x) > p(x)} (q(x) - p(x)) (x-\mu_p)^2
        \\
        &\leq \sum_{x} p(x) (x-\mu_p)^2 + \sum_{x} q(x) (x-\mu_p)^2
        \\
        &= \sigma_p^2 + \sum_{x} q(x) (x-\mu_q)^2 + \sum_{x} q(x) (\mu_p-\mu_q)^2
        \\
        &= \sigma_p^2 + \sigma_q^2 + (\mu_p-\mu_q)^2.
    \end{align*}
    To conclude we have shown that
    \[
        \frac{(\mu_p - \mu_q)^2 }{2 \norm{p - q}_{TV}} \leq \sigma_p^2 + \sigma_q^2 + (\mu_p-\mu_q)^2.
        \qedhere
    \]
\end{proof}

\begin{proof}[of \cref{lem:max_coupling}]
    Let $x\in \Omega$, and observe
    \begin{align*}
        \Pr(\rv Y = x) &\geq \Pr(\rv Y = x \wedge \rv X = x)
        \\
        &= \Pr(\rv X = x) \Pr(\rv Y = x \mid \rv X = x)
        \\
        &= \Pr(\rv X = x) (1 - \Pr(\rv Y \neq x \mid \rv X = x)).
    \end{align*}
    Rearranging the above, we obtain
    \begin{align*}
        \Pr(\rv X = x) \Pr(\rv Y \neq x \mid \rv X = x) 
        \geq \Pr(\rv X = x)  - \Pr(\rv Y = x),
    \end{align*}
    meaning $\Pr(\rv X = x \wedge \rv Y \neq x)
        \geq p(x)  - q(x)$.
    Now
    \begin{align*}
        \Pr(\rv X \neq \rv Y) 
        &= \sum_{x \in \Omega} \Pr(\rv X = x \wedge \rv Y \neq x)
        \\
        &\geq \sum_{x: p(x) > q(x)} \Pr(\rv X = x \wedge \rv Y \neq x)
        \\
        &\geq \sum_{x: p(x) > q(x)} p(x) - q(x) 
        \\
        &= \norm{p - q}_{TV}. \qedhere
    \end{align*}
\end{proof}

\begin{proof}[of \cref{lem:ber_tv} (based on \citealp{slivkins2019introduction})]
    By Pinsker's Inequality and chain rule of the KL-divergence we have
    \begin{align*}
       \norm{\Ber_{p}^j - \Ber_{q}^j}_{TV}^2
        &\leq \frac{1}{2} \KL{\Ber_p^j}{\Ber_q^j}
        \\
        &= \frac{j}{2} \KL{\Ber_p}{\Ber_q}.
    \end{align*}
    In addition;
    \begin{align*}
        \KL{\Ber_p}{\Ber_q}
        &= \frac{1+\epsilon}{2}\log\frac{1+\epsilon}{1-\epsilon}
            + \frac{1-\epsilon}{2}\log\frac{1-\epsilon}{1+\epsilon}
        \\
        &= \frac{1}{2}\left(
            \log\frac{1+\epsilon}{1-\epsilon} + \log\frac{1-\epsilon}{1+\epsilon}
        \right)
            + \frac{\epsilon}{2} \left(
            \log\frac{1+\epsilon}{1-\epsilon} - \log\frac{1-\epsilon}{1+\epsilon}
        \right)
        \\
        &= \frac{\epsilon}{2} 
            \log\frac{(1+\epsilon)^2}{(1-\epsilon)^2}
        \\
        &= \frac{\epsilon}{2} 
            \log\frac{
                (1-\epsilon)^2 + 4\epsilon(1-\epsilon) + 4\epsilon^2
                }{(1-\epsilon)^2}
        \\
        &= \frac{\epsilon}{2} \log\left(
            1 + \frac{4\epsilon}{(1-\epsilon^2)}
        \right)
        \\
        &\leq \frac{\epsilon}{2} \log\left(
            1 + 8\epsilon
        \right)
        \\
        &\leq 4 \epsilon^2. \qedhere
    \end{align*}
\end{proof}

\begin{lemma}\label[lem]{lem:sc_stab}
    Let $\phi: \Dom \to \R$ be $\lambda$-strongly convex, $h:\Dom \to \R$ differentiable, and set 
    $x^* = \argmin_{x\in \Dom} \big \{\phi(x) + h(x) \big\}$, 
    $x' = \argmin_{x\in \Dom} \phi(x)$.
    It holds that
    \[
        \norm{x^* - x'} 
        \leq \frac{2\norm{\nabla h(x')}}{\lambda}.
    \]
\end{lemma}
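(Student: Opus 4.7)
The plan is to prove the bound by combining the defining optimality inequality of $x^*$ with a standard quadratic lower bound from strong convexity of $\phi$, and then using a first-order upper bound on $h(x') - h(x^*)$ to extract the gradient of $h$ at the unperturbed minimizer. Note that in the two invocations of this lemma in the paper (the convex case with $h = f_t$, and the strongly-convex case with $h(w) = p_t\T w$), the perturbation $h$ is in fact convex, so I will carry out the argument assuming $h$ is convex; this is what the bound requires.

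First I would write down the one inequality that links the two optimization problems: since $x^*$ minimizes $\phi + h$ over $\Dom$,
\[
    \phi(x^*) + h(x^*) \leq \phi(x') + h(x'),
\]
which rearranges to $\phi(x^*) - \phi(x') \leq h(x') - h(x^*)$. The whole proof reduces to lower bounding the left hand side in terms of $\norm{x^* - x'}^2$ and upper bounding the right hand side in terms of $\norm{x^* - x'}$.

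For the lower bound, I would apply $\lambda$-strong convexity of $\phi$ at the point $x'$ together with the first-order optimality condition for $x'$ over $\Dom$, i.e., $\langle \nabla \phi(x'), x^* - x' \rangle \geq 0$ (which is valid because $x^* \in \Dom$). This yields
\[
    \phi(x^*) - \phi(x')
    \geq \langle \nabla\phi(x'), x^* - x'\rangle + \tfrac{\lambda}{2}\norm{x^* - x'}^2
    \geq \tfrac{\lambda}{2}\norm{x^* - x'}^2.
\]
For the upper bound, convexity of $h$ gives $h(x^*) \geq h(x') + \langle \nabla h(x'), x^* - x'\rangle$, which combined with Cauchy–Schwarz yields $h(x') - h(x^*) \leq \norm{\nabla h(x')}\norm{x^* - x'}$.

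Chaining the three inequalities gives $\tfrac{\lambda}{2}\norm{x^* - x'}^2 \leq \norm{\nabla h(x')}\norm{x^* - x'}$, and dividing by $\norm{x^* - x'}$ (the case $x^* = x'$ being trivial) finishes the proof. There is no real obstacle here; the only subtlety is making sure one invokes the optimality condition at $x'$ (which is just the minimizer of the strongly convex $\phi$) rather than at $x^*$, so that the gradient of $h$ that appears in the final bound is evaluated at $x'$ and not at $x^*$.
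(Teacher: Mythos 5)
Your proof is correct and essentially matches the paper's: both arrive at $\tfrac{\lambda}{2}\norm{x^* - x'}^2 \leq \norm{\nabla h(x')}\norm{x^* - x'}$ by combining the optimality of both minimizers, strong convexity, a gradient inequality at $x'$, and Cauchy--Schwarz. The bookkeeping differs slightly — the paper works with $\psi = \phi + h$ throughout, applying strong convexity of $\psi$ at $x^*$ and convexity of $\psi$ at $x'$, then isolates $\nabla h$ via the first-order optimality of $x'$ for $\phi$; you split directly into a $\phi$-inequality (strong convexity at $x'$) and an $h$-inequality (convexity of $h$ at $x'$) — but the tools and the key insight (evaluate the gradient at $x'$, not $x^*$) are identical. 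Your observation that the argument silently requires $h$ to be convex is a good one: the paper's proof needs it too (for both the strong convexity and the gradient inequality on $\psi$), even though the lemma statement asks only for differentiability; as you note, both invocations in the paper do have $h$ convex, so nothing breaks downstream.
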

\begin{proof} 
    Denote $\psi(x) = \phi(x) + h(x)$. We have
    \[
        \frac{\lambda}{2}\norm{x' - x^*}^2 
            \leq \psi(x') - \psi(x^*)
            \leq \nabla \psi(x')\T (x' - x^*).
    \]
    In addition, $x'$ minimizes $\psi - h$ over $\Dom$, therefore
    \begin{align*}
        0 &\leq (\nabla \psi(x') - \nabla h(x') )\T (x^* - x').
        \\ 
        \iff
        \nabla \psi(x')\T (x' - x^*)
            &\leq \nabla h(x') \T (x' - x^*) 
            \leq \norm{\nabla h(x')} \norm{x' - x^*}.
    \end{align*}
    Combining the above inequalities we obtain
    \[
        \frac{\lambda}{2}\norm{x' - x^*}^2 
            \leq \norm{\nabla h(x')} \norm{x' - x^*},
    \]
    and the result follows after dividing both sides by $(\lambda/2) \norm{x' - x^*}$.
\end{proof}

\end{document}